\documentclass{article} % For LaTeX2e
\usepackage[table]{xcolor}

\definecolor{ourslightgray}{RGB}{235, 235, 235}

\usepackage{iclr2026_conference,times}
\usepackage{amsmath}
\usepackage{amssymb}
\usepackage{nicefrac}
\usepackage{lipsum} 
\usepackage[utf8]{inputenc} % allow utf-8 input
\usepackage[T1]{fontenc}    % use 8-bit T1 fonts
\usepackage{xr-hyper}
\usepackage{hyperref}       % hyperlinks
\usepackage[page,toc,titletoc,title]{appendix}
\usepackage[capitalize,noabbrev]{cleveref}
\usepackage{algorithm}
\usepackage{algorithmic}
\hypersetup{
	final,
	colorlinks,
	linkcolor=ourred,
	citecolor=ourblue,
	urlcolor=url,
}
\usepackage{url}            % simple URL typesetting
\usepackage{booktabs}       % professional-quality tables
\usepackage{siunitx}        % decimal-aligned columns in tables
\usepackage{amsfonts}       % blackboard math symbols
\usepackage{nicefrac}       % compact symbols for 1/2, etc.
\usepackage{microtype}      % microtypography
\usepackage{xcolor}         % colors
\usepackage{graphicx}
\usepackage[normalem]{ulem}
\usepackage{float}
\usepackage{bm}
\usepackage{cases}
\usepackage{blindtext}
\usepackage{textcomp}
\usepackage{mathtools}
\usepackage{xargs}
\usepackage{amsthm}
\usepackage{caption}
\usepackage{float}
\usepackage{wrapfig}
\usepackage{array}
\usepackage{multirow}
\usepackage{soul}
\usepackage{dsfont}
\usepackage{cancel}
\usepackage{subfiles} % allows per-chapter standalone builds
\newcommand{\fig}[1]{Fig.~\ref{#1}}    % somewhere

\newcommand{\tab}[1]{Table~\ref{#1}}
\newcommand{\Eqn}[1]{(\ref{#1})}
\renewcommand{\sec}[1]{Sec.~\ref{#1}} % Section 1
\newcommand{\supp}[1]{Suppl.~\ref{#1}}
\newcommand{\prop}[1]{Prop.~\ref{#1}}

\newcommand{\algo}[1]{Algo.~\ref{#1}}

\usepackage{xspace}
\makeatletter
\DeclareRobustCommand\onedot{\futurelet\@let@token\@onedot}
\def\@onedot{\ifx\@let@token.\else.\null\fi\xspace}

\makeatother

\newcommand{\bfI}{\mathbf{I}}
\usepackage{xcolor}
\definecolor{ourblue}{rgb}{0.368,0.507,0.71}
\definecolor{ourorange}{rgb}{0.881,0.611,0.142}
\definecolor{ourgreen}{rgb}{0.56,0.692,0.195}
\definecolor{ourred}{rgb}{0.923,0.386,0.209}
\definecolor{ourviolet}{rgb}{0.528,0.471,0.701}
\definecolor{ourbrown}{rgb}{0.772,0.432,0.102}
\definecolor{ourlightblue}{rgb}{0.364,0.619,0.782}
\definecolor{ourdarkgreen}{rgb}{0.572,0.586,0.}

\definecolor{ourcyan2}{rgb}{0.125,0.722,0.804}
\definecolor{ourred2}{rgb}{0.863,0.184,0.047}
\definecolor{ouryellow2}{cmyk}{0,0.16,1.0,0.07}
\definecolor{ourviolet2}{cmyk}{0.55,0.56,0,0.47}
\definecolor{ourorange2}{cmyk}{0,0.46,0.89,0.11}

\definecolor{discretecolor}{RGB}{11,83,150}
\definecolor{gaussiancolor}{RGB}{230,145,56}
\definecolor{argmaxcolor}{RGB}{154,0,0}
\definecolor{case1}{RGB}{240, 224, 189}
\definecolor{case2}{RGB}{163, 212, 176}
\definecolor{grayseq}{RGB}{120,120,120}
\definecolor{predcolor}{RGB}{190,220,252}
\definecolor{corcolor}{RGB}{243,224,224}

\definecolor{url}{HTML}{d95225}

\usepackage{colortbl}
\usepackage{tcolorbox} % To create highlighted boxes

\usepackage{pifont}
\newcommand{\filledstar}{\scalebox{0.6}{\ding{72}}}

\usepackage{mdframed}
\definecolor{ourslightgray}{RGB}{235, 235, 235}

\newcommand{\Comment}[1]{\hfill{\color{gray}\footnotesize$\triangleright$~#1}}
\usepackage{amsmath,amsfonts,bm}

\def\eqref#1{equation~\ref{#1}}
\def\1{\bm{1}}

\def\vone{{\bm{1}}}

\DeclareMathAlphabet{\mathsfit}{\encodingdefault}{\sfdefault}{m}{sl}
\SetMathAlphabet{\mathsfit}{bold}{\encodingdefault}{\sfdefault}{bx}{n}

\def\N{{\mathcal{N}}}

\def\gU{{\mathcal{U}}}

\newcommand{\R}{\mathbb{R}}

\newcommand{\softmax}{{\color{argmaxcolor}\mathrm{softmax}}}
\DeclareMathOperator*{\argmax}{arg\,max}

\def\x{{\mathbf x}}
\def\z{{\mathbf z}}

\def\d{{\mathbf d}_\phi}

\def\w{{\mathbf w}}

\def\p{{p_\theta}}

\def\s{{s(i)}}
\def\t{{t(i)}}

\def\cat{\text{Cat}}
\def\x{{\mathbf x}}
\def\y{{\mathbf y}}
\def\z{{\mathbf z}}

\def\d{{\text{d}}}

\def\m{{\mathbf m}}

\def\qst{{q_{s|t}}}

\def\pst{{p^\theta_{s|t}}}

\def\s{{s(i)}}
\def\t{{t(i)}}
\def\TransOp{{\color{argmaxcolor}\mathcal{T}} }

\def\nelbo{\text{NELBO}}

\def\at{{\color{discretecolor} \alpha_{t}}}

\def\as{{\color{discretecolor} \alpha_{s}}}
\def\denoise{\mathbf {x}_\theta}

\def\xapprox{\denoise(\z_t, t)}

\def\qd{{\color{discretecolor} q_t}}
\def\felbo{\textit{f}}

\def\qg{{\color{gaussiancolor} \tilde{q}_t}}
\def\atg{{\color{gaussiancolor} \tilde{\alpha}_{t}}}
\def\stg{{\color{gaussiancolor} \tilde{\sigma}_{t}}}
\def\stgsq{{\color{gaussiancolor} \tilde{\sigma}_t^{{\color{black} 2}}}}
\def\atgsq{{\color{gaussiancolor} \tilde{\alpha}_t^{{\color{black} 2}}}}
\def\atnew{\TransOp(\atg)}

\def\beps{\bm{\epsilon}}

\def\cargmax{{\color{argmaxcolor}\argmax}}
\def\stg{{\color{gaussiancolor}\tilde{\sigma}_t}}

\def\supL{{\color{grayseq}1:L}}
\def\supl{{\color{grayseq}\ell}}
\def\xL{\x^\supL}
\def\ztL{\z_t^{\supL}}
\def\ztl{\z_t^{\supl}}
\def\zsl{\z_s^{\supl}}
\def\wtl{\w_t^{\supl}}

\def\xl{\x^{\supl}}

\def\V{\mathbf{V}}

\def\prior{\boldsymbol{\pi}}

\newcommand{\mask}{[\textsc{mask}]~}
\newcommand{\barx}{\bar{\mathbf{x}}}

\theoremstyle{plain}
\newtheorem{theorem}{Theorem}[section]
\newtheorem{proposition}[theorem]{Proposition}

\theoremstyle{definition}

\theoremstyle{remark}

\usepackage{graphicx} 
\usepackage{booktabs}
\usepackage{url}
\usepackage{bbm} % for mathbbm
\usepackage{pifont} % for checkmark
\newcommand{\cmark}{\ding{51}}%
\newcommand{\xmark}{\ding{55}}%

\newcommand{\lightmidrule}{\arrayrulecolor{black!30}\midrule\arrayrulecolor{black}}

\usepackage{amsthm}
\usepackage{wrapfig}
\theoremstyle{plain}
\usepackage{subcaption}

\usepackage{float} % provides [H] specifier
\usepackage{placeins} % provides \FloatBarrier

\title{The Diffusion Duality, Chapter II: $\Psi$-Samplers}

\author{
Justin Deschenaux$^{1}$\thanks{Correspondence to \texttt{justin.deschenaux@epfl.ch} and \texttt{ssahoo@cs.cornell.edu}} \qquad \qquad
Caglar Gulcehre$^{1,2}$  \qquad \qquad
Subham Sekhar Sahoo$^{3}$\footnotemark[1]
\\~\\
$^{1}$EPFL, Lausanne, Switzerland \qquad \qquad $^{2}$Microsoft AI \qquad \qquad $^{3}$Cornell Tech, NY 
}

\newcommand{\rebuttal}[1]{{#1}}

\newcommand{\method}{$\text{Duo}^{++}$}
\newcommand{\onehotset}{\mathcal V}
\newcommand{\qdata}{q_\text{data}}
\newcommand{\psisamplers}{$\Psi$-samplers}

\iclrfinalcopy % Uncomment for camera-ready version, but NOT for submission.
\begin{document}

\maketitle
\begin{abstract}{
%Uniform-state discrete diffusion models excel at few-step generation and guidance due to their inherent ability to self-correct, making them more preferable than autoregressive or Masked diffusion models in these settings. However, their sampling efficiency has been limited by the reliance on standard posterior samplers, which plateau in quality as the number of steps increases. In this work, we introduce a novel family of ``Predictor-Corrector'' (PC) samplers for discrete diffusion models that generalize prior methods and apply to arbitrary noise processes. When paired with uniform-state diffusion, our samplers significantly outperform ancestral sampling on both language and image modeling, achieving lower generative perplexity at matched unigram entropy on OpenWebText and better FID/IS scores on CIFAR10. Crucially, unlike conventional samplers, our PC methods continue to improve generation quality with more sampling steps.
%Beyond sampling, we develop a fast and memory-efficient curriculum for \method{}'s (our method) Gaussian relaxation phase, which avoids materializing large Gaussian-diffused one-hot vectors. This reduces training time by 25\% compared to Duo while maintaining similar validation perplexity on OpenWebText and LM1B and strong downstream performance.
Uniform-state discrete diffusion models excel at few-step generation and guidance due to their ability to self-correct, making them preferred over autoregressive or Masked diffusion models in these settings. However, their sampling quality plateaus with ancestral samplers as the number of steps increases. We introduce a family of Predictor-Corrector (PC) samplers for discrete diffusion that generalize prior methods and apply to arbitrary noise processes. When paired with uniform-state diffusion, our samplers outperform ancestral sampling on both language and image modeling, achieving lower generative perplexity at matched unigram entropy on OpenWebText and better FID/IS scores on CIFAR10. Crucially, unlike conventional samplers, our PC methods continue to improve with more sampling steps.
\textbf{Taken together, these findings call into question the assumption that Masked diffusion is the inevitable future of diffusion-based language modeling.}
Beyond sampling, we develop a memory-efficient curriculum for the Gaussian relaxation training phase, reducing training time by 25\% and memory by 33\% compared to Duo while maintaining comparable perplexity on OpenWebText and LM1B and strong downstream performance. We release code, checkpoints, and a video-tutorial on:
\looseness=-1

\vspace{0.3ex}
\centerline{
\href{https://s-sahoo.github.io/duo-ch2/}{https://s-sahoo.com/duo-ch2}}
}
\end{abstract}
\begin{figure}[H]
    \centering
    \vspace{-20pt}
    \begin{subfigure}[b]{0.46\textwidth}
        \includegraphics[width=\textwidth]{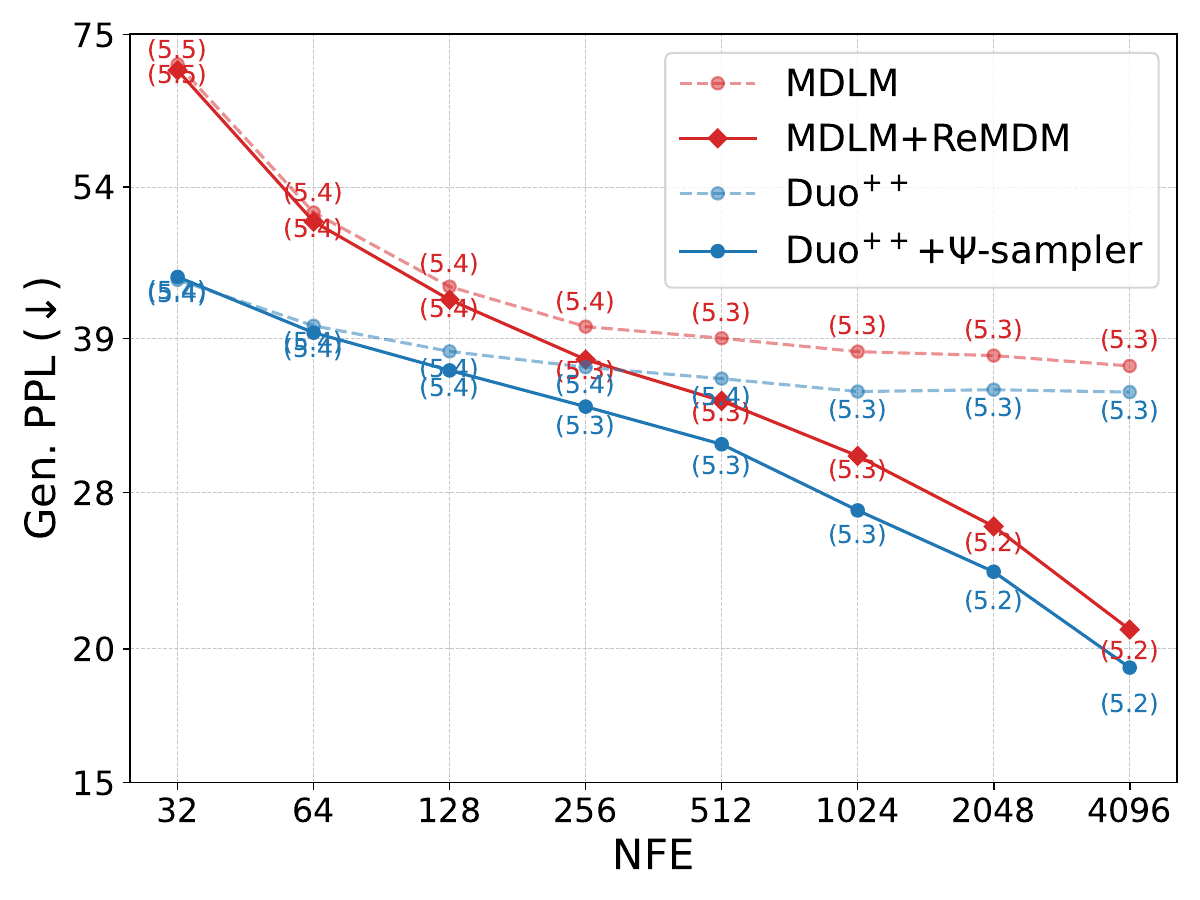}
    \end{subfigure}
    \begin{subfigure}[b]{0.46\textwidth}
        \includegraphics[width=\textwidth]{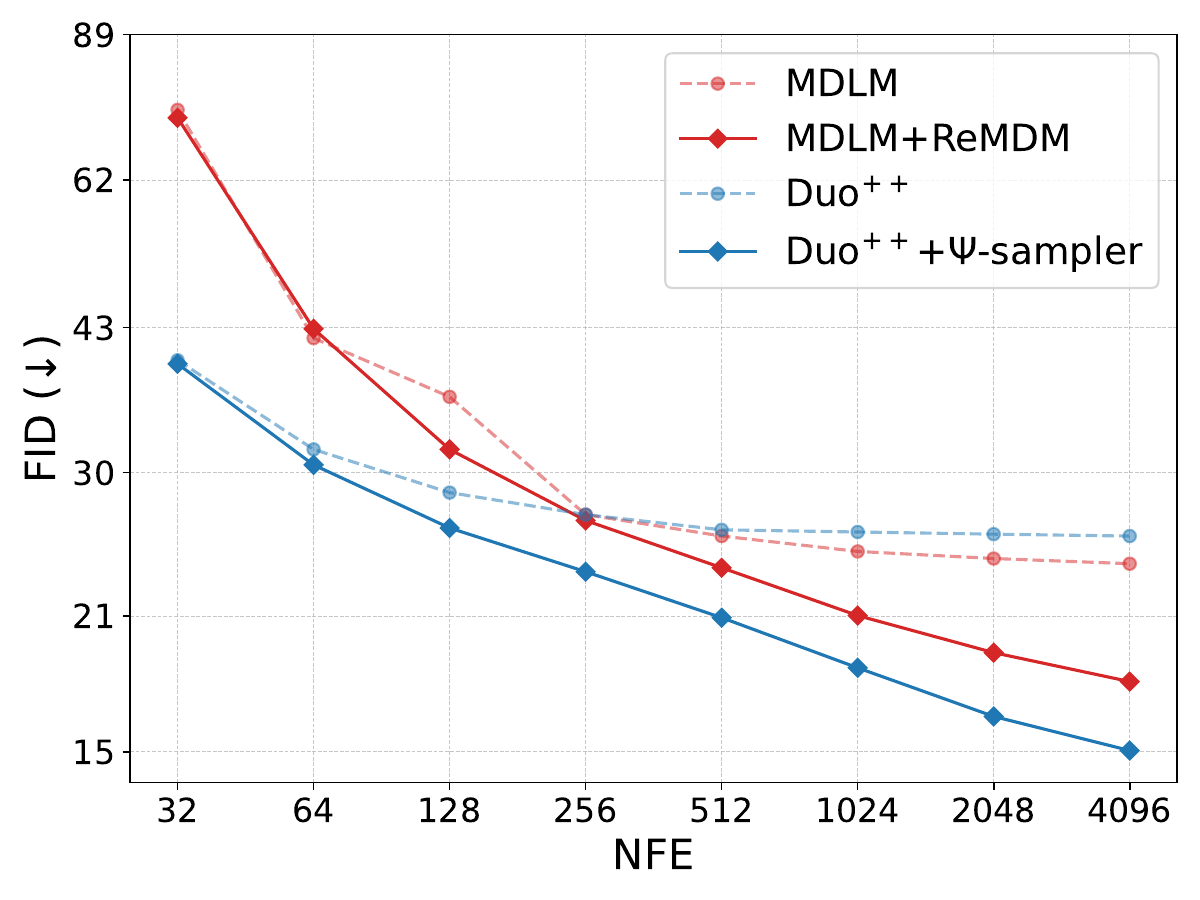}
    \end{subfigure}
    \caption{\textbf{Performance on Language Modeling and Image Modeling.} \psisamplers{} generalize ReMDM \citep{wang2025remaskingdiscretediffusionmodels} to arbitrary noise distributions. \textbf{(Left):} Generative perplexity (Gen.\ PPL; $\downarrow$) as a function of NFEs, with nucleus sampling $p=0.9$. \psisamplers{} consistently improve with more steps, unlike ancestral sampling which plateaus. Curves are annotated with the average unigram entropy per sequence as a proxy for diversity. \mbox{\textbf{(Right):} On CIFAR-10,} \psisamplers{} achieve better FID ($\downarrow$) than MDLM (with ReMDM).
    }
    \label{fig:fig1-abstract}
\end{figure}
\section{Introduction}
Diffusion models are powerful generative algorithms that have achieved remarkable success in modeling continuous data domains, including images~\citep{ho2020denoising,rombach2022highresolutionimagesynthesislatent}, audio~\citep{kong2021diffwave,liu2023audioldm,huang2023noise2musictextconditionedmusicgeneration}, and videos~\citep{ho2022video,esser2023structure,blattmann2023align,polyak2025moviegencastmedia}. Recent advances have extended diffusion models to categorical data, demonstrating their potential for language modeling~\citep{austin2023structureddenoisingdiffusionmodels,lou2024discretediffusionmodelingestimating,sahoo2024simpleeffectivemaskeddiffusion,shi2025simplifiedgeneralizedmaskeddiffusion,ou2025absorbingdiscretediffusionsecretly, sahoo2025the, sahoo2025esoteric}, graphs~\citep{liu2023generative}, molecules~\citep{lee2025genmol}, and audio~\citep{ku2025discrete}. Unlike autoregressive models that generate tokens sequentially from left to right, diffusion language models can decode tokens in parallel and in any order while leveraging bidirectional contextual information. This capability enables the design of language models that can be significantly faster than their autoregressive counterparts while maintaining strong downstream performance~\citep{song2025seed, labs2025mercury}.

Discrete diffusion models primarily employ one of two noise distributions: a uniform prior or a masked prior that concentrates all probability mass on a special \mask token. Unlike \emph{Masked Diffusion Models} (MDMs), which update each token exactly once, \emph{Uniform-State Diffusion Models} (USDMs) allow tokens to be revised multiple times during generation, enabling self-correction. This makes USDMs particularly effective for few-step \citep{sahoo2025the} and guided generation~\citep{schiff2025simpleguidancemechanismsdiscrete}. However, the generation quality of USDMs has not yet matched that of MDMs in high-sampling-step regimes, and USDMs' modeling capacity, as measured by likelihood, remains inferior to MDMs'. Although \citet{sahoo2025the} proposed a curriculum learning strategy~\citep{Bengio2009CurriculumL} that narrows the likelihood gap, this curriculum is computationally expensive.

To address MDMs' inability to remask tokens, ReMDM \citep{wang2025remaskingdiscretediffusionmodels} introduced ``Predictor-Corrector'' (PC) samplers that generalize and outperform earlier PC methods~\citep{campbell2022continuoustimeframeworkdiscrete, gat2024discreteflowmatching}. These samplers substantially improve the inference time scaling behavior of MDMs. However, PC methods for uniform-state diffusion remain underexplored. \citet{campbell2022continuoustimeframeworkdiscrete} proposed PC methods for samplers that take advantage of the rate change matrices of the continuous-time Markov chain (CTMC) formulation of discrete diffusion processes, but such samplers are known to perform worse than ancestral samplers \citep{lou2024discretediffusionmodelingestimating,schiff2025simpleguidancemechanismsdiscrete}. % Furthermore, while the curriculum learning strategy from~\citet{sahoo2025the} closes the likelihood gap between USDMs and MDM, each curriculum step is computationally more expensive than standard training, resulting in a slower overall training.

We propose \method{} to address these challenges, which expands the design space of USDMs using non-Markovian \textit{superposition posteriors} (or as we refer to them in this paper, $\Psi$-posteriors). These posteriors align with the intermediate marginals of discrete diffusion processes and give rise to $\Psi$-samplers with predictor-corrector capabilities that are crucial for improving sample quality. In addition, \method{} introduces an efficient curriculum learning strategy that advances the approach of~\citet{sahoo2025the} by accelerating training and reducing memory usage.

In summary, our contributions are threefold: (1) we propose a family of non-Markovian posteriors ($\Psi$-posteriors) for discrete diffusion with arbitrary noise priors that share the same marginals as the Markovian discrete diffusion process (\sec{sec:background-sampler-section}). (2) We demonstrate that the induced \psisamplers{} improve text and image generation quality and scale better than standard ancestral samplers in high NFE regimes, closing the performance gap with respect to MDMs coupled with remasking samplers in high NFE regimes for text generation (\sec{sec:experiments-psi-samplers}) and surpassing them on image generation tasks (\sec{sec:cifar10-experiments}). (3) We reformulate the curriculum learning strategy proposed in~\cite{sahoo2025the}, achieving a 
{2$\times$} speedup while reducing peak memory usage by 33\% and end-to-end training time by 25\%, while maintaining similar perplexity~(\fig{fig:fig1-abstract}, right, \tab{tab:zero-shot-results}) and downstream task accuracy~(\tab{tab:downstream-evaluation}).
\vspace{-3mm}
\section{Background}
\vspace{-3mm}
\paragraph{Notation}
% and ${\color{grayseq}[ {\color{black}\x^\supl}]^L_{\ell = 1}} \in \onehotset^L$ 
Let $\onehotset := \{\mathbf{v} \in \{0, 1\}^K: \sum_{i=1}^K \mathbf{v}_i = 1\}$ denote the set of one-hot encodings of discrete random variables over $K$ categories. Let $\x \in \onehotset^L$ denote a sequence of $L$ discrete variables in $\onehotset$ and $\x^\supl$ denote the entry $\ell^\text{th}$ in $\x$. \rebuttal{We use boldface to denote both individual vectors and sequences; the context will make clear whether a symbol refers to a vector or a sequence.} Let $\Delta$ denote the $K$ simplex. For $\mathbf{v} \in \Delta$, let $\cat(\cdot;\mathbf{v})$ denote a categorical distribution such that $\mathbb P(\mathbf{u}_i = 1) = \mathbf{v}_i$, for $\mathbf{u} \sim \cat(\cdot; \mathbf{v}), \mathbf{u} \in \onehotset$. Let $\langle \mathbf{a}, \mathbf{b} \rangle$ and $\mathbf{a} \odot \mathbf{b}$ denote the dot and Hadamard products between two vectors respectively. Let $\vone = \{1\}^K$ denote the all-ones vector. Let $\prior \in \Delta$ be a designated categorical distribution referred to as the prior.
\subsection{Discrete Diffusion Models}\label{sec:background-discrete-diffusion-models}
Consider the clean data sequence $\x$ of length $L$ drawn from the data distribution $\qdata$. Discrete diffusion models \citep{sohldickstein2015deepunsupervisedlearningusing, austin2023structureddenoisingdiffusionmodels} define a sequence of increasingly noisy distributions $(q_t)_{t \in [0, 1]}$, interpolating from $\qdata$ to a factorized prior distribution, which is a product of $L$ independent $\cat(.;\prior)$ distributions, using Markovian transitions defined independently across input dimensions \citep{campbell2022continuoustimeframeworkdiscrete, sahoo2024simpleeffectivemaskeddiffusion, shi2025simplifiedgeneralizedmaskeddiffusion, ou2025absorbingdiscretediffusionsecretly, schiff2025simpleguidancemechanismsdiscrete, sahoo2025the}. Let $\z_t \sim \prod_{\ell=1}^L  {\color{discretecolor} q_t}(.| \xl)$ denote the intermediate latents (sequence) at time step $t$. This work focuses on factorized, interpolating noise processes~\citep{sahoo2024simpleeffectivemaskeddiffusion}, whose conditional marginal distribution takes the form:
\begin{equation}\label{eqn:interpolating-forward}
    \z_t^{\supl} \sim {\color{discretecolor} q_t}(. | \x^\supl; \at) = \cat(.; \at \x^\supl + (1 - \at) \prior),
\end{equation}
where $\at \in [0, 1]$ is monotonically decreasing with $t$, and is known as the \textit{noise schedule}. \Eqn{eqn:interpolating-forward}  defines the \textit{forward process}, which progressively corrupts the data. 
The goal is to learn a \textit{generative process} $\p$, parameterized by a neural network with parameters $\theta$, that reverses this forward process to map from the noise prior back to $\qdata$. The model is typically trained by minimizing the ``Negative Evidence Lower Bound'' (NELBO). The choice of token prior $\prior$ gives rise to two popular variants: Masked Diffusion Models (MDMs) and Uniform-state Diffusion Models (USDMs), which we discuss in the following.
\subsubsection{Masked Diffusion Processes}\label{sec:background:mdms}
MDMs~\citep{sahoo2024simpleeffectivemaskeddiffusion, shi2025simplifiedgeneralizedmaskeddiffusion, ou2025absorbingdiscretediffusionsecretly} use a masked prior, where $\prior = \m \in \onehotset$ is the one-hot representation of a special \mask token \citep{devlin2019bertpretrainingdeepbidirectional}. During the forward process~\Eqn{eqn:interpolating-forward}, tokens either remain unchanged or transition to the masked state  $\m$, after which they stay masked. This behavior carries over to the reverse process. The posterior of the reverse process $q^{\text{MDM}}_{s|t}$ for $0 \leq s < t < 1$ can be derived using Bayes' Rule, and is given by:
\begin{align}\label{eqn:mdm-posterior}
    q^{\text{MDM}}_{s|t}(.|\ztl, \x^{\supl}) = 
    \begin{cases}
    \cat\left(.; \frac{\alpha_s - \alpha_t}{1 - \alpha_t} \x^{\supl} + \frac{1 - \alpha_s}{1 - \alpha_t}\ztl \right) & \text{if $\ztl = \m$,} \\
    \cat(.; \x^{\supl}) & \text{otherwise.}
    \end{cases}
\end{align}
The approximate reverse posterior is $\pst = \prod_{\ell} q^{\text{MDM}}_{s|t}(.|\ztl, \x^{\supl} = \x_\theta^{\supl}(\ztL, t))$ where $\denoise: \onehotset ^ L \times [0, 1]\to \Delta^{L}$ is the denoising model.
A key limitation is that once unmasked, tokens cannot be remasked~\Eqn{eqn:mdm-posterior}. This can create compounding errors during inference, as the denoising model $\denoise$ imperfectly models the clean data.
\paragraph{Predictor-Corrector Methods} \citet{wang2025remaskingdiscretediffusionmodels} propose posteriors, and associated samplers (ReMDM) that maintain the same marginals as~\Eqn{eqn:mdm-posterior} during the generation process, while allowing remasking and generalizing previous training-free predictor-corrector methods such as \citet{campbell2022continuoustimeframeworkdiscrete, gat2024discreteflowmatching}.

\subsubsection{Uniform-state Diffusion Processes}
\label{sec:background-uniform-state-diffusion}
Alternatively, discrete diffusion models can use a uniform prior $\prior = \vone / K$~\citep{schiff2025simpleguidancemechanismsdiscrete, sahoo2025the}. This choice allows tokens to change values multiple times throughout the generative process, in contrast to Masked diffusion. This property allows USDMs to excel in few-step generation~\citep{sahoo2025the} and guidance applications~\citep{schiff2025simpleguidancemechanismsdiscrete}.

USDMs admit the following posterior distribution $q^\text{USDM}_{s|t}$ (for brevity, we simply write ${\color{discretecolor} \qst}$ for $q^\text{USDM}_{s|t}$):
{\footnotesize
\begin{equation}
    \label{eqn:uniform_true_reverse_posterior}
    {\color{discretecolor} \qst}(. \mid \ztl, \xl) = \cat\left(.; \frac{K\at \ztl \odot \xl + ({\color{discretecolor}  \alpha_{t|s}} - \at)\ztl + (\alpha_s - \alpha_t)\xl + (1 - {\color{discretecolor} \alpha_{t|s}})(1- \as)\vone / K}{K \at\langle \ztl, \xl\rangle  + 1 - \at}\right).
\end{equation}
}
This posterior induces the following NELBO~\citep{sahoo2025the}:
\begin{align}\label{eqn:loss-usdm-singleline}
    \nelbo\left({\color{discretecolor} q}, p_\theta; \x \right) & = -\mathbb{E}_{t \sim \mathcal{U}[0,1],\, \qd(\ztl | \xl; \alpha_t)} 
    \;\sum_{\ell \in [L]}\felbo(\ztl, \denoise^{\supl} (\ztl, t), \at; \xl),
\end{align}
where 
{\footnotesize
\begin{align}
    \felbo(\ztl, \denoise^{\supl}(\ztl, t), & \at; \xl) = \frac{\at'}{K\at}\Bigg[\frac{K}{\barx^{\supl}_r} - \frac{K}{(\barx_\theta^{\supl})_r} \nonumber - \left(\zeta_t \mathds{1}_{\ztl = \xl} + \mathds{1}_{\ztl \neq \xl}\right)\sum_{j} \log\frac{(\barx^{\supl}_\theta)_r}{(\barx^{\supl}_\theta)_j} \\
    & - K\frac{\at}{1 - \at} \log \frac{(\barx^{\supl}_\theta)_r}{(\barx^{\supl}_\theta)_i}\mathds{1}_{\ztl \neq \xl} - \left((K - 1)\zeta_t \mathds{1}_{\ztl = \xl} - \frac{1}{\zeta_t}\mathds{1}_{\ztl \neq \xl}\right)\log \zeta_t
    \Bigg].
\end{align}
}
Here, $\barx^{\supl} = K\alpha_t\x^{\supl} + (1 - \alpha_t)\vone$, $\barx^{\supl}_\theta = K\alpha_t \denoise^{\supl}(\z_t, t) + (1 - \alpha_t)\vone$, $\alpha_t'$ denotes the time derivative of $\alpha_t$, $r = \arg\max_{j\in [K]} (\z^{\supl}_t)_j$ is the nonzero entry of $\z_t$, $\zeta_t = \frac{1 - \alpha_t}{K \alpha_t + 1 - \alpha_t}$, and $i$ denotes the index in $\x$ corresponding to 1, that is, $\x_i = 1$.
\paragraph{The Diffusion Duality}
\label{sec:background-diffusion-duality}
\citet{sahoo2025the} show that USDMs emerge from an underlying Gaussian diffusion process~\citep{sohldickstein2015deepunsupervisedlearningusing, ho2020denoisingdiffusionprobabilisticmodels, song2021scorebased, kingma2023variationaldiffusionmodels} on the one-hot representation $\xl \in \onehotset$. The Gaussian diffusion begins with $\xl$ and progressively adds Gaussian noise leading to a sequence of noisy latents $\wtl \in \mathbb{R}^K \sim \qg(.|\xl)$ for $t\in[0, 1]$ with the marginals: 
$$\qg(.| \xl; \atg) = \N (.; \atg\xl, (1 - \atgsq)\bfI_K),$$ where $(\atg)_{t \in [0, 1]}$ is a monotonically decreasing noise schedule. Let $\cargmax:\mathbb{R}^K \to \onehotset$ map a continuous vector $\mathbf{v} \in \mathbb{R}^K$ to the one-hot vector corresponding to the index of its largest entry in $\mathbf{v}$, that is, $\cargmax(\mathbf{v}) = \cargmax_{\z \in \onehotset} \z^\top \mathbf{v}$. When applied to a sequence of Gaussian latents $\w$, $\cargmax$ transforms them to the discrete latents $\z_t$ whose marginals take the form:
$\ztl \sim {\color{discretecolor} q_t}(.| \xl; \at:=\atnew)$, where the function ${\color{argmaxcolor}\mathcal{T}}:[0, 1] \to [0, 1]$ 
is the \textit{Diffusion Transformation Operator}:
\begin{align}\label{eqn:coefficient_relation}
    \atnew = \frac{K}{K - 1}\left[\int_{-\infty}^\infty
        \phi\left(z - \frac{\atg}{\sqrt{1 - \atgsq}}\right) \Phi^{K-1}(z) \text{d} z
         - \frac{1}{K} \right],
\end{align}
where $\phi(z) = \exp(-z^2/2) / \sqrt{2\pi}$  and $\Phi(z) = \int_{-\infty}^z \phi(t) \d t$ are the standard Normal PDF and CDF, respectively. 
More formally, this relationship is expressed as: 
\begin{align}\label{eqn:pushforward}
    \qd(\z_t^{\supl}|\x^{\supl}; \atnew) = [\cargmax]_{\filledstar} \qg(\w_t^{\supl}|\x^{\supl}; \atg)
\end{align}
where the ${\filledstar}$ operator denotes the \textit{pushforward} of the $K$-dimensional Gaussian density under the $\cargmax$ map, yielding a categorical distribution with $K$ classes. \rebuttal{Note that while the marginal distribution $\qd(\z_t|\x; \atnew)$ matches the discrete-space marginal in \Eqn{eqn:interpolating-forward}, \textbf{this does not imply that the full trajectory $\{\z_t := \cargmax(\w_t)\}_{t \in [0, 1]}$ follows a (Markovian) discrete diffusion process} \citep{sahoo2025the}. An interesting outcome of~\Eqn{eqn:pushforward} is that the discrete NELBO~\Eqn{eqn:loss-usdm-singleline} can be written in terms of Gaussian latents in the following manner, where the second $\cargmax$ is applied
to each token independently:}
\begin{align}\label{eqn:nelbo-with-gaussian-latents}
     & \nelbo\left({\color{discretecolor} q}, p_\theta; \x \right) \nonumber \\
     & = \mathbb{E}_{\x, t\sim\gU[0, 1], \qg} \sum_{\ell \in [L]} \felbo \Big(\z^{{\color{grayseq} \ell}}_t:=\cargmax(\w_t^{\color{grayseq} \ell}),  \denoise^{\color{grayseq} \ell}(\cargmax({\w_t}), t),   \at:=\atnew; \x^\supl\Big).
\end{align}
\begin{figure}[t]
    \centering
    \includegraphics[width=1\linewidth]{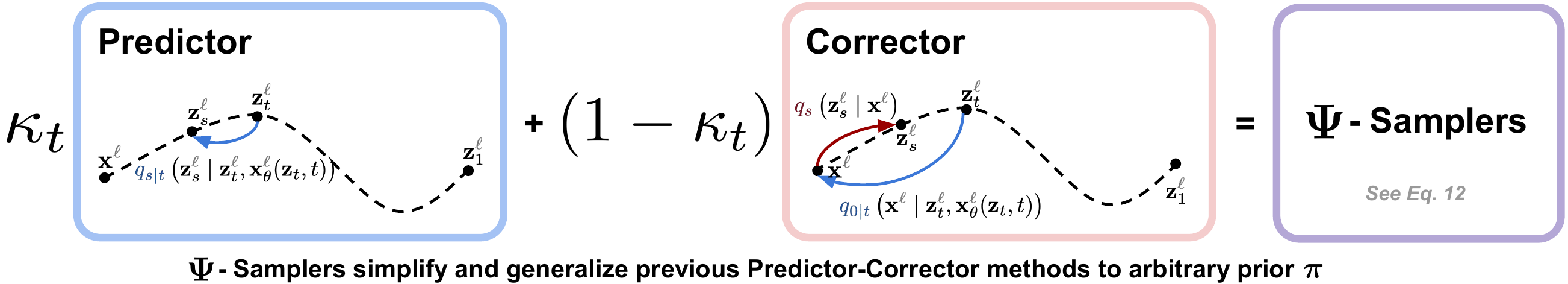}
    \caption{{\footnotesize \psisamplers{} combine predictor and corrector steps. The \colorbox{predcolor}{predictor} transitions from $\z_t$ to $\z_s$ via $\qst$, but fails to remask tokens in MDMs. The \colorbox{corcolor}{corrector} steps inject noise via $q_s$, to revise earlier predictions. For $\kappa_t 
    < 1$, noise injection enables error correction while preserving the forward 
    process marginals. Our framework extends prior PC methods ~\citep{campbell2022continuoustimeframeworkdiscrete, gat2024discreteflowmatching, wang2025remaskingdiscretediffusionmodels} to arbitrary priors $\prior$.}}
    \label{fig:psi-samplers-visuals}
\end{figure}

\paragraph{Curriculum Learning}
\label{sec:background-curriculum-learning}
Curriculum learning \citep{Bengio2009CurriculumL} trains models by gradually increasing task difficulty. Building on this idea, \citet{sahoo2025the} accelerate early training by using a biased but low-variance NELBO estimator for \Eqn{eqn:nelbo-with-gaussian-latents}. Concretely, during the first 50\% of training steps, the hard $\cargmax$ used to convert Gaussian latents into discrete tokens in the transformer's input is replaced by a low-temperature softmax relaxation. This relaxation interfaces naturally with the transformer input layer: if the latent at position $\ell$ is a probability vector $\y^{{\color{grayseq}\ell}} \in \Delta^K$, then the token representation is computed as a matrix product $\V^\top \y^{{\color{grayseq}\ell}}$, where $\V \in \mathbb{R}^{K \times m}$ is the vocabulary embedding matrix. For one-hot $\y^{{\color{grayseq}\ell}}$ (as produced by $\cargmax$), this reduces to a standard embedding lookup; for softmax-relaxed latents, it becomes a linear combination of vocabulary embeddings. As a result, the model is no longer asked to denoise from a fully corrupted discrete token embedding, but instead receives an embedding that is a superposition of clean and noisy token embedding. This ``partially clean'' input provides a direct signal about the underlying token, making denoising easier than relying solely on the surrounding context. \fig{fig:curriculum-visuals} (top) illustrates this curriculum. More formally, during the curriculum phase \citet{sahoo2025the} optimize the following loss, where the $\softmax$ is applied independently at each position:
% Curriculum learning \citep{Bengio2009CurriculumL} progressively exposes models to more complex tasks. \citet{sahoo2025the} propose to optimize a biased but low-variance NELBO estimator for~\Eqn{eqn:nelbo-with-gaussian-latents} early in training, enabling faster convergence. For the first 50\% of the training steps, the $\cargmax$ over the Gaussian latents being fed into the transformer is relaxed to a low-temperature softmax. Recall that in the first layer of a transformer, the discrete tokens are mapped to the corresponding vocabulary embeddings via look-up. Now, when the transformer is being fed softmax over Gaussian latents, the discrete token lookups are simply replaced with a linear combinations of embeddings. This makes the task of denoising easier: the resulting embeddings are superpositions of clean and noisy tokens, which provides a partially clean signal for reconstructing the clean token which otherwise would've seen a noisy token and would've relied on the remaining context to denoise that particular token. \fig{fig:curriculum-visuals} (top) illustrates the original curriculum. More formally, \citet{sahoo2025the} optimize the following loss during the curriculum phase, where the $\softmax$ is applied to each token independently:
%
{\footnotesize
\begin{align}\label{eqn:curriculum-train-loss}
     \mathcal{L}^\text{train}= \mathbb{E}_{\x, t\sim\gU[\beta, \gamma], \qg} \sum_{\ell \in [L]} \felbo \Big(\z^{{\color{grayseq} \ell}}_t:=\cargmax(\w^{{\color{grayseq} \ell}}_t),  \denoise^{{\color{grayseq} \ell}}(\softmax({\w_t /}{\tau}), t),   \at:=\atnew; \x^\supl\Big).
\end{align}
}
Notice that $\mathcal{L}^\text{train}$ in~\Eqn{eqn:curriculum-train-loss} reduces to the NELBO~\Eqn{eqn:nelbo-with-gaussian-latents} in the limit $\lim_{\tau \to 0}$, for $\beta = 0$ and $\gamma = 1$, since $\lim_{\tau \to 0} \softmax(\mathbf v / \tau) = \cargmax(\mathbf v)$, as shown by \citet{jang2017categoricalreparameterizationgumbelsoftmax, maddison2017concretedistributioncontinuousrelaxation}. 
% Formally, for a sequence of latents $\y \in \Delta^L$ (which can be one-hot), inside the neural network, the input token representation at position $\ell$ is computed by matrix multiplication: $\V^\top {\y^{\color{grayseq} \ell}}$, where $\V \in \mathbb{R}^{K \times m}$ denotes the vocabulary embedding matrix and $m$ the embedding dimension. This operation reduces to a standard embedding lookups for one-hot inputs obtained with $\cargmax$, and to a linear combinations with the $\softmax$ relaxation. 
However, explicitly materializing the high-dimensional latents $\w_t$ is memory-intensive, an issue we address in \sec{sec:scalable-curriculum}.
\subsection{Diffusion Guidance}
For continuous data, diffusion models have achieved state-of-the-art controllable generation through both classifier-based guidance~\citep{sohldickstein2015deepunsupervisedlearningusing, dhariwal2021diffusion} and Classifier-Free Guidance (CFG; \citet{nichol2021improveddenoisingdiffusionprobabilistic, ho2022classifierfreediffusionguidance}). These approaches have since been extended to discrete data~\citep{gruver2023protein}. Let $y \in \{1, \dots, C\}$ denote one of $C$ possible classes. For CFG, the sampling posterior $\p^{(\gamma)}$, which modulates the strength of the guidance term via the temperature parameter $\gamma$, is defined as~\citep{nisonoff2024unlocking, schiff2025simpleguidancemechanismsdiscrete}:
\begin{equation}\label{eq:cfg_single_token}
    \log \p^{(\gamma)}(\z_s^{{\color{grayseq} \ell}} \mid y, \z_t) = \gamma \log \p(\z_s^{{\color{grayseq} \ell}} \mid y, \z_t) + (1-\gamma)\log \p(\z_s^{{\color{grayseq} \ell}} \mid \emptyset, \z_t); \; \forall \ell \in [L],
\end{equation}
where $\emptyset$ denotes no class conditioning, and $\p$ is the generative posterior (\sec{sec:background-discrete-diffusion-models}).

\section{The \texorpdfstring{$\Psi$}{Psi}-Posteriors}
\label{sec:background-sampler-section}
Multiple joint distributions can give rise to the same marginals as the discrete diffusion process defined in~\Eqn{eqn:interpolating-forward}.  In this work, we introduce a family of posteriors, denoted ${\Psi}$, that share the same marginals as in~\Eqn{eqn:interpolating-forward}; see~\supp{sec:psi-posteriors-have-correct-marginals} for details. \textbf{These alternative generative processes are non-Markovian and apply both to the Masked diffusion processes and to the Uniform-state diffusion processes}.
Specifically, we define the posteriors for the generative process as:
\begin{align}
    \label{eq:psi-posterior}
    \Psi_{s | t}(. | \xl, \ztl) = \kappa_t {\qst}(. | \ztl, \xl) + (1 - \kappa_t) {\color{discretecolor} q_s}(. | \xl);\;\forall \ell \in [L]
\end{align}
where $\kappa_t \in [0, 1]$ and $\Psi_1(.|\xl) = \cat(.| {\mathbf \prior})$, with $\prior = \m$ for MDMs and $\prior = \vone / K$ for USDMs. \Eqn{eq:psi-posterior} is thus a linear combination of the forward process~\Eqn{eqn:interpolating-forward} and the reverse posteriors~(\ref{eqn:mdm-posterior}, \ref{eqn:uniform_true_reverse_posterior}) of standard discrete diffusion models. We therefore refer to these as \textit{superposition posteriors}, or simply $\Psi$-posteriors. 
\paragraph{$\Psi$-Forward Processes}
Consider the interpolating diffusion process in~\Eqn{eqn:interpolating-forward} discretized into $T$ steps. Let $\z_\t$ denote the latent variables at times $\t = i / T$ for $0 \leq i \leq T$. The distribution of a trajectory $\z_{0:1}$ factorizes independently over tokens as: $\Psi(\z_{0:1} | \x) = \prod_\ell \Psi(\z^{{\supl}}_{0:1} | \xl)$ where $\Psi(\z^{{\supl}}_{0:1} | \xl) = \Psi_1(\z^{{\supl}}_1 | \xl) \prod_{i=1}^T \Psi_{s|t}(\z^{{\supl}}_\s | \z^{{\supl}}_\t , \x^{{\supl}})$.  In what follows, we use $s, t$ as shorthand for $\s, \t$, respectively.
The forward process can be derived from Bayes' rule: $\Psi(\ztl | \zsl, \xl) = \Psi(\zsl | \ztl, \xl)\Psi(\ztl | \xl)/\Psi(\zsl | \xl)$. Unlike the Markovian interpolating process in~\Eqn{eqn:interpolating-forward}, this forward process is generally not Markovian, since each $\ztl$ may depend on both $\zsl$ and $\xl$.
\paragraph{$\Psi$-Reverse Processes}
In~\supp{supp:subsec:approx-reverse-psi-posteriors}, we show that the approximate reverse posterior takes the form:
\begin{align}\label{eq:psi-approximate-posterior}
    [\Psi^\theta_{s | t}(. | \z_t)]^{\supl} = \kappa_t {\color{discretecolor} \qst}(. | \ztl,  \denoise^{\supl}(\z_t, t)) + (1 - \kappa_t) 
    \left[
    \alpha_s {\color{discretecolor} q_{0|t}}(.| \ztl, \denoise^{\supl}(\z_t, t)) + (1 - \alpha_s) \prior \right] .
\end{align}
where $\denoise$ denotes the denoising model. We dub~\Eqn{eq:psi-approximate-posterior} as $\Psi$-sampler. For $(\kappa_t = 1)_{t \in [0, 1]}$, we recover the standard ancestral sampler defined in~\Eqn{eqn:mdm-posterior} for MDMs and~\Eqn{eqn:uniform_true_reverse_posterior} for USDMs. Notice that for $\kappa_t < 1$, $\Psi_{s | t}$ corresponds to a noisier version of the ancestral sampler marginal $\qst$. This is analogous to Predictor-Corrector methods in Gaussian diffusion~\citep{song2021scorebased}, where the corrector introduces additional Gaussian noise. In our case, $q_t$ plays the role of the corrector, while $\qst$ acts as the predictor. 
The $\Psi$-posteriors also admit a principled NELBO formulation (see~\supp{supp:psi-nelbo}), though this is not directly relevant for sampling.
\paragraph{Corollary} For $\prior = \m$, different choices of $\{\kappa_t\}_{t \in [0, 1]}$ recover previous Predictor-Corrector formulations in the literature~\citep{campbell2022continuoustimeframeworkdiscrete, gat2024discreteflowmatching, wang2025remaskingdiscretediffusionmodels} (see~\supp{supp:subsec:recovering-remdm} for the proof). The $\Psi$ framework thus subsumes these samplers as special cases, extending these predictor-corrector methods for discrete diffusion with any prior $\prior$. 
\begin{tcolorbox}[
  colback=blue!12,
  colframe=blue!70!black,
  left=3.5pt,
  right=3.5pt,
  top=3pt,
  bottom=2pt
]
\textbf{Takeaway 1:} \textbf{\psisamplers{} generalize prior Predictor-Corrector methods to arbitrary noise priors}, subsuming ReMDM~\citep{wang2025remaskingdiscretediffusionmodels} and prior work~\citep{campbell2022continuoustimeframeworkdiscrete, gat2024discreteflowmatching} as special cases.
\end{tcolorbox}

\paragraph{Intuitive Explanation} \rebuttal{
In practice, the denoiser $\denoise$ imperfectly models the clean data $\x$. The key to the effectiveness of $\Psi$-sampler is the offset term $(1 - \kappa_t)(1 - \alpha_s)\prior$ in~\Eqn{eq:psi-approximate-posterior}, which enables error correction during generation.
For MDMs ($\prior = \m$), this offset allows previously denoised tokens to return to the masked state, unlike the ancestral sampler, which prevents remasking (see~\sec{sec:background:mdms}). Incorrect tokens can thus be replaced with better ones.
For USDMs ($\prior = \vone / K$), the offset ensures every token has non-zero sampling probability. Even if the denoiser assigns near-zero probability to the correct token, the $\Psi$-sampler gives it a chance to appear, whereas ancestral sampling would not.
While this offset may occasionally introduce incorrect tokens, the marginals of the \psisamplers{} \Eqn{eq:psi-posterior} match those of the Markovian forward process~\Eqn{eqn:interpolating-forward}, hence we converge to the correct distribution given sufficient samples.
}
\begin{figure}[t]
    \centering
    \includegraphics[width=1\linewidth]{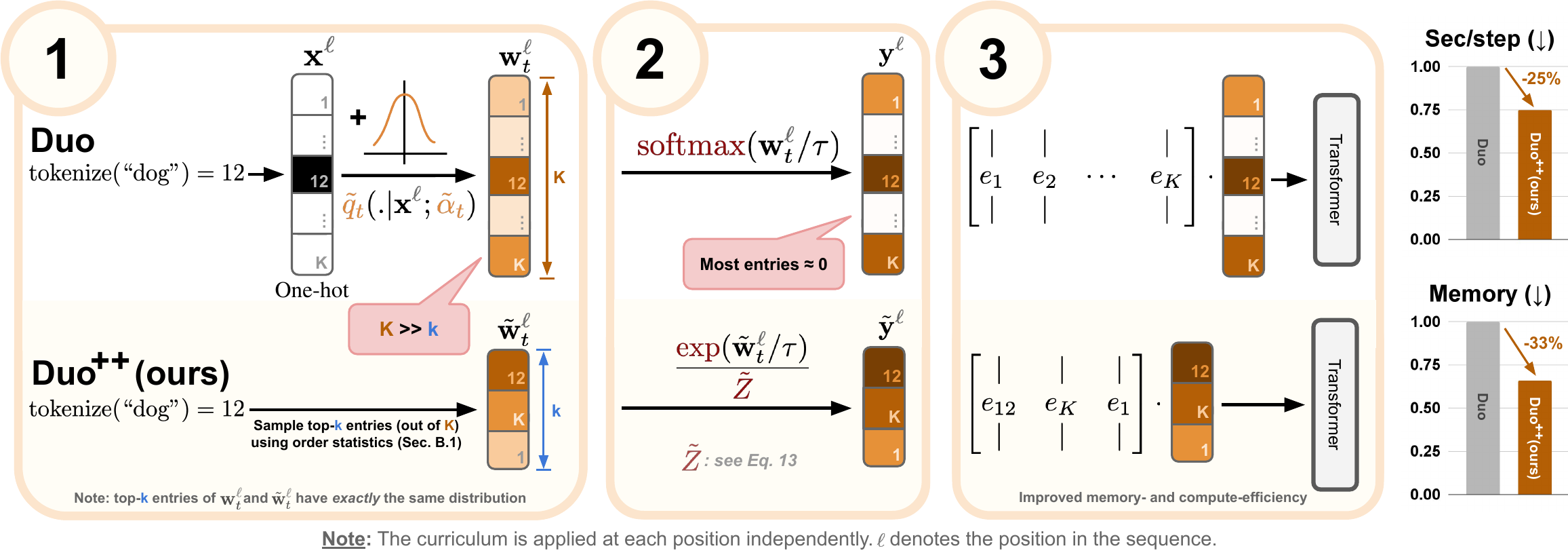}
    \caption{\rebuttal{\textbf{Efficient Curriculum for USDMs.} Duo~\citep{sahoo2025the} replaces discrete lookups with linear combinations of all $K$ embeddings: (1)~Gaussian diffusion on one-hot representations, (2)~Low-temperature {$\softmax$}, (3)~weighted sum. \textbf{\method{}} exploits the sparsity of the tempered softmax (most weights are effectively zero), and simulate the k largest entries (out of K) using ordered statistics. The approximate normalizer ${\color{argmaxcolor} \tilde{Z}}$ admits a closed form expression \Eqn{eq:curriculum-normalization-main-body}. \textbf{\method{} has a 33\% lower memory and 25\% faster training than Duo.}}}
    \label{fig:curriculum-visuals}
\end{figure}
\section{Scalable Curriculum for Faster Training}
\label{sec:scalable-curriculum}

Recall from \sec{sec:background-curriculum-learning} that the curriculum of \citet{sahoo2025the} accelerates training by replacing discrete tokens as inputs to the transformer with softmaxed Gaussian latents. Naively, however, this requires materializing a $K$-dimensional weight vector for every token at every training step, which is infeasible for modern LLM vocabularies with $K > 100{,}000$~\citep{touvron2023llama2openfoundation, openai2024gptoss}.
Our key observation is that \citet{sahoo2025the} use a very low temperature $\tau = 10^{-3}$ in \Eqn{eqn:nelbo-with-gaussian-latents}. At such temperatures, the softmax concentrates almost all its mass on only a few entries, making most of the $K$ weights negligible. We exploit this induced sparsity by approximating the full linear combination using only $k \ll K$ embeddings (\fig{fig:curriculum-visuals}, bottom). We illustrate the process hereby:
% This approximation introduces two challenges: (i) generating the top-$k$ entries of the Gaussian vector $\wtl$ in \Eqn{eqn:nelbo-with-gaussian-latents} without materializing the full $K$-dimensional vector; and (ii) computing the corresponding values of $\softmax(\wtl)$ without access to all entries, since the softmax normalizer depends on the full vector and must be estimated from the $k$ simulated values alone. 

\paragraph{Step 1: Generating top-$k$ entries in $\wtl$ w/o materializing it} 
Let $o\in[K]$ be the nonzero coordinate of the one-hot vector $\xl$, i.e. $(\xl)_o = 1$. 
Recall that \mbox{$\wtl = \atg \xl + \stg \beps$}, with \mbox{$\stg = \sqrt{1 - \atgsq}$} and \mbox{$\beps \sim \N(\mathbf{0}, \mathbf{I}_K)$}. Therefore, $(\wtl)_o \sim \N(\atg, \stgsq)$  and $(\wtl)_{i \neq o} \sim \N(0, \stgsq)$.
All coordinates in $[K]\setminus{o}$ are exchangeable (i.i.d.\ with mean $0$), while the coordinate $o$ is the only one with a shifted mean. As a result, the top-$k$ set falls into one of two cases: \colorbox{case1}{{Case 1:} $o$ is not among the top-$k$}, so all $k$ winners lie in $[K]\setminus{o}$. \colorbox{case2}{{Case 2:} $o$ is among the top-$k$}, so the winners are ${o}$ plus $k-1$ indices from $[K]\setminus{o}$. Next, we describe how to sample the top-$k$ values (and later their indices) without ever forming the full $K$-dimensional vector.

Let $m = K{-}1$ and consider $m$ i.i.d.\ samples $\tilde{w}_1, \dots, \tilde{w}_m \sim \N(0, \stgsq)$. Rather than drawing all $m$ values, we exploit the fact that order statistics of i.i.d.\ \emph{uniform} random variables can be sampled recursively: the maximum of $m$ i.i.d.\ $\mathcal{U}[0,1]$ variables has CDF $u^m$, so it can be sampled directly. Conditioned on the maximum, the remaining values are also i.i.d.\ uniforms, so the next-largest can be sampled the same way, and so on. Applying the inverse normal CDF $\Phi^{-1}(\cdot)\cdot\stg$ converts the top-$k$ uniform order statistics into the top-$k$ Gaussian values (see \supp{suppl:gen-top-k-gaussians-out-of-K} for details). We denote the result
\colorbox{case1}{$\mathcal{K} = (\mathcal{K}_1 \geq \dots \geq \mathcal{K}_k)$}, where $\mathcal{K}_j$ is the $j$-th largest among the $K{-}1$ zero-mean coordinates.

We draw independently the ``special'' entry $\tilde{w} \sim \N(\atg,\stgsq)$, which matches the distribution of $(\wtl)_o$.
Now compare $\tilde{w}$ to the current $k$-th largest value among the zero-mean coordinates:
\colorbox{case1}{If $\mathcal{K}_k > \tilde{w}$,} then $o$ cannot enter the top-$k$. \colorbox{case1}{We are in Case 1, and $\mathcal{K}$ already contains the correct top-$k$ values.} \colorbox{case2}{If $\tilde{w} > \mathcal{K}_k$, then $o$ must be in the top-$k$. We are in Case 2.} Let
$r = \big|\{j\in[k] : \mathcal{K}_j > \tilde{w}\}\big|$ be the number of values in $\mathcal{K}$ that are larger than $\tilde{w}$. We insert $\tilde{w}$ at rank $r+1$ and drop the previous smallest value:
\colorbox{case2}{$\mathcal{K} \leftarrow \mathcal{K}_{1:r} \| \tilde{w} \| \mathcal{K}_{r+1:k-1}$}, where $\|$ denotes concatenation.

For clarity, let $\binom{S}{m}$ denote the set of all possible tuples with $m$ distinct elements in the set $S$. We generate the index tuple $\mathcal{I}$ corresponding to the top-$k$ values $\mathcal{K}$ as follows:
\colorbox{case1}{Case 1:} all top-$k$ indices lie in $[K]\setminus{o}$; hence, 
\colorbox{case1}{$\mathcal{I} \sim \binom{[K]\setminus\{o\}}{k}.$}
\colorbox{case2}{Case 2:} the index $o$ appears at position $r+1$ in $\mathcal{K}$.
Sample the indices for the remaining $k-1$ values from $[K]\setminus{o}$, split into those above and below $o$:
\colorbox{case2}{$\mathcal{I} = L \| (o) \| R$},
where
\colorbox{case2}{$L \sim {\binom{[K]\setminus\{o\}}{r}}, R \sim \binom{[K]\setminus \{o\} \setminus L }{ k-r-1}$.}
{This produces both the top-$k$ values and their matching indices while sampling only $O(k)$ random variables, without constructing the full $K$-dimensional vector.}

\paragraph{Step 2: Approximating the softmax} 

Given the top-$k$ values and indices $(\mathcal{K},\mathcal{I})$ from Step~1, we approximate the
softmax-weighted embedding vector by retaining only the $k$ selected rows of the  $\texttt{embeddings}\in\R^{K\times d}$ ($d$ denotes the embedding size):
{\footnotesize
\begin{align}
\softmax(\wtl)^\top \texttt{embeddings} \approx \sum_{i = 1}^k \frac{\exp(\mathcal{K}_i / \tau)}{\tilde{Z}} \texttt{embeddings}[{\mathcal{I}_i}],
\end{align}
}
where $\texttt{embeddings}[j]$ denotes the $j$-th row.
The normalizer $\tilde{Z}$ includes both sampled (top-$k$) and unsampled terms. While each unsampled term is small, their sum may be non-negligible, hence we approximate it as (\supp{sec:estimation-normalization-constant-of-softmax}):
{\footnotesize
\begin{align} \label{eq:curriculum-normalization-main-body}
    \tilde{Z} \approx & \underbrace{\sum_{i=1}^k \exp \left(\frac{\mathcal{K}_i}{\tau}\right)}_{\text{top-$k$ terms}} + \underbrace{\delta \exp\left(\frac{\tilde w}{\tau}\right)}_{\text{clean token}} + \underbrace{(K - k - \delta) \exp \left (\frac{\stgsq}{2\tau^2} - \log \Phi\!\left(\frac{\mathcal{K}_k}{\stg}\right) + \log \Phi\!\left(\frac{\mathcal{K}_k - \stgsq/\tau}{\stg}\right) \right)}_{\text{unsampled zero-mean terms}} ,
\end{align}}
where $\delta = 1$ if $\tilde w \in \mathcal{K}$ ({\colorbox{case2}{case 2}}) and 0 otherwise.
% where $\mu = \mathbb{E}[\exp(X/\tau) \mid X < \mathcal{K}_k]$ for $X \sim \N(0, \stgsq)$ is the expected contribution of each unsampled entry. 
We provide the full derivation in \supp{sec:estimation-normalization-constant-of-softmax} and pseudocode in \algo{alg:scalable-curriculum-weights-computation}.

Lastly, the curriculum objective in~\Eqn{eqn:curriculum-train-loss} requires evaluating the diffusion transformation operator $\TransOp(\cdot)$. Directly computing $\TransOp$ via~\Eqn{eqn:coefficient_relation} is prohibitively expensive during training; \citet{sahoo2025the} therefore precompute and cache many $(\at,\atnew)$ pairs, which is cumbersome. Instead, we compute $\TransOp(\cdot)$ on the fly using its Taylor expansion; see~\supp{sec:series-representation-of-diffusion-operator}.

{\footnotesize
\begin{table}[t]
    \centering
    \caption{\footnotesize \textbf{Accuracy on multiple-choice question answering datasets.} Abbreviations: Arc-e (ARC-Easy), Arc-c (ARC-Challenge), HSwag (HellaSwag), WinoG (Winogrande), PIQA (Physical Intelligence Question Answering), OQA (OpenBookQA). $^\dagger$Results from \citet{deschenaux2025partitiongenerativemodelingmasked}. \method{} ($k=2$) achieves slightly higher accuracy than Duo on 4 out of 6 tasks. Overall, \textbf{\method{} matches Duo's performance while using 25\% fewer flops}. The highest accuracy among USDMs is \textbf{bolded}. The absolute best per column is \underline{underlined}.}
    \small
    \begin{tabular}{lccccccc}
        \toprule
        & Arc-e & Arc-c & HSwag & WinoG & PIQA & MathQA & OQA \\
        \midrule
        %\textit{Autoregressive} \\
         AR Transformer & \underline{38.55} & 23.04 & 30.55 & \underline{53.19} & \underline{63.60} & \underline{22.41} & \underline{32.4} \\
        %\textit{Diffusion (138M)} \\
         MDLM$^\dagger$ & {34.26} & 24.66 & \underline{31.54} & {51.93} & {57.89} & {20.70} & 28.60 \\
        \midrule
        \rowcolor{ourslightgray}
         Duo & 28.11 & 25.43 & 26.46 & 47.20 & 51.14 & 20.00 & 23.40 \\
        \rowcolor{ourslightgray}
        \method{} ($k=2$) & 27.32 & \textbf{\underline{26.11}} & 26.26 & 49.64 & \textbf{52.12} & 20.40 & \textbf{27.80} \\
        \rowcolor{ourslightgray}
        \method{} ($k=3$) & \textbf{28.28} & 25.00 & 25.89 & 47.36 & 50.65 & \textbf{21.01} & 23.00 \\
        \rowcolor{ourslightgray}
        \method{} ($k=5$) & 28.03 & {25.77} & \textbf{26.90} & \textbf{50.12} & 51.25 & 20.20 & 25.40 \\
        \bottomrule
    \end{tabular}
    \label{tab:downstream-evaluation}
\end{table}
}
\vspace{-4mm}
\section{Experiments}
\vspace{-1pt}
% We evaluate \method{} with \psisamplers{} on language modeling (\sec{sec:lm-experiments}) and image generation (\sec{sec:cifar10-experiments}), showing that $\Psi$-samplers substantially improve text and image quality, making USDMs more performant than MDMs. 
% In \sec{sec:experiments-fast-curriculum}, we further demonstrate that, thanks to its efficient curriculum strategy (\sec{sec:scalable-curriculum}), \method{} achieves performance comparable to Duo~\citep{sahoo2025the}--the current state-of-the-art USDM--while reducing memory usage by 33\% and training 25\% faster.
We evaluate \method{} with \psisamplers{} on language modeling (\sec{sec:lm-experiments}) and image generation (\sec{sec:cifar10-experiments}), showing that $\Psi$-samplers markedly improve text and image quality, making USDMs outperform MDMs in sample quality. In \sec{sec:experiments-fast-curriculum}, we show that \method{} matches Duo~\citep{sahoo2025the} while using 33\% less memory and training 25\% faster, enabled by our efficient curriculum strategy (\sec{sec:scalable-curriculum}).

\vspace{-2pt}
\subsection{\texorpdfstring{$\Psi$}{Psi}-Samplers}
\label{sec:experiments-psi-samplers}
%
% We evaluate the \psisamplers{} on language and image modeling tasks to demonstrate their applicability across modalities.
%
%
\subsubsection{Language Modeling}
\label{sec:lm-experiments}
\begin{tcolorbox}[
  colback=blue!12,
  colframe=blue!70!black,
  left=3.5pt,
  right=3.5pt,
  top=3pt,
  bottom=2pt
]
\textbf{Takeaway 2:} \textbf{\psisamplers{} substantially improve Generative Perplexity} for USDMs, with gains especially pronounced when NFEs exceed the sequence length.

\vspace{4pt}
\textbf{Takeaway 3:} Unlike ancestral sampling, which plateaus, \textbf{\psisamplers{} continue to improve with more sampling steps}, closing the gap with Masked diffusion models.
\end{tcolorbox}
\paragraph{Experimental Settings}
\begin{wrapfigure}{r}{0.25\textwidth}
    \vspace{-15pt}
    \centering
    \includegraphics[width=1\linewidth]{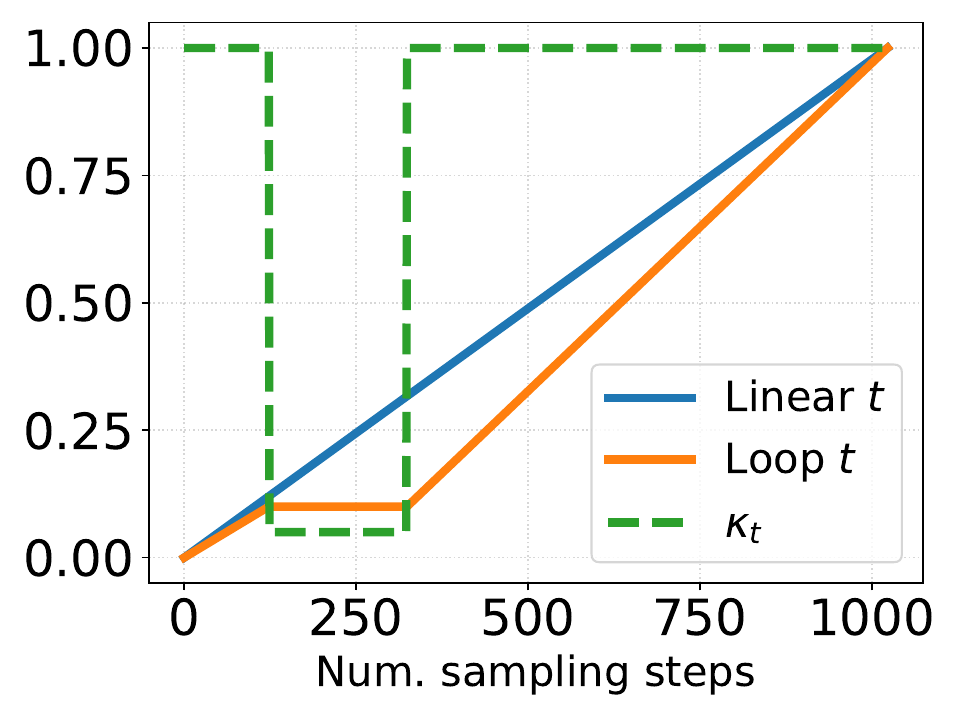}
    \caption{\footnotesize Illustration of the possible evolution of $t$ and the associated $\kappa_t$. In practice, we use $\kappa_t$ close to 1 during the PC phase.}
    \label{fig:profile-and-kappa}
    \vspace{-25pt}
\end{wrapfigure}
We compare MDLM \citep{sahoo2024simpleeffectivemaskeddiffusion} and ReMDM \citep{wang2025remaskingdiscretediffusionmodels} with \method{} and \psisamplers{}. We use the original checkpoints of \citet{sahoo2024simpleeffectivemaskeddiffusion}, trained for 1M steps with a batch size of 512 on OpenWebText (OWT; \citet{Gokaslan2019OpenWeb}) and context length $L=1024$. \method{} is trained with the same context length, batch size and number of steps, but with the efficient curriculum. 
%We distill the MDLM and Duo checkpoint using SDTT \citep{deschenaux2025autoregressionfastllmsselfdistillation} and DCD \citep{sahoo2025the} respectively, for 50k steps and default hyperparameters.
Refer to the original works for more details. We measure the sample quality using the Gen. PPL ($\downarrow$) computed with GPT-2 Large \citep{Radford2019LanguageMA} and the diversity using the unigram entropy ($\uparrow$) \citep{dieleman2022continuousdiffusioncategoricaldata, sahoo2024simpleeffectivemaskeddiffusion, sahoo2025the}. We cast logits to 64-bit precision for sampling \citep{zheng2025maskeddiffusionmodelssecretly}. See \supp{supp:details-psi-sampler} for more details.
\paragraph{Results and Ablation}
\fig{fig:fig1-abstract} (left) shows the Gen.\ PPL and entropy as a function of the NFE. \method{} with \psisamplers{} outperforms MDLM with ReMDM and ancestral sampling across the entire range of NFEs. As the number of NFEs increases beyond the sequence length, ReMDM and \psisamplers{} further improve sample quality while ancestral sampling plateaus. We ablate on the $\kappa_t$ schedule type (cap, rescale, loop, see \supp{supp:remdm-schedules}), the step-size parameter $\eta$, the nucleus sampling threshold $p \in \{0.9, 0.95, 1.0\}$ (\supp{supp:tuning-kappa-t-psi-samplers}). The rescale schedule with $\eta = 0.05$ yields the best Gen.\ PPL while preserving the unigram entropy. Nucleus sampling ($p{=}0.9$) consistently improves Gen.\ PPL for both MDLM and Duo, as observed in \citet{wang2025remaskingdiscretediffusionmodels}.

\paragraph{How to Pick $\kappa_t$?}
We recommend the ReMDM-equivalent rescale schedule with $\eta = 0.05$ and nucleus sampling ($p = 0.9$), using the log-linear noise schedule with linearly decreasing $t$, which outperforms the ``loop'' strategy (\supp{supp:remdm-schedules}).
\subsubsection{Image Modeling}
\label{sec:cifar10-experiments}
\begin{tcolorbox}[
  colback=blue!12,
  colframe=blue!70!black,
  left=3.5pt,
  right=3.5pt,
  top=3pt,
  bottom=2pt
]
\textbf{Takeaway 4:} \textbf{On CIFAR-10, \method{} with \psisamplers{} achieves better FID and IS} than MDLM with both ancestral sampling and ReMDM.
\end{tcolorbox}

\paragraph{Experimental Setup}
We train the same 35M-parameter U-Net as \citet{austin2023structureddenoisingdiffusionmodels} on CIFAR-10 for 1.5M steps. Following \citet{schiff2025simpleguidancemechanismsdiscrete}, the U-Net is made class conditional and we sample with Discrete Classifier-free Guidance (CFG; \citet{ho2022classifierfreediffusionguidance, schiff2025simpleguidancemechanismsdiscrete}). See \supp{supp:details-psi-sampler} for full training details. We report the \emph{Fréchet Inception Distance} (FID; \citet{heusel2018ganstrainedtimescaleupdate}) and \textit{Inception Score} (IS; \citet{salimans2016improvedtechniquestraininggans}) between the training set and generated samples.
\paragraph{Results and Ablation}
\fig{fig:fig1-abstract} \textit{(right)} and \fig{fig:is-cifar10} show that \psisamplers{} and ReMDM substantially improve FID and IS compared to ancestral sampling, with \method{} reaching the best scores overall. We ablate on the sampling noise schedule (cosine vs.\ log-linear), $\kappa_t$, the activation range $[t_\text{off}, t_\text{on}]$, nucleus sampling, and the $\kappa_t$ schedules (including the ReMDM variants; see \supp{supp:remdm-schedules}). Full results are in \supp{supp:tuning-kappa-t-psi-samplers}. Using $\kappa_t$ close to 1 (light noise injection) with a cosine schedule achieves the best FID and IS, and \method{} tolerates stronger noise injection than MDLM. With ancestral sampling, nucleus sampling improves the FID in the low NFE regime for MDLM, and always helps for Duo. Since it is detrimental to MDLM at high NFE, we do not use nucleus sampling when using the \psisamplers{}, for both MDLM and Duo.
\paragraph{How to Pick $\kappa_t$?}
We recommend a cosine sampling schedule with $\kappa_t = 0.95$, $t_\text{on} \in \{0.5, 0.6\}$, $t_\text{off} = 0.1$ for \method{}, and $\kappa_t = 0.99$, $t_\text{on} = 1.0$, $t_\text{off} = 0.1$ for MDLM. We suggest using a piecewise constant $\kappa_t$ with linearly decreasing $t$, rather than the ReMDM loop schedule (\supp{supp:remdm-schedules}), setting $\kappa_t < 1$ when $t \in [t_\text{off}, t_\text{on}]$, since it outperforms the ReMDM schedules.
\subsection{Fast Curriculum}
\label{sec:experiments-fast-curriculum}
\begin{tcolorbox}[
  colback=blue!12,
  colframe=blue!70!black,
  left=3.5pt,
  right=3.5pt,
  top=3pt,
  bottom=2pt
]
\textbf{Takeaway 5:} \textbf{The efficient curriculum reduces peak memory by 33\% and training time by 25\%}, while matching the performance of Duo on likelihood benchmarks and downstream tasks.
\end{tcolorbox}
\paragraph{Experimental Settings}
\begin{wraptable}{r}{0.4\textwidth}
    \vspace{-12pt}
    \centering
    \small
    \caption{\footnotesize Test perplexity (PPL) on LM1B and OWT. Lower is better. $^\dagger$Results from \citet{sahoo2025the}. Best Uniform-state diffusion numbers are \textbf{bolded}. Duo and \method{} achieve comparable performance across both datasets while requiring 25\% fewer GPU-hours (\tab{tab:training-efficiency}), demonstrating the effectiveness of our memory-efficient curriculum.}
    \label{tab:main-lm-results}
    {\footnotesize
    \begin{tabular}{lcc}
        \toprule
        & LM1B & OWT \\
        \midrule
        \textit{Autoregressive} \\
        \quad Transformer$^\dagger$ & 22.3 & 17.5 \\
        \midrule
        \textit{Masked Diffusion} \\
        \quad SEDD Absorb$^\dagger$ 
        % ~\citep{lou2024discretediffusionmodelingestimating} 
        & {32.7} & {24.1} \\
        \quad MDLM$^\dagger$ 
        % ~\citep{sahoo2024simpleeffectivemaskeddiffusion} 
        & \underline{27.0} & \underline{23.2} \\
        \midrule
        \textit{Uniform-state Diffusion} \\
        \quad SEDD Uniform$^\dagger$ 
        % ~\citep{lou2024discretediffusionmodelingestimating} 
        & 40.3 & 29.7 \\
        \quad UDLM$^\dagger$ 
        % ~\citep{schiff2025simpleguidancemechanismsdiscrete} 
        & 31.3  & 27.4 \\
        \quad Duo$^\dagger$ 
        % ~\citep{sahoo2025the} 
        & \textbf{29.9}  & \textbf{25.2} \\
        \rowcolor{ourslightgray}
        \quad \method{} (Ours), $k=2$ & \underline{30.0} & \textbf{25.2} \\
        \rowcolor{ourslightgray}
        \quad \method{} (Ours), $k=3$ & 30.1 & \underline{25.3} \\
        \rowcolor{ourslightgray}
        \quad \method{} (Ours), $k=5$ & 30.2 & 25.4 \\
        \bottomrule
    \end{tabular}}
    \vspace{-20pt}
\end{wraptable}
We train \method{} with the scalable curriculum~(\sec{sec:scalable-curriculum}) on OpenWebText (OWT; \citet{Gokaslan2019OpenWeb}) and LM1B \citep{chelba2014billionwordbenchmarkmeasuring}. We train all models for 1M steps, using a batch size of 512. For LM1B, we use the \texttt{bert-base-uncased} tokenizer with a context length of 128, padding shorter sequences. This setup follows previous work \citep{sahoo2024simpleeffectivemaskeddiffusion, lou2024discretediffusionmodelingestimating, he2022diffusionbertimprovinggenerativemasked}. For OWT, we use the GPT-2 tokenizer \citep{Radford2019LanguageMA}, and reserve the last 100k documents for validation, following \citet{sahoo2025the, sahoo2024simpleeffectivemaskeddiffusion}. We follow \citet{lou2024discretediffusionmodelingestimating} and use a modified diffusion transformer (DiT) \citep{peebles2023scalablediffusionmodelstransformers} with rotary positional encoding \citep{su2023roformerenhancedtransformerrotary}. We evaluate the impact of $k=\{2, 3, 5\}$ during the efficient curriculum. All models are trained on 16 H100 GPUs with bfloat16 precision. 
Training uses the loss in \Eqn{eqn:curriculum-train-loss}, with $\tau=0.001$ and $(\beta, \gamma) = (0.03, 0.15)$ for the first 500K steps \citep{sahoo2025the}.
%
% \vspace{-10pt}
\paragraph{Likelihood Results}
\tab{tab:main-lm-results} shows that on both LM1B and OWT, our efficient curriculum \method{} matches the performance of Duo with its expensive curriculum. The lowest validation perplexity is achieved with $k=2$, although $k\in \{2, 3, 5\}$ performs similarly. We also compare the models trained on OWT in Zero-Shot perplexity, and find that \method{} achieves a performance comparable to Duo. That is, we evaluate on the validation splits of the Penn Treebank \citep{ptb}, WikiText \citep{merity2016pointersentinelmixturemodels}, LM1B \citep{chelba2014billionwordbenchmarkmeasuring}, LAMBADA \citep{paperno2016lambadadatasetwordprediction}, AG News \citep{zhang2016characterlevelconvolutionalnetworkstext} and scientific articles from ArXiv and PubMed \citep{cohan-etal-2018-discourse}. \tab{tab:zero-shot-results} shows that \method{} reaches a zero-shot probability similar to that of Duo \emph{while requiring 25\% less training GPU-hours}. 
\paragraph{Likelihood-based Downstream Tasks}
In \tab{tab:downstream-evaluation}, we compare the multiple-choice question (MCQ) accuracy of Duo, \method{}, MDLM \citep{sahoo2024simpleeffectivemaskeddiffusion}, and an autoregressive transformer (1M training steps with a batch size of 512 on OWT, same hyperparameters as MDLM) using the \texttt{lm-eval-harness} suite \citep{eval-harness}
% Although \texttt{lm-eval-harness} was originally designed for autoregressive models, it was adapted for diffusion models by recent work (\citet{deschenaux2024promisesoutlookschallengesdiffusion, nie2025largelanguagediffusionmodels, nie2025scalingmaskeddiffusionmodels, shi2025simplifiedgeneralizedmaskeddiffusion}
; details in \supp{appendix:downstream-eval-explained}). 
We find that \method{} achieves an accuracy similar to that of Duo, despite requiring 25\% less training GPU-hours. 
However, it trails MDLM on most tasks, consistent with its higher perplexity.
\paragraph{Throughput and Peak Memory Usage}
\tab{tab:training-efficiency} reports the throughput and peak memory usage for Duo and \method{}. \method{} reduces the peak memory usage by about 33\% and doubles the speed of the Curriculum Learning phase. When applying Curriculum Learning for half of the training steps, \method{} trains 25\% faster than Duo on the 138M-parameter scale. Notably, both peak memory usage and throughput remain stable over the full training run when $k \in \{2,3,5\}$.
\section{Related work and Discussion}
\paragraph{Compatibility with General Discrete Diffusion Processes}
%
%
%
%
%Discrete diffusion \citep{sohldickstein2015deepunsupervisedlearningusing, austin2023structureddenoisingdiffusionmodels, campbell2022continuoustimeframeworkdiscrete, lou2024discretediffusionmodelingestimating, sahoo2024simpleeffectivemaskeddiffusion, shi2025simplifiedgeneralizedmaskeddiffusion, schiff2025simpleguidancemechanismsdiscrete, ou2025absorbingdiscretediffusionsecretly, sahoo2025the} and discrete flow matching \citep{campbell2024generativeflowsdiscretestatespaces, gat2024discreteflowmatching} have recently gained increasing attention due to advances in their foundations and more efficient implementations. 
This work focuses on discrete diffusion with uniform or masked noise. However, our approach extends to more general discrete diffusion processes~\citep{shaul2024flowmatchinggeneraldiscrete, vonrütte2025generalizedinterpolatingdiscretediffusion, holderrieth2025generatormatchinggenerativemodeling} featuring a combination of masked and uniform prior, since we provide a general predictor–corrector algorithm for discrete diffusion with arbitrary noise.
\paragraph{Predictor-Corrector Samplers}
% Previous work showed that remasking can improve performance by allowing the model to correct sampling errors. 
In the context of Masked diffusion, ReMDM \citep{wang2025remaskingdiscretediffusionmodels} generalizes previous predictor-corrector methods \citep{campbell2022continuoustimeframeworkdiscrete, campbell2024generativeflowsdiscretestatespaces, gat2024discreteflowmatching} that were based on Continuous Time Markov Chain formulation of discrete diffusion processes. Our approach further generalizes ReMDM to support arbitrary diffusion processes. Unlike \citet{lezama2023discrete, zhao2025informedcorrectorsdiscretediffusion, liu2025thinkgeneratediscretediffusion,kim2025fine}, who train an additional corrector module, our method does not introduce additional learned components.
\paragraph{Comparison to Other Discrete Diffusion Samplers}
\citet{park2024textitjumpstepsoptimizingsampling} uses noise-adaptive step sizes; while we use uniform steps, \psisamplers{} support any step-size schedule. \citet{ren2025fastsolversdiscretediffusion} develops higher-order samplers; we use only first-order information, though the posterior in \Eqn{eq:psi-posterior} could be approximated with higher-order methods. Thus, \psisamplers{} complement both lines of work.
\section{Conclusion}
We introduced a unified and practical framework for predictor-corrector sampling in discrete diffusion language models through $\Psi$-posteriors. By linearly superposing the forward and reverse diffusion processes~\Eqn{eq:psi-posterior}, the \emph{$\Psi$-posteriors} preserve the marginals of standard diffusion models. Importantly, the \emph{$\Psi$-posteriors} and associated \psisamplers{} subsume prior masked-diffusion PC samplers~\citep{campbell2022continuoustimeframeworkdiscrete,gat2024discreteflowmatching, wang2025remaskingdiscretediffusionmodels} as special cases, and naturally extend to discrete diffusion models with uniform prior. Empirically, \method{} with \psisamplers{} matches the performance of MDMs on natural language generation and achieves stronger FID and IS scores on CIFAR-10. Moreover, they exhibit superior scaling: performance continues to improve with NFEs, unlike ancestral samplers, which plateau. Finally, we propose a scalable training curriculum \citep{sahoo2025the} that reduces the peak memory usage by 33\% and shortens the training time by 25\%. Concurrently,~\citet{sahoo2026scaling} show that Duo surpasses an autoregressive model at the 1.7B scale on the math and reasoning benchmark (GSM8K). Taken together, these results challenge the view that Masked diffusion is categorically the future of diffusion language modeling.
\newpage
\section{Acknowledgements}
This work has received funding from the Swiss State Secretariat for Education, Research and Innovation (SERI). We are grateful to Ricky T. Q. Chen and Zhihan Yang for insightful discussions and suggestions. We acknowledge the SCITAS team at EPFL for providing access to their cluster, and the Swiss National Supercomputing Centre for the Alps platform. We are grateful to Karin Gétaz for her administrative assistance.

\bibliography{iclr2026_conference}
\bibliographystyle{iclr2026_conference}
\appendix
\clearpage
\FloatBarrier
\setcounter{tocdepth}{2}
\tableofcontents
\section{\texorpdfstring{$\Psi$}{Psi}-Posteriors}
\subsection{Approximate Reverse Marginals}\label{supp:subsec:approx-reverse-psi-posteriors}
We parameterize the (generative) $\Psi$-reverse marginals to have a similar form as the true \mbox{posterior \Eqn{eq:psi-posterior}}. Therefore, the generative reverse marginals also factorize over the sequence length. Because $\xL$ is not available during sampling, there are two terms in \Eqn{eq:psi-posterior} that are intractable. First, we choose to replace the posterior ${\qst}(. | \ztl, \xl)$ by ${\qst}(. | \ztl, \xl =  \x_\theta^\supl )$. Additionally, as we cannot sample from ${\color{discretecolor} q_s}(. | \xl)$ without $\xl$, we replace $\x^\supl$ by ${\color{discretecolor} q_{0|t}}(. | \z_t, \x^\supl = \x_\theta^\supl)$, $\forall \ell \in [L]$. Replacing these two intractable terms yields our generative reverse marginals:
\begin{align}
    \Psi^\theta_{s | t}(. | \z_t) = \kappa_t {\color{discretecolor} \qst}(. | \z_t, \x=\xapprox) + (1 - \kappa_t) \left[
    \alpha_s {\color{discretecolor} q_{0|t}}(.| \z_t, \x = \xapprox) + (1 - \alpha_s) \pi \right].
\end{align}
Note that for the masked posterior \Eqn{eqn:mdm-posterior}, ${\color{discretecolor} q_{0|t}}(. | \z_t, \x = \x_\theta(\z_t ,t)) = \x_\theta(\z_t, t)$.
\subsection{Proof that the \texorpdfstring{$\Psi$}{Psi}-posteriors have the correct marginals}
\label{sec:psi-posteriors-have-correct-marginals}
Let $\Psi_{s | t}(. | \xl, \z_t^\supl)$ denote the $\Psi$-posteriors defined in \Eqn{eq:psi-posterior}. Let $s$ denote $s(k) = t(k - 1)$ and $t$ denote $t(k)$. To prove that the $\Psi$-posteriors have the correct marginals, we proceed by (downwards) induction, similar to \citet{song2022denoisingdiffusionimplicitmodels}. First, note that $\Psi_s (\z_s^\supl|\xl)$ can be written as a marginalization over $\tilde \z_t^\supl$, for $s < t$: 
\begin{equation} \label{eq:correct-marginals-marginalization}
    \Psi_s (\z_s^\supl|\xl) = \sum_{\tilde{\z}_t^\supl} \Psi_t ( \tilde{\z}_t^\supl |\xl) \Psi_{s|t}(\z_s^\supl| \tilde{\z}_t^\supl, \x) 
\end{equation}
\paragraph{Base Case}
Let $\Psi_1 (\z_1^\supl|\x^\supl)$ denote the marginal at time $t=1$. By definition in \Eqn{eq:psi-posterior}, $\Psi_1 (\z_1^\supl|\x^\supl) = \cat (.|\prior)$. Therefore, the $\Psi$-posteriors have the correct marginal for $t=1$.
\paragraph{Induction hypothesis}
Suppose that the $\Psi$-posteriors have the correct marginal for a certain $t \leq 1$, that is, $\Psi_{t} (.|\xl) = {\color{discretecolor} q_{t}}(. | \xl)$.
\paragraph{Inductive step}
Based on the induction hypothesis, we now show that $\Psi_{s} (.|\xl) = {\color{discretecolor} q_{s}}(. | \xl)$, for $s(k) = t(k-1)$. Indeed
\begin{equation}
    \begin{aligned}
        \Psi_{s}(.|\xl) & \stackrel{(1)}{=}  \sum_{\tilde{\z}_t^\supl} \Psi_t ( \tilde{\z}_t^\supl|\xl) \Psi_{s|t}(\z_s^\supl| \tilde{\z}_t^\supl, \x^\supl) \\
        & \stackrel{(2)}{=} \sum_{\tilde{\z}_t^\supl} {\color{discretecolor} q_t} ( \tilde{\z}_t^\supl|\x^\supl) \Psi_{s|t}(\z_s^\supl| \tilde{\z}_t^\supl, \x^\supl) \\
        & \stackrel{(3)}{=}\sum_{\tilde{\z}_t} {\color{discretecolor} q_t} ( \tilde{\z}_t^\supl|\xl) \left[ \kappa_t q_{s|t}(\z_s^\supl | \xl, \tilde{\z}_t^\supl) + (1 - \kappa_t) {\color{discretecolor} q_s}(\z_s^\supl|\xl)  \right]  \\
        & \stackrel{(4)}{=} \kappa_t  \sum_{\tilde{\z}_t^\supl} {\color{discretecolor} q_t} ( \tilde{\z}_t^\supl|\xl)  q_{s|t}(\z_s^\supl | \xl, \tilde{\z}_t^\supl) + (1 - \kappa_t) {\color{discretecolor} q_s}(\z_s^\supl|\x^\supl) \sum_{\tilde{\z}_t^\supl} {\color{discretecolor} q_t} ( \tilde{\z}_t^\supl|\xl) \\
        & \stackrel{(5)}{=} \kappa_t {\color{discretecolor} q_s}(\z_s^\supl | \xl) + (1 - \kappa_t) {\color{discretecolor} q_s}(\z_s^\supl | \xl) \nonumber = {\color{discretecolor} q_s}(\z_s^\supl | \xl).
    \end{aligned}
\end{equation}
Specifically, $(1)$ hold by \Eqn{eq:correct-marginals-marginalization}, $(2)$ by the induction hypothesis, $(3)$ by definition of the $\Psi$-posteriors, $(4)$ by distributing ${\color{discretecolor} q_t} ( \tilde{\z}_t^\supl|\xl)$, $(5)$ by definition of marginal probability (first term), and by observing that $\sum_{\tilde{\z}_t^\supl} {\color{discretecolor} q_t} ( \tilde{\z}_t^\supl|\xl) = 1$ since $ {\color{discretecolor} q_t}$ is normalized. This concludes the inductive step, and shows that the $\Psi$-posteriors have the correct marginal.
\subsection{Negative Evidence Lower Bound}\label{supp:psi-nelbo}
Let $\z_{0:1}^\supl$ denote a reverse trajectory with time indices $\{0, \frac{1}{T}, \frac{2}{T}, \dots, 1\}$ for token $\ell$. The joint distribution of $(\xl, \z_{0:1}^\supl)$ under the generative model factorizes as
\begin{equation}
    p^\theta(\xl, \z_{0:1}^\supl) 
    = p(\xl \mid \z_0^\supl) \Psi_1(\z_1^\supl) \prod_{i=1}^T \Psi^\theta_{s \mid t}(\z_{s(i)}^\supl \mid \z_{t(i)}^\supl),
\end{equation}
where each pair $(s(i), t(i))$ denotes one reverse transition with $s(i) < t(i)$. The marginal likelihood is
\begin{equation}
    p^\theta(\xl) = \sum_{\z_{0:1}^\supl} p^\theta(\xl, \z_{0:1}^\supl).
\end{equation}
Introducing the variational distribution
$\Psi(\z_{0:1}^\supl \mid \xl) = \Psi_1(\z_1^\supl \mid \xl) \prod_{i=1}^T \Psi_{s\mid t}(\z_{s(i)}^\supl \mid \z_{t(i)}^\supl, \xl)$,
Jensen's inequality results in:
\begin{align}
    -\log p^\theta(\x^\supl)
    &\leq \mathbb E_{\Psi(\z_{0:1}^\supl\mid \xl)}\big[ - \log p(\xl \mid \z_0^\supl) \big] 
       + \mathrm{KL}\big( \Psi_1(\cdot \mid \xl) \big\| \Psi_1 \big) \\
       & + \sum_{i=1}^T \mathbb E_{\Psi(\z_{t(i)}^\supl\mid \xl)}\left[ D_{\mathrm{KL}}\Big( \Psi_{s\mid t}(\cdot \mid \z_{t(i)}^\supl, \xl) \,\big\|\, \Psi^\theta_{s\mid t}(\cdot \mid \z_{t(i)}^\supl) \Big) \right].
\end{align}
This expression is similar to the standard diffusion NELBO, with a reconstruction term, a prior term at $t{=}1$, and a sum of KL divergences. As $T \rightarrow \infty$, $p(\xl \mid \z_0^\supl)$ concentrates around $\xl$, hence $-\log p(\xl \mid \z_0^\supl) \rightarrow 0$. Furthermore, the prior term is zero by definition of the $\Psi$-posteriors in \Eqn{eq:psi-posterior}.

\subsection{Recovering Predictor-Corrector Methods for Masked Diffusion}\label{supp:subsec:recovering-remdm}
\citet{wang2025remaskingdiscretediffusionmodels} introduce the ReMDM posterior, which generalizes the MDM posterior~\Eqn{eqn:mdm-posterior} by allowing previously decoded tokens to be remasked. For a given position $\ell$, the ReMDM posterior is
\begin{equation}\label{eq:remdm-posterior}
    q_\sigma(\z_s^\supl \mid \z_t^\supl, \xl) = \begin{cases}
        \cat\bigl(.;\, (1 - \sigma_t)\, \xl + \sigma_t\, \m\bigr), & \z_t^\supl \neq \m, \\[4pt]
        \cat\left(.;\, \frac{\alpha_s - (1{-}\sigma_t)\alpha_t}{1 - \alpha_t}\, \xl + \frac{1 - \alpha_s - \sigma_t \alpha_t}{1 - \alpha_t}\, \m\right), & \z_t^\supl = \m,
    \end{cases}
\end{equation}
where $\sigma_t \in [0,\, \sigma_t^{\max}]$ is a free parameter that controls the remasking probability. The upper bound $\sigma_t^{\max} := \min\{1,\, (1 - \alpha_s)/\alpha_t\}$ ensures that \Eqn{eq:remdm-posterior} defines a valid distribution. When $\sigma_t = 0$, the ReMDM posterior reduces to the standard MDM posterior. Below, we show that the $\Psi$-posteriors recover the ReMDM posterior with the substitution $\kappa_t = 1 - \sigma_t / (1 - \alpha_s)$. Suppose that we work with Masked diffusion, hence $\pi = \m$. The $\Psi$-posteriors can be expanded as
\begin{align}
    & \Psi_{s | t}(. | \z_t^\supl) \nonumber \\
    & = \kappa_t {\color{discretecolor} \qst}(. | \z_t^\supl, \xl) + (1 - \kappa_t) \left[
    \alpha_s {\color{discretecolor} q_{0|t}}(.| \z_t^\supl, \xl) + (1 - \alpha_s) \pi \right] \\
    & = \kappa_t \begin{cases}
        \begin{aligned}
        &\cat (.;\z_t^\supl), & \z_t^\supl \neq \m, \\
        &\cat \left(.; \frac{(1 - \alpha_s) \m + (\alpha_s - \alpha_t)\xl}{1 - \alpha_t} \right), & \z_t^\supl = \m
    \end{aligned}\end{cases} + (1 - \kappa_t) \left[ \alpha_s \xl + (1 - \alpha_s) \m \right] \\
    & \stackrel{(1)}{=} \kappa_t \begin{cases}
    \begin{aligned}
        &\cat (.;\xl), & \z_t^\supl \neq \m, \\
        &\cat \left(.; \frac{(1 - \alpha_s) \m + (\alpha_s - \alpha_t)\xl}{1 - \alpha_t} \right), & \z_t^\supl = \m
    \end{aligned}\end{cases} + (1 - \kappa_t) \left[ \alpha_s \xl + (1 - \alpha_s) \m \right] \\
    & = \begin{cases} 
        \cat(.; \kappa_t \xl + (1 - \kappa_t) [\alpha_s \xl + (1 - \alpha_s) \m]), & \z_t^\supl \neq \m \\
        \cat\left(.; \kappa_t \frac{(1 - \alpha_s) \m + (\alpha_s - \alpha_t)\xl}{1 - \alpha_t} + (1 - \kappa_t) [\alpha_s \xl + (1 - \alpha_s) \m]\right), & \z_t^\supl = \m
      \end{cases} \\
    & = \begin{cases} 
        \cat(.; [\kappa_t + (1 - \kappa_t) \alpha_s] \xl + (1 - \kappa_t)(1 - \alpha_s) \m), & \z_t^\supl \neq \m \\
        \cat\left(.; \left[\kappa_t \frac{\alpha_s - \alpha_t}{1 - \alpha_t} + (1 - \kappa_t) \alpha_s\right] \xl + \left[\kappa_t \frac{1 - \alpha_s}{1 - \alpha_t} + (1 - \kappa_t)(1 - \alpha_s)\right] \m\right), & \z_t^\supl = \m
      \end{cases},
\end{align}
where $(1)$ holds since $\z_t^\supl \neq \m$ implies that $\z_t^\supl = \xl$, since in Masked diffusion, the latents $\z_t^\supl$ are either a clean token or the masked token. 

To conclude, if we pick $\kappa_t = 1 - \frac{\sigma_t}{1 - \alpha_s}$, where $\sigma_t$ is the free parameter in the ReMDM sampler, then the equation reduces to the ReMDM posterior. Therefore, the $\Psi$-posteriors generalize ReMDM, which itself generalized the FB \citep{campbell2022continuoustimeframeworkdiscrete} and DFM \citep{gat2024discreteflowmatching} posteriors. Additionally, the $\Psi$-posteriors are not limited to Masked diffusion, as we showed in this work. In \tab{tab:cifar10-d3pm-psi-remdm-equivalent}, we sample from the ReMDM-equivalent \psisamplers{}, comparing the official ReMDM implementation and our version, and find obtain similar performance.
\subsection{ReMDM Sampling Schedules}
\label{supp:remdm-schedules}
\citet{wang2025remaskingdiscretediffusionmodels} introduce three schedules for the remasking parameter $\sigma_t$, which controls how aggressively previously generated tokens are remasked. As shown in \supp{supp:subsec:recovering-remdm}, the \psisamplers{} recover ReMDM when \mbox{$\kappa_t = 1 - \sigma_t / (1 - \alpha_s)$}, and $(\sigma_t)_{t=0}^1$ denotes the ReMDM noise injection schedule. Therefore, each ReMDM $\sigma_t$ schedule has a direct \psisamplers{} equivalent. Below, we present the three $\sigma_t$ schedules studied in \citet{wang2025remaskingdiscretediffusionmodels}. The schedules must satisfy $0 \leq \sigma_t \leq \sigma_t^{\max} := \min\{1,\, (1 - \alpha_s) / \alpha_t\}$ to ensure that the reverse posterior remains a valid distribution, and are generally defined in terms of $\sigma_t^{\max}$ and a parameter $\eta \in [0, 1]$. 
\paragraph{Cap Schedule}
With the ``Cap'' schedule, the remasking probability is capped at a constant $\eta \in [0, 1]$, as long as it remains in the valid bounds:
\begin{equation}\label{eq:remdm-cap}
    \sigma_t = \min\bigl\{\eta,\, \sigma_t^{\max}\bigr\}.
\end{equation}
\paragraph{Rescale Schedule}
With the ``Rescale'' schedule, $\sigma_t$ is obtained by scaling the upper bound $\sigma_t^{\max}$ by a constant $\eta \in [0, 1]$:
\begin{equation}\label{eq:remdm-rescale}
    \sigma_t = \eta \cdot \sigma_t^{\max}.
\end{equation}
\paragraph{Loop Schedule}
Unlike the cap and rescale schedules, which only modulate $\sigma_t$, the ``Loop'' schedule also changes the evolution of $t$ during sampling (see \fig{fig:profile-and-kappa} for an illustration). It is controlled by the time boundaries $t_{\text{on}} > t_{\text{off}}$. The noise schedule $\alpha_t$ is reparametrized to be piecewise linear. First, $\alpha_t$ increases linearly from $0$ to $\alpha_{t_{\text{on}}}$ when $t > t_{\text{on}}$. Secondly, when $t \in [t_{\text{off}},\, t_{\text{on}}]$, it is held constant to $\alpha_{t_{\text{on}}}$, and finally increases from $\alpha_{t_{\text{on}}}$ to $1$ when $t < t_{\text{off}}$. When $t \notin [t_{\text{off}},\, t_{\text{on}}]$, ReMDM samples from the MDM posterior \Eqn{eqn:mdm-posterior}. When $t \in [t_{\text{off}},\, t_{\text{on}}]$, since $\alpha_t = \alpha_s = \alpha_{t_{\text{on}}}$, the model samples from the ReMDM posterior~\Eqn{eq:remdm-posterior} with a constant $\sigma_t = \eta$. In practice, $t_{\text{on}}$ is chosen so that $\alpha_{t_{\text{on}}}$ is close to $1$, where most tokens have already been decoded.

\section{Fast Curriculum}
\label{suppl:fast-curriculum}
In this section, we expand on the implementation of the efficient curriculum. In \sec{sec:how-to-implement-the-curriculum}, we present the overall design, pseudocode, and the three main implementation challenges.
Our approach relies on several mathematical results, which we present in order of dependency. We first recall inverse transform sampling (\sec{sec:inverse-transform-sampling}), then derive the distributions of the largest (\sec{sec:distribution-largest-uniform}) and second largest (\sec{sec:uniform-second-largest}) uniform order statistics. These results enable generating the $k$ largest Gaussian random variables out of $K$ without materializing the full vector (\sec{suppl:gen-top-k-gaussians-out-of-K}). We also derive a closed-form expression for the conditional mean of the exponential of a Gaussian random variable (\sec{sec:estimation-normalization-constant-of-softmax}), used to estimate the softmax normalizer.

Furthermore, although the efficient curriculum could be implemented using the original definition of the Diffusion Transformation Operator $\TransOp$, we show that $\TransOp$ admits a convenient series expansion in \sec{sec:series-representation-of-diffusion-operator}. This avoids the need to precompute 100k function values, and simplifies the implementation. Finally, in \sec{sec:polynomial-approximation-of-diffusion-operator-explanation}, we show that $\TransOp$ can be well approximated by a degree-$9$ polynomial, which removes the need to store a large number of coefficients during training.
\subsection{Sampling the top-\texorpdfstring{$k$}{k} values in \texorpdfstring{$\wtl$}{wtl} Without Materializing The Full Vector}
Computing our curriculum step requires access to the $k$ largest entries of the diffused weight vector $\wtl \in \mathbb{R}^K$, but explicitly materializing $\wtl$ (and then running a full top-$k$) is prohibitively expensive when $K$ is large. In this subsection, we show how to obtain the top-$k$ values and their associated indices while using only $\mathcal O(k)$ memory and without simulating all $K$ random variables. The key idea is to decouple \emph{values} from \emph{locations}: we first sample the top-$k$ Gaussian order statistics directly via inverse transform sampling (\sec{sec:inverse-transform-sampling}), leveraging closed-form expressions for uniform order statistics (\sec{sec:distribution-largest-uniform}, \sec{sec:uniform-second-largest}) and a numerically stable log-space implementation (\algo{alg:reverse-sampling-order-stats}). We then assign these top-$k$ to their corresponding indices (\sec{sec:sampling-integers-without-repetition-and-shuffling}, \algo{alg:floyd-sampling}). This yields an efficient routine whose cost scales with $k$ rather than $K$, enabling top-$k$ truncation of $\wtl$ without ever materializing it.

\subsubsection{Inverse Transform Sampling}
\label{sec:inverse-transform-sampling}
The Inverse Transform Sampling method \citep{devroye1986nonuniform} is an algorithm for generating a continuous random variables $X$ with a known Cumulative Distribution Function (CDF) $F_X$. Implementing Inverse Transform Sampling requires access to the inverse CDF $F^{-1}_X$, and a source of $i.i.d$ uniform random variables. If $X = F^{-1}_X(U)$, where $U \sim \mathcal U[0,1]$, then $X \sim F_X$. Indeed, for $x \in \R$, 
\begin{equation}
   \mathbb P (X \leq x) = \mathbb P (F^{-1}_X(U) \leq x) = \mathbb P (U \leq F_X(x)) = F_X(x),
\end{equation}
since for $a \in [0, 1]$, $\mathbb P(U \leq a) = a$. This shows that $X$ has the correct distribution.
\subsubsection{Distribution of the largest uniform random variable out of \texorpdfstring{$K$}{K}}
\label{sec:distribution-largest-uniform}
The distribution of the largest uniform random variable out of $K$ admits a simple closed-form expression:

\begin{proposition}[Distribution of the largest uniform random variable out of $K$]\label{prop:uniform-order-stat}
$U^{(1)} \geq U^{(2)} \geq ... \geq U^{(K)}$ denote an order statistic over $K$ i.i.d uniform random variables $\mathcal U([0, \theta])$ with Cumulative Density Function (CDF) $F_U$. Suppose that $u \in [0, 1]$, then $F_U(u) = \frac{u}{\theta}$. Then, the CDF $F_{U^{(1)}}$ and probability density function (PDF) $f_{U^{(1)}}$ of the largest random variable $U^{(1)}$ are as follows:
\begin{equation}
    \begin{aligned}
    F_{U^{(1)}}(u) & = F_U^K(u) = u^K \theta^{-K} \\
    f_{U^{(1)}}(u) & = K F_U^{K-1}(u) f_U(u) = K u^{K - 1} \theta^{-K}
    \end{aligned}
\end{equation}
\label{prop:uniform-order-stat-cdf-pdf}
\end{proposition}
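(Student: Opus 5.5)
The plan is to use the standard independence argument for the maximum of i.i.d.\ variables and then differentiate. I would first fix the domain so the formulas make sense. The uniforms live on $[0,\theta]$, so the CDF of a single variable is $F_U(u) = u/\theta$ for $u \in [0,\theta]$. It is $0$ below this interval and $1$ above it. The statement's ``$u \in [0,1]$'' should be read as $u \in [0,\theta]$; it coincides with the paper's use case $\theta = 1$. Correspondingly, the single-variable density is $f_U(u) = 1/\theta$ on that interval.

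For the CDF, the key step is the event identity
\begin{equation*}
\{U^{(1)} \leq u\} = \bigcap_{i=1}^K \{U_i \leq u\},
\end{equation*}
which holds because the maximum is at most $u$ if and only if every sample is. Mutual independence of $U_1,\dots,U_K$ then factorizes the probability:
\begin{equation*}
\mathbb P(U^{(1)} \leq u) = \prod_{i=1}^K \mathbb P(U_i \leq u).
\end{equation*}
Since the variables are identically distributed, this product equals $F_U^K(u)$. Substituting $F_U(u) = u/\theta$ gives $u^K \theta^{-K}$.

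For the density, I would differentiate $F_{U^{(1)}}$ on the open interval $(0,\theta)$. On that interval it is a polynomial, hence differentiable. By the chain rule,
\begin{equation*}
f_{U^{(1)}}(u) = K F_U^{K-1}(u)\, f_U(u) = K (u/\theta)^{K-1} \cdot \frac{1}{\theta} = K u^{K-1}\theta^{-K}.
\end{equation*}
The CDF is continuous and piecewise $C^1$, and the boundary points have measure zero. So this derivative is a valid density of $U^{(1)}$, and it is zero outside $[0,\theta]$.

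None of these steps is a genuine obstacle. The only delicate point is the bookkeeping noted at the start: the support must be stated as $[0,\theta]$ rather than $[0,1]$, otherwise the formula $F_U(u) = u/\theta$ fails for $\theta < 1$. It is also worth recording the consequence used later for inverse transform sampling (\sec{sec:inverse-transform-sampling}). Inverting $F_{U^{(1)}}$ gives $U^{(1)} = \theta V^{1/K}$ with $V \sim \mathcal U[0,1]$. This is the form needed for the recursive top-$k$ sampling, where $\theta$ is replaced by the previously sampled maximum.
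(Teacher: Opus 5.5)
Your proof is correct and follows the paper's argument: write $\{U^{(1)} \leq u\}$ as the intersection of the events $\{U_i \leq u\}$, factor by independence to get $F_U^K(u)$, then differentiate to get the density. Your correction that the formula $F_U(u) = u/\theta$ should be stated on $u \in [0,\theta]$ rather than $[0,1]$ is a fair fix to the statement's bookkeeping; it changes nothing in the argument.
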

\begin{proof}
\begin{equation}
    F_{U^{(1)}}(u) = \mathbb P(U^{(1)} \leq u) = \mathbb P (U_i \leq u)_{\forall i \in [K]} = P(U \leq u)^K = F_U^K(u).
\end{equation}
The PDF is obtained by differentiation:
\begin{equation}
    f_{U^{(1)}}(u) = \frac{d}{du} F_{U^{(1)}}(u) = K F_U^{K-1}(u) f_U(u),
\end{equation}
\end{proof}

\subsubsection{Distribution of the $k$ largest uniform random variable out of \texorpdfstring{$K$}{K}}
\label{sec:uniform-second-largest}
In this part, we show how to sample the $k$ largest uniform random variables out of $K$. Importantly, we do not need to draw all $K$ values and sort them, which would be impractical for large $K$. Let $U^{(1)} \geq \cdots \geq U^{(K)}$ denote the order statistics of $K$ i.i.d.\ $\mathcal U[0, 1]$ random variables. We argue that for $1 \leq i < K$, conditioned on $U^{(i)} = u^{(i)}$, $U^{(i+1)}$ is distributed as the largest out of $K-i$ uniform random variables on $[0, u^{(i)}]$. This enables an iterative scheme to generate the $k$ largest variables in decreasing order. The argument relies on two standard results: the conditional density formula (\prop{prop:conditional-density}) and the joint density of a pair of order statistics (\prop{prop:joint-order-stat}). The conditional distribution of $U^{(i+1)} \mid U^{(i)} = u^{(i)}$ is given in \prop{prop:conditional-second-largest}.
\begin{proposition}[Conditional Density \citep{Berger2001-hq}]\label{prop:conditional-density}
    Let $X, Y$ be two random variables with joint density $f_{X, Y}$ and marginals $f_X, f_Y$. Then, the conditional density of $X$ given $Y = y$ is

    \begin{equation}
    f_{X | Y = y}(x|y) = \frac{f_{X, Y}(x, y)}{f_Y(y)}.
    \end{equation}
\end{proposition}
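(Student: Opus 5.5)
The statement to prove is Proposition~\ref{prop:conditional-density}, the conditional density formula $f_{X\mid Y=y}(x\mid y) = f_{X,Y}(x,y)/f_Y(y)$. It is cited from \citet{Berger2001-hq}, so the proof should simply show that the formula is forced by the definition of conditional probability given a (measure-zero) event $\{Y=y\}$. I will use the standard limiting definition. I assume $f_{X,Y}$ is continuous in a neighbourhood of $(x,y)$ (or at least that $(x,y)$ is a Lebesgue point), and I fix $y$ with $f_Y(y) > 0$.

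\textbf{Step 1: condition on a thin slab.} I will condition on the event $\{Y \in [y, y+h]\}$, which has positive probability for small $h>0$. By the elementary definition of conditional probability,
\begin{equation*}
\mathbb P(X \le x \mid y \le Y \le y+h) = \frac{\int_{-\infty}^{x}\int_{y}^{y+h} f_{X,Y}(u,v)\,\mathrm dv\,\mathrm du}{\int_{y}^{y+h} f_Y(v)\,\mathrm dv}.
\end{equation*}

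\textbf{Step 2: take the limit $h\to 0$.} Dividing numerator and denominator by $h$ and applying the fundamental theorem of calculus (a Lebesgue differentiation argument in the general case), the denominator tends to $f_Y(y)$. For the numerator, I will justify exchanging the limit with the $u$-integral. Dominated convergence works if $f_{X,Y}$ is locally bounded, or one can argue directly with Fubini. The numerator then tends to $\int_{-\infty}^x f_{X,Y}(u,y)\,\mathrm du$. This gives the conditional CDF
\begin{equation*}
F_{X\mid Y=y}(x) = \int_{-\infty}^x \frac{f_{X,Y}(u,y)}{f_Y(y)}\,\mathrm du.
\end{equation*}

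\textbf{Step 3: differentiate in $x$.} Differentiating the conditional CDF in $x$ yields the claimed conditional density.

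\textbf{Step 4: check it is a genuine density.} The expression is nonnegative, and it integrates to $1$ because $f_Y(y) = \int f_{X,Y}(u,y)\,\mathrm du$ is the marginalization identity.

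An alternative route avoids limits entirely. Define the kernel $k(x\mid y) := f_{X,Y}(x,y)/f_Y(y)$ and verify via Fubini that $\mathbb P(X\in A, Y\in B) = \int_B \int_A k(x\mid y)\,\mathrm dx\, f_Y(y)\,\mathrm dy$. This shows $k$ is a version of the regular conditional distribution, unique up to $f_Y$-null sets.

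The main obstacle is the limit interchange in Step~2, which depends on regularity assumptions on $f_{X,Y}$. For that reason I would lead with the Fubini/kernel verification, since it needs only integrability. For the paper's use, namely the uniform order statistics in Proposition~\ref{prop:uniform-order-stat}, the joint densities are polynomial on a compact set, so the required regularity holds trivially.
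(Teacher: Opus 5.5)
The paper gives no proof of this statement. It cites the formula as a textbook fact; in the standard textbook treatment it is simply the \emph{definition} of the conditional density (for $f_Y(y)>0$), and the paper applies it directly in \prop{prop:conditional-second-largest}. Your proposal is correct and instead \emph{justifies} that definition. You also rightly add the hypothesis $f_Y(y)>0$, which the paper's statement omits.

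Your Fubini/kernel argument is the one to keep. Take the version $f_Y(y)=\int f_{X,Y}(u,y)\,\mathrm du$ and let $k(\cdot\mid y)$ be any fixed density on $\{f_Y=0\}$, where $f_{X,Y}(\cdot,y)=0$ a.e. Tonelli then gives $\mathbb P(X\in A,Y\in B)=\int_B\int_A k(x\mid y)\,\mathrm dx\,f_Y(y)\,\mathrm dy$. So $k$ is a regular conditional density under mere integrability. This also makes explicit that the formula is determined only for almost every $y$ under the law of $Y$, not at a prescribed $y$.

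The slab limit of Steps 1--3 buys the elementary interpretation, but the hypotheses you state for it are misplaced. The numerator limit needs control of $v\mapsto f_{X,Y}(u,v)$ at $v=y$ for a.e.\ $u$ along the whole ray $u\le x$, together with an envelope integrable in $u$. Continuity or a Lebesgue point at the single point $(x,y)$, or local boundedness, does not give dominated convergence on $(-\infty,x]$. Lebesgue differentiation of $v\mapsto\int_{-\infty}^x f_{X,Y}(u,v)\,\mathrm du$ does work, but again only for a.e.\ $y$.

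In the paper's application, the joint density of $(U^{(i)},U^{(i+1)})$ jumps across the diagonal $u^{(i+1)}=u^{(i)}$. So what makes Step 2 go through there is boundedness on a compact support plus continuity off a null set, not continuity near $(x,y)$.
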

\begin{proposition}[Joint Density of Order Statistics (\citet{Berger2001-hq}; proof in \citet{kim2021})]\label{prop:joint-order-stat}
    Let $X^{(1)} \geq \cdots \geq X^{(K)}$ denote the order statistics of $K$ random variables with CDF $F$ and PDF $f$, arranged in descending order. Then, the joint density of $X^{(n)}$ and $X^{(m)}$, where $n < m$ (so that $X^{(n)} \geq X^{(m)}$), is given by

    \begin{equation}
    \begin{aligned}
        & f_{X^{(n)}, X^{(m)}}(u, v) = \\
        & \frac{K!}{(n-1)!(m-n-1)!(K-m)!}\, (1-F(u))^{n-1}\, (F(u) - F(v))^{m-n-1}\, F(v)^{K-m}\, f(u)\,f(v).
    \end{aligned}
    \end{equation}
\end{proposition}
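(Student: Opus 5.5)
The statement to prove is Proposition~\ref{prop:joint-order-stat}, the joint density of the $n$-th and $m$-th largest of $K$ i.i.d.\ variables with CDF $F$ and PDF $f$. I will use the standard infinitesimal counting (multinomial) argument.

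Fix $u > v$ and small $du, dv > 0$ with $u - du > v + dv$. The event $\{X^{(n)} \in [u, u+du),\, X^{(m)} \in [v, v+dv)\}$ agrees, up to terms of order $o(du\,dv)$, with the event that the $K$ samples fall into five groups:
\begin{itemize}
\item exactly $n-1$ samples lie above $u+du$,
\item exactly one lies in $[u, u+du)$,
\item exactly $m-n-1$ lie in $(v+dv, u)$,
\item exactly one lies in $[v, v+dv)$,
\item and the remaining $K-m$ lie below $v$.
\end{itemize}
Configurations in which two or more samples fall in one of the small intervals have probability $O(du^2)$ or $O(dv^2)$, so they do not contribute to the density.

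By independence, each assignment of samples to these five cells has probability equal to the product of the cell probabilities:
\begin{itemize}
\item $(1-F(u+du))^{n-1}$ for the top cell,
\item $F(u+du)-F(u) = f(u)\,du + o(du)$ for the cell at $u$,
\item $(F(u)-F(v+dv))^{m-n-1}$ for the middle cell,
\item $f(v)\,dv + o(dv)$ for the cell at $v$,
\item $F(v)^{K-m}$ for the bottom cell.
\end{itemize}
Since the samples are labelled, the number of assignments is the multinomial coefficient
\[
\frac{K!}{(n-1)!\,1!\,(m-n-1)!\,1!\,(K-m)!}.
\]
Multiplying, dividing by $du\,dv$, and letting $du, dv \to 0$ gives the claimed density, using continuity of $F$ (and $f$ at $u$ and $v$).

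The main obstacle is making the limiting argument rigorous, that is, justifying that the joint CDF's mixed derivative equals this limit. If needed, I would write $\mathbb P(X^{(n)} \le u, X^{(m)} \le v)$ exactly as a double sum of multinomial terms in $F(u)$ and $F(v)$, then differentiate in $u$ and $v$; the sum telescopes to the single surviving term. In the paper's setting, uniforms with $F(u)=u$, everything is smooth, so the heuristic limit is fully justified. For $u \le v$ the density is zero, since $X^{(n)} \ge X^{(m)}$ forces the event to be empty.
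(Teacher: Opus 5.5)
Your argument is correct. The paper does not prove this proposition itself; it defers to the cited references. Your five-cell multinomial count is the standard derivation. It is also the two-point version of the infinitesimal counting the paper uses in the proof of Proposition~\ref{prop:conditional-second-largest} to get the marginal density of $U^{(i)}$: $i-1$ samples above, one in $[u,u+du)$, and $K-i$ below. You also placed the counts correctly for the descending convention ($n-1$ above $u$, $K-m$ below $v$). That placement is what swaps the roles of $F$ and $1-F$ relative to the usual ascending statement.

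One step needs a sharper bound.
\begin{itemize}
\item Let $N_u$ and $N_v$ be the numbers of samples in $[u,u+du)$ and $[v,v+dv)$.
\item Your five-cell event is contained in the target event.
\item On the target event both small intervals must be occupied. So the difference between the two events lies in $\{N_u\ge 2,\,N_v\ge 1\}\cup\{N_u\ge 1,\,N_v\ge 2\}$.
\item That set has probability $O(du^2\,dv+du\,dv^2)=o(du\,dv)$, which is what you need.
\end{itemize}
The bound you stated, $O(du^2)$ or $O(dv^2)$, does not suffice on its own. After dividing by $du\,dv$ it leaves $O(du/dv)$ or $O(dv/du)$, which does not vanish when $du$ and $dv$ shrink at comparable rates.
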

\begin{proposition}[Conditional Distribution of $U^{(i+1)}$ given $U^{(i)}$]\label{prop:conditional-second-largest}
   Let $U^{(1)} \geq \cdots \geq U^{(K)}$ denote the order statistics of $K$ independent and uniformly distributed random variables on $[0, 1]$, arranged in descending order. For any $1 \leq i < K$, conditioned on $U^{(i)} = u^{(i)}$, $U^{(i+1)}$ is distributed as the largest of $K{-}i$ i.i.d.\ uniform random variables on $[0, u^{(i)}]$.
\end{proposition}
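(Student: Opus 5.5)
We compute the conditional density directly from the two preceding results and recognize it as the density in \prop{prop:uniform-order-stat}. Take $X^{(\cdot)} = U^{(\cdot)}$ with $F(u)=u$ and $f(u)=\mathbb 1_{[0,1]}(u)$. Apply \prop{prop:joint-order-stat} with $n=i$ and $m=i+1$, so that $m-n-1=0$ and the middle factor $(F(u)-F(v))^{0}$ equals $1$. For $0\le v\le u\le 1$ this gives
\begin{equation*}
f_{U^{(i)},U^{(i+1)}}(u,v) = \frac{K!}{(i-1)!\,(K-i-1)!}\,(1-u)^{i-1}\, v^{K-i-1}.
\end{equation*}

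Next we need the marginal density of $U^{(i)}$ alone. We can obtain it by integrating the joint density over $v\in[0,u]$. Since $\int_0^u v^{K-i-1}\,dv = u^{K-i}/(K-i)$, this yields
\begin{equation*}
f_{U^{(i)}}(u) = \frac{K!}{(i-1)!\,(K-i)!}\,(1-u)^{i-1}u^{K-i},
\end{equation*}
which is the standard Beta$(K-i+1,i)$ density of the $i$-th largest uniform. This route avoids citing any further result. Alternatively, one could quote the single order-statistic density directly, but integrating keeps everything self-contained.

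We then apply \prop{prop:conditional-density} with $X=U^{(i+1)}$ and $Y=U^{(i)}$. The factors $(1-u)^{i-1}$ cancel, and the factorials reduce to $(K-i)!/(K-i-1)! = K-i$. This gives
\begin{equation*}
f_{U^{(i+1)}\mid U^{(i)}=u^{(i)}}(v) = (K-i)\, v^{K-i-1}\,(u^{(i)})^{-(K-i)}, \qquad 0\le v\le u^{(i)},
\end{equation*}
and zero otherwise. By \prop{prop:uniform-order-stat} with $K$ replaced by $K-i$ and $\theta=u^{(i)}$, this is exactly the density of the maximum of $K-i$ i.i.d.\ $\mathcal U[0,u^{(i)}]$ variables, which proves the claim.

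The main obstacle is the bookkeeping, not any conceptual step. Two points need care. First, \prop{prop:joint-order-stat} is stated for descending order, so we must check that the exponents are assigned correctly: $(1-F(u))$ counts the $i-1$ values above $U^{(i)}$, and $F(v)$ counts the $K-i-1$ values below $U^{(i+1)}$. Second, the support constraint $v\le u$ must be tracked, since it is what makes the conditional law supported on $[0,u^{(i)}]$. The factor $1-u$ is zero only on the null set $u=1$, so we simply condition on $u^{(i)}\in(0,1)$.
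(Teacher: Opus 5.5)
Your proposal is correct and follows the paper's proof essentially step for step: the joint density from \prop{prop:joint-order-stat} with consecutive indices, the conditional-density ratio, cancellation to $(K-i)\,v^{K-i-1}(u^{(i)})^{-(K-i)}$, and identification via \prop{prop:uniform-order-stat}. The only difference is how you get the marginal of $U^{(i)}$: you integrate the joint density over $v\in[0,u]$, whereas the paper uses a multinomial counting argument on the event $\{u \le U^{(i)} \le u+du\}$; your route is slightly more self-contained and gives the same $\mathrm{Beta}(K-i+1,i)$ density.
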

\begin{proof}
    From \prop{prop:joint-order-stat} with $k=i$ and $l=i+1$, the joint density of $(U^{(i)}, U^{(i+1)})$ for $\mathcal{U}[0, 1]$ variables (where $F(u) = u$ and $f(u) = 1$) is
    \begin{equation}
        f_{U^{(i)}, U^{(i+1)}}(u^{(i)}, u^{(i+1)}) = \frac{K!}{(i-1)!\,(K-i-1)!}\, (1-u^{(i)})^{i-1}\, (u^{(i+1)})^{K-i-1},
    \end{equation}
    since $(F(u^{(i)}) - F(u^{(i+1)}))^{l-k-1} = 1$ as $l - k = 1$.

    To apply \prop{prop:conditional-density}, we need the marginal density of $U^{(i)}$. The event $\{u^{(i)} \leq U^{(i)} \leq u^{(i)} + du\}$ requires that, among the $K$ i.i.d.\ draws, exactly $i-1$ fall in $(u^{(i)} + du, 1]$, exactly one falls in $[u^{(i)}, u^{(i)} + du)$, and the remaining $K - i$ fall in $[0, u^{(i)})$. The number of such assignments is the multinomial coefficient $\frac{K!}{(i-1)!\, 1!\, (K-i)!}$, and the probability of each assignment is \mbox{$\left(1 - u^{(i)} - du\right)^{i-1} \cdot du \cdot \left(u^{(i)}\right)^{K-i}$}. Multiplying these terms gives
    \begin{equation}
        P(u^{(i)} \leq U^{(i)} \leq u^{(i)} + du) = \frac{K!}{(i-1)!\,(K-i)!}\, (1 - u^{(i)} - du)^{i-1} \cdot du \cdot (u^{(i)})^{K-i}.
    \end{equation}
    By definition, $f_{U^{(i)}}(u^{(i)}) = \lim_{du \to 0} \frac{P(u^{(i)} \leq U^{(i)} \leq u^{(i)} + du)}{du}$. Since $(1 - u^{(i)} - du)^{i-1} \to (1 - u^{(i)})^{i-1}$ as $du \to 0$, we obtain
    \begin{equation}
        f_{U^{(i)}}(u^{(i)}) = \frac{K!}{(i-1)!\,(K-i)!}\, (1-u^{(i)})^{i-1}\, (u^{(i)})^{K-i}.
    \end{equation}

    Applying \prop{prop:conditional-density}:
    \begin{equation}
    \begin{aligned}
    f_{U^{(i+1)} \mid U^{(i)}}(u^{(i+1)} \mid u^{(i)})
    & = \frac{f_{U^{(i)}, U^{(i+1)}}(u^{(i)}, u^{(i+1)})}{f_{U^{(i)}}(u^{(i)})} \\
    & = \frac{\frac{K!}{(i-1)!\,(K-i-1)!}\, (1-u^{(i)})^{i-1}\, (u^{(i+1)})^{K-i-1}}{\frac{K!}{(i-1)!\,(K-i)!}\, (1-u^{(i)})^{i-1}\, (u^{(i)})^{K-i}} \\
    & = (K-i)\,\frac{(u^{(i+1)})^{K-i-1}}{(u^{(i)})^{K-i}},
    \end{aligned}
    \end{equation}
    since the factors $K!/(i-1)!$ and $(1-u^{(i)})^{i-1}$ cancel. This is precisely the density of the largest of $K{-}i$ i.i.d.\ $\mathcal{U}[0, u^{(i)}]$ random variables.
\end{proof}
\subsubsection{Generating the \texorpdfstring{$k$}{k} largest Gaussian random variables out of \texorpdfstring{$K$}{K}}
\label{suppl:gen-top-k-gaussians-out-of-K}
\label{sec:sampling-top-k-out-of-K-normal-rvs}
We now show that it is possible to generate the $k$ largest Gaussian random variables out of $K$ via inverse transform sampling (\sec{sec:inverse-transform-sampling}) as follows.

Given a single uniform random variable $U \sim \mathcal U[0, 1]$, one can obtain a standard Gaussian random variable $W = \Phi^{-1}(U)$, where $\Phi$ is the Gaussian CDF, via inverse transform sampling. Now assume we have a sorted list of $K$ uniform random variables $U_1 \geq U_2 \geq ... \geq U_K$. Since $\Phi$ is a monotonically increasing function, the largest uniform random variable, $U_1$, is mapped to the largest Gaussian random variable, i.e. $\Phi^{-1}(U_1)$ is distributed as the largest Gaussian random variable out of $K$.

\paragraph{Sampling The Largest} As shown in \prop{prop:uniform-order-stat}, the CDF of the largest uniform random variable out of $K$ has an analytical solution. For $u \in [0, 1]$, $P(U_1 \leq u) = u^K$, hence it can be generated via inverse transform sampling.

\paragraph{Sampling The Second Largest} Furthermore, the distribution of the second largest, conditioned on $U_1 = u_1$, also admits a closed-form solution (\sec{sec:uniform-second-largest}): for $u_2 \in [0, u_1]$, it is given by $P(U_2 \leq u_2 | U_1 = u_1) = u_2^{K-1}u_1^{-(K-1)}$, i.e. it is distributed as the largest uniform variable out of $K-1$, supported on $[0, u_1]$. 
% Finally, $P(U_3 \leq u_3 | U_2 = u_2, U_1 = u_1) = P(U_3 \leq u_3|U_2 = u_2)$. Indeed, since $U_2 \leq U_1$, it does not matter what value $U_1$ takes, since $U_3 \leq U_2$. Therefore $P(U_3 \leq u_3 | U_2 = u_2) = u_3^{K-2}u_2^{-(K-2)}$, i.e. the largest uniform out of $K-2$.

\paragraph{Sampling The $k^\text{th}$-largest} More generally, the same argument shows that conditioned on $U_i = u_i$, the random variable $U_{i+1}$ is distributed as the largest uniform variable on $[0, u_i]$ out of $K - i + 1$. This shows that we can sample $U_1, ..., U_k$ in decreasing order and without simulating all the $K$ variables. Finally, the $k$ largest $U_i$ can be transformed into the $k$ largest standard Gaussians out of $K$ as $\{ \Phi^{-1}(U_i) \}_{i=1}^k$. In practice, a naive implementation of inverse transform sampling is numerically unstable when $K$ is large. For stability, operations should be implemented in log-space. \algo{alg:reverse-sampling-order-stats} shows the pseudocode of the log-space implementation.
\begin{algorithm}[H]
    \caption{Reverse Sampling from Order Statistics of Gaussian Random Variables. Here $N$ corresponds to $K{-}1$ (the number of zero-mean entries) and $\sigma$ to $\stg = \sqrt{1 - \atgsq}$.}
    \begin{algorithmic}
    \STATE\textbf{Input} Number of variables $N$, standard deviation $\sigma$, number of top values $k$

    \STATE Sample $U_{\ell} \sim \mathcal{U}(0,1)$, for $N \geq \ell \geq N-k+1$
    \STATE Compute the random variables: $R_{\ell} = \frac{\log U_{\ell}}{\ell}$

    \STATE Compute the cumulative sums: $P_{\ell} = \sum_{m=\ell}^{N} R_{m}$
    \STATE Let $V_{\ell} = \exp(P_{\ell})$, the $\ell$-th sample from the (uniform) order statistic.
    \STATE Apply inverse normal CDF: $X^{(\ell)} = \Phi^{-1}(V_{\ell}) \cdot \sigma$

    \STATE \textbf{return} $\{X^{(\ell)}\}_{\ell=N}^{N-k+1}$
    \end{algorithmic}
    \label{alg:reverse-sampling-order-stats}
\end{algorithm}
\subsubsection{Sampling Integers Without Repetitions and Without Shuffling}
\label{sec:sampling-integers-without-repetition-and-shuffling}
Suppose that $\x$ denotes the one-hot vector of category $o$. By symmetry, after applying Gaussian diffusion to $\x$, all entries $(\x_j)_{j \neq o}$ such that follow the exact same distribution. Therefore, they have the same probability of being one of the top $k$ largest random variable.

To implement the curriculum, we must not only approximate the weights of the embedding combination but also select which embeddings to include. As described in~\sec{sec:scalable-curriculum}, we sample $k$ random indices \emph{without repetition} excluding $i$. If the random variable at position $o$, corresponding to the clean token, belongs to the top-$k$, we replace one of the sampled indices with $o$. Otherwise, we use the $k$ sampled indices directly.

A simple way to sample $k$ random indices without repetition is to shuffle a list of $K$ integers and take the first $k$. However, this defeats the purpose of our efficient curriculum, as it requires materializing large tensors. Instead, Floyd's algorithm \citep{bentley1999programming}, given in \algo{alg:floyd-sampling}, samples without repetition while avoiding shuffling. Although sequential with $k$ iterations, it is much faster than shuffling when $k \ll K$.
\begin{algorithm}[H]
   \caption{Floyd's Algorithm for Sampling Without Repetition}
   \begin{algorithmic}
   \STATE \textbf{Input} Number of possible values $N$, number of samples $k$.
   \STATE Initialize array $S$ of size $k$ to store samples
   \FOR{$t = 0$ to $k-1$}
       \STATE Sample $j \sim \text{Randint}(0, N-k+t)$
       \IF{$t > 0$ and $j$ appears in $S[0:t]$}
           \STATE $S[t] \leftarrow N-k+t$ \COMMENT{Use largest remaining value}
       \ELSE
           \STATE $S[t] \leftarrow j$
       \ENDIF
   \ENDFOR
   \STATE \textbf{return} $S$
   \end{algorithmic}
   \label{alg:floyd-sampling}
\end{algorithm}

\subsection{Approximating the Weighted Sum of the Embeddings}
\label{sec:estimation-normalization-constant-of-softmax}
After extracting the top-$k$ values and indices $(\mathcal{K},\mathcal{I})$ from $\wtl$, we approximate the
softmax-weighted embedding by retaining only the $k$ selected rows of the  $\texttt{embeddings}\in\R^{K\times d}$ ($d$ denotes the embedding size):
{\footnotesize
\begin{align}
\softmax(\wtl)^\top \texttt{embeddings} \approx \sum_{i = 1}^k \frac{\exp(\mathcal{K}_i / \tau)}{\tilde{Z}} \texttt{embeddings}[{\mathcal{I}_i}],
\end{align}
}
where $\texttt{embeddings}[j]$ denotes the $j$-th row.

The normalizer $\tilde{Z}$ includes both sampled (top-$k$) and unsampled terms. To account for this contribution, let
\begin{align}\label{supp:eqn:mu}
    \mu = \mathbb{E}[\exp(X/\tau) \mid X < \mathcal{K}_k]
\end{align}
denote the expectation that a r.v. $X$ with pmf $\N(0, \stgsq)$ is less than $\mathcal{K}_k$. 

We approximate $\tilde{Z}$ via two cases. 
Let $o$ be the index corresponding to the clean-token category in $\xl$, and let $\tilde{w} \sim \N(\atg, \stgsq)$.

\noindent\colorbox{case1}{Case 1.} If $o$ is not among the top-$k$ indices (i.e., $o \notin \mathcal{I}$, and consequently $\tilde{w} \notin \mathcal{K}$), then $\tilde{Z}$ includes: (i) the $k$  terms in $\mathcal{K}$, (ii) the explicit contribution from index $o$, and (iii) the remaining $K-k-1$ unsampled terms, each approximated by $\mu$. Thus,
{
\begin{align} \label{eqn:normalizer-case-1}
    \tilde{Z} \approx \underbrace{\sum_{i=1}^k \exp \left(\frac{\mathcal{K}_i}{\tau}\right)}_{\text{top-$k$ terms}} + \underbrace{\exp \left(\frac{\tilde{w}}{\tau}\right)}_{\text{index $o$}} + \underbrace{(K - k - 1) \mu}_{\text{unsampled terms}}.
\end{align}
}
\noindent \colorbox{case2}{Case 2.} If $o$ is among the top-$k$ indices (i.e., $o \in \mathcal{I}$, and hence, $\tilde{w} \in \mathcal{K}$), its contribution is already included in the top-$k$ sum, leaving $K-k$ unsampled terms. Hence,
{
\begin{align} \label{eqn:normalizer-case-2}
    \tilde{Z} \approx \underbrace{\sum_{i=1}^k \exp \left(\frac{\mathcal{K}_i}{\tau}\right)}_{\text{top-$k$ terms}} + \underbrace{(K - k) \mu}_{\text{unsampled terms}}.
\end{align}
}
Next, we derive a closed-form expression for $\mu$ in~\Eqn{supp:eqn:mu}. 

\begin{equation}
    \mu = \log \mathbb{E}[\exp(X) \mid X < \mathcal{K}_k] = \frac{\stgsq}{2 \tau^2} - \log \Phi\left(\frac{\mathcal{K}_k}{\stg}\right) + \log \Phi\left( \frac{\mathcal{K}_k - \stgsq / \tau}{\stg} \right).
\end{equation}
\begin{proof}
    \begin{align}\label{eqn:mu}
         \mu = & \log \mathbb{E}\left[\exp\left(\frac{X}{\tau}\right) \mid X < \mathcal{K}_k\right]  \nonumber \\
         & \text{\footnotesize Applying change of variables $\bar{X} = X / \tau$; we get, } \nonumber \\
         &= \log \mathbb{E}[\exp(\bar{X}) \mid \bar{X} < \mathcal{K}_k / \tau]  \nonumber \\
        &= \log \int_{-\infty}^{\mathcal{K}_k / \tau} \exp(x) \frac{f_{\bar{X}}(x)}{\mathbb{P}(\bar{X} < \mathcal{K}_k / \tau)}dx \nonumber \\
        & \text{\footnotesize Substituting $\sigma := \stg / \tau$, we get:} \nonumber \\
        &= \log \left[\frac{1}{\Phi(\mathcal{K}_k/\stg)} \int_{-\infty}^{\mathcal{K}_k / \tau} \exp(x) \frac{1}{\sqrt{2\pi\sigma^2}}\exp\left(-\frac{x^2}{2\sigma^2}\right)dx \right] \nonumber \\
        &= \log \left[\frac{1}{\Phi(\mathcal{K}_k/\stg)} \frac{1}{\sqrt{2\pi\sigma^2}} \int_{-\infty}^{\mathcal{K}_k / \tau} \exp\left(-\frac{x^2}{2\sigma^2} + x\right)dx \right] \nonumber \\
        &= \log \left[\frac{1}{\Phi(\mathcal{K}_k/\stg)} \frac{1}{\sqrt{2\pi\sigma^2}} \int_{-\infty}^{\mathcal{K}_k / \tau} \exp\left(-\frac{1}{2\sigma^2}(x^2 - 2\sigma^2x + \sigma^4 - \sigma^4)\right)dx \right] \nonumber \\
        &= \log \left[\frac{\exp(\sigma^2/2)}{\Phi(\mathcal{K}_k/\stg)} \frac{1}{\sqrt{2\pi\sigma^2}} \int_{-\infty}^{\mathcal{K}_k / \tau} \exp\left(-\frac{1}{2\sigma^2}(x - \sigma^2)^2\right) dx \right] \nonumber \\
        &= \log \left[\frac{\exp(\sigma^2/2)}{\Phi(\mathcal{K}_k/\stg)} \Phi\left( \frac{\mathcal{K}_k / \tau - \sigma^2}{\sigma} \right) \right] \nonumber \\
        &= \frac{\sigma^2}{2} - \log \Phi\left(\frac{\mathcal{K}_k}{\stg}\right) + \log \Phi\left( \frac{\mathcal{K}_k / \tau - \sigma^2}{\sigma} \right) \nonumber \\
        & \text{\footnotesize Substituting back $\sigma := \stg / \tau$, we get:} \nonumber \\
        &= \frac{\stgsq}{2 \tau^2} - \log \Phi\left(\frac{\mathcal{K}_k}{\stg}\right) + \log \Phi\left( \frac{\mathcal{K}_k - \stgsq / \tau}{\stg} \right)
    \end{align}
This concludes our proof.
\end{proof}
Finally, substituting the value of $\mu$ from~\Eqn{eqn:mu} into~\Eqn{eqn:normalizer-case-1} and~\Eqn{eqn:normalizer-case-2}, we obtain:
{\footnotesize
\begin{align} 
    \tilde{Z} \approx & \underbrace{\sum_{i=1}^k \exp \left(\frac{\mathcal{K}_i}{\tau}\right)}_{\text{top-$k$ terms}} + \underbrace{\delta \exp\left(\frac{\tilde w}{\tau}\right)}_{\text{clean token}} + \underbrace{(K - k - \delta) \exp \left (\frac{\stgsq}{2\tau^2} - \log \Phi\!\left(\frac{\mathcal{K}_k}{\stg}\right) + \log \Phi\!\left(\frac{\mathcal{K}_k - \stgsq/\tau}{\stg}\right) \right)}_{\text{unsampled zero-mean terms}} ,
\end{align}}

% \subham{Resume.}
% %
% Applying a $\log$ on both sides yields 

% \begin{equation}
%     \log \mathbb{E}[\exp(X) \mid X < c] = \frac{\sigma^2}{2} - \log \Phi\left(\frac{\mathcal{K}_k}{\sigma}\right) + \log \Phi \left(\frac{\mathcal{K}_k - \sigma^2}{\sigma}\right),
% \end{equation}
% which recovers the second term in \Eqn{eq:curriculum-normalization-main-body} after substituting $\sigma = \stg / \tau$ and $c = \mathcal{K}_k / \tau$, where $\stg$ is the Gaussian schedule standard deviation, $\tau$ is the softmax temperature, and $\mathcal{K}_k$ is the $k$-th largest sampled value.
% %
\subsection{Efficient computation of \texorpdfstring{$\TransOp$}{T} During Training}

The curriculum objective in~\Eqn{eqn:curriculum-train-loss} requires evaluating the diffusion transformation operator $\TransOp(\cdot)$. Directly computing $\TransOp$ via~\Eqn{eqn:coefficient_relation} is prohibitively expensive during training; \citet{sahoo2025the} therefore precompute and cache many $(\at,\atnew)$ pairs, which is cumbersome. Instead, we compute $\TransOp(\cdot)$ on the fly using its Taylor expansion, detailed below. This derivation relies on two propositions that justify swapping the order of (i) summation and integration~(\prop{prop:dominated-convergence}) and (ii) differentiation and integration~(\prop{cor:swap-integral-diff}).

\begin{proposition}[First Corollary of the Dominated Convergence Theorem (\citet{Folland1999-lc}, Theorem 2.25)]\label{prop:dominated-convergence}
    If the sum $\sum_{n=0}^{\infty} f_n(x)$ exists for all $x$ and there exists an integrable function $g(x)$ such that
    \begin{equation}
        \left|\sum_{n=0}^k f_n(x)\right| \leq g(x)
    \end{equation}
    for all $k$, then
    \begin{equation}
        \int_{-\infty}^{\infty} \sum_{n=0}^{\infty} f_n(x)dx = \sum_{n=0}^{\infty} \int_{-\infty}^{\infty} f_n(x)dx.
    \end{equation}
\end{proposition}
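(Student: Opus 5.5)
The statement is the first corollary of dominated convergence for series, so the plan is to apply the Dominated Convergence Theorem directly to the sequence of partial sums. Define $S_k(x) := \sum_{n=0}^k f_n(x)$ for each $k$. By hypothesis the series converges for every $x$, so $S_k(x) \to S(x) := \sum_{n=0}^\infty f_n(x)$ pointwise on $\R$. We also have the uniform-in-$k$ domination $|S_k(x)| \le g(x)$ with $g$ integrable. I will tacitly take each $f_n$ to be measurable and integrable, since the right-hand side $\sum_n \int f_n$ must make sense. This is implicit in the statement and matches how it is used later, where the $f_n$ are Taylor terms of $\Phi^{K-1}$ multiplied by a Gaussian density.

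The key steps, in order, are as follows.

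First, note that $S$ is measurable, being a pointwise limit of measurable functions.

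Second, pass to the limit in the bound $|S_k| \le g$ to get $|S| \le g$, so $S$ is integrable.

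Third, invoke the Dominated Convergence Theorem for the sequence $(S_k)$ dominated by $g$. This gives
\begin{equation*}
\lim_{k\to\infty}\int_{-\infty}^{\infty} S_k(x)\,dx = \int_{-\infty}^{\infty} S(x)\,dx .
\end{equation*}

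Fourth, use linearity of the integral on finite sums: $\int S_k = \sum_{n=0}^k \int f_n$, where each term is finite because $f_n$ is integrable. The left-hand limit is therefore exactly the partial sums of $\sum_n \int f_n$. That series thus converges, and its value is $\int S = \int \sum_n f_n$, which is the claimed identity.

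There is no real obstacle here. The only delicate point is bookkeeping: we need the $f_n$ to be individually integrable so that linearity on finite sums is legitimate. If one prefers not to assume this, it follows once $S_k$ is integrable for every $k$, since $f_n = S_n - S_{n-1}$ and $|S_n|, |S_{n-1}| \le g$. So every $f_n$ is in fact dominated by $2g$ and is automatically integrable given measurability. I would state this remark explicitly and then cite \citet{Folland1999-lc}, Theorem 2.24 (DCT), for the limit interchange.
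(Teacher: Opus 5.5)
Your proof is correct and is the standard argument. The paper gives no proof of its own beyond the citation, and the statement as written is exactly the Dominated Convergence Theorem applied to the partial sums $S_k$ with dominating function $g$, which is what you do; your observation that $|f_n| = |S_n - S_{n-1}| \le 2g$ makes each measurable $f_n$ integrable is a useful addition.

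Your citation of Folland's Theorem 2.24 is in fact the more apt one, since his Theorem 2.25 assumes $\sum_n \int |f_n| < \infty$ rather than domination of the partial sums. As a minor aside, in the paper's later use the $f_n$ are the Taylor terms of $e^{\nu_t z}$ multiplied by $e^{-\nu_t^2/2}\phi(z)\Phi^{K-1}(z)$, not Taylor terms of $\Phi^{K-1}$.
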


\begin{proposition}[Second Corollary of the Dominated Convergence Theorem (\citet{Folland1999-lc}, Theorem 2.27)]\label{cor:swap-integral-diff}
    Let $f(x,t)$ be differentiable in $t$ and suppose there exists a function $g(x,t)$ such that:
    \begin{enumerate}
        \item $\left|\frac{\partial f(x,t)}{\partial t}\right| \leq g(x,t_0)$ for all $x$ and $t$ in some neighborhood $|t - t_0| \leq \delta_0$
        \item $\int_{-\infty}^{\infty} g(x,t)dx < \infty$ for all $t$
    \end{enumerate}
    Then
    \begin{equation}
        \frac{d}{dt} \int_{-\infty}^{\infty} f(x,t)dx = \int_{-\infty}^{\infty} \frac{\partial f(x,t)}{\partial t}dx
    \end{equation}
\end{proposition}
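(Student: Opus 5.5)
This is Folland's Theorem 2.27 (differentiation under the integral sign), so it follows from the dominated convergence theorem via the mean value theorem. Fix $t_0$ and a sequence $h_n \to 0$ with $0<|h_n|\le \delta_0$. Consider the difference quotients
\begin{equation*}
    q_n(x) := \frac{f(x,t_0+h_n) - f(x,t_0)}{h_n}.
\end{equation*}
By differentiability of $f$ in $t$, $q_n(x) \to \partial_t f(x,t_0)$ pointwise in $x$. Once we know the integrals are finite, linearity of the integral gives
\begin{equation*}
    \frac{1}{h_n}\left(\int f(x,t_0+h_n)\,dx - \int f(x,t_0)\,dx\right) = \int q_n(x)\,dx .
\end{equation*}

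The key step is a uniform integrable bound on $q_n$. By the mean value theorem, for each $x$ there is $\xi_{n,x}$ between $t_0$ and $t_0+h_n$ with $q_n(x) = \partial_t f(x,\xi_{n,x})$. Since $|\xi_{n,x}-t_0|\le\delta_0$, hypothesis 1 gives $|q_n(x)| \le g(x,t_0)$. Hypothesis 2 makes $g(\cdot,t_0)$ integrable. The dominated convergence theorem then yields $\int q_n \to \int \partial_t f(x,t_0)\,dx$. This limit is independent of the chosen sequence $h_n$, so it is the derivative of $t\mapsto\int f(x,t)\,dx$ at $t_0$, which proves the claim.

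The main obstacle is a measurability and integrability gap in the statement as written. We need $f(\cdot,t)$ integrable for $t$ near $t_0$ and measurable in $x$. The latter also makes $\partial_t f(\cdot,t_0)$ measurable, as a pointwise limit of the measurable $q_n$. Integrability can be derived: $|f(x,t)|\le |f(x,t_0)| + \delta_0\, g(x,t_0)$ by the same mean value bound. So it suffices to add the standing assumption, as in Folland, that $f(\cdot,t_0)$ is integrable; I would state this explicitly at the start of the proof.
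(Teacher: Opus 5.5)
Your argument is correct and is exactly the proof of Folland's Theorem~2.27(b), which the paper invokes by citation without reproducing: the mean value theorem bounds the difference quotients by $g(\cdot,t_0)$, and dominated convergence along arbitrary sequences $h_n\to 0$ then gives the derivative at $t_0$. The standing assumption you add, that $f(\cdot,t)$ is measurable and $f(\cdot,t_0)$ is integrable, is genuinely needed and is exactly Folland's hypothesis, which the paper's statement omits; for instance, $f\equiv 1$ with $g\equiv 0$ satisfies both stated conditions while $\int f\,dx=\infty$.
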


\subsubsection{Series Representation of \texorpdfstring{$\TransOp$}{T} and \texorpdfstring{$\partial_t \TransOp$}{dt T}}
\label{sec:series-representation-of-diffusion-operator}
Evaluating the series expansion on the fly is faster than precomputing and caching $\TransOp$ for many $(\at, \atnew)$ pairs as done by \citet{sahoo2025the}. We can further speed the evaluation up by fitting a low-degree polynomial to the series, which we use in practice (see \supp{sec:polynomial-approximation-of-diffusion-operator-explanation} for the approximation error analysis). Below, we derive the series expansion for $\TransOp$ (\prop{prop:series-diffusion-transformation-operator}) and its time-derivative $\partial_t \TransOp$ (\prop{prop:series-derivative-diffusion-transformation-operator}):
\begin{proposition}[Series Expansion of the Diffusion Transformation Operator]\label{prop:series-diffusion-transformation-operator}
    The diffusion transformation operator $\TransOp$ can be expressed as:
    \begin{equation}
        \TransOp(\tilde{\alpha}_t) = \frac{K}{K-1} \left[ e^{-\nu_t^2/2} \sum_{n=0}^{\infty} \frac{\nu_t^n}{n!} M_n - \frac{1}{K}\right]
    \end{equation}

    $\nu_t = \frac{\tilde{\alpha}_t}{\sqrt{1 - \tilde{\alpha}_t^2}}$ and $M_n = \int_{-\infty}^{\infty} z^n\phi(z)\Phi^{K-1}(z)dz$.
\end{proposition}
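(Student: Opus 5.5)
Starting from the definition of $\TransOp$ in \Eqn{eqn:coefficient_relation}, the only term to expand is the integral $I(\nu_t) = \int_{-\infty}^{\infty} \phi(z - \nu_t)\Phi^{K-1}(z)\,\d z$, with $\nu_t = \tilde\alpha_t/\sqrt{1-\tilde\alpha_t^2}$. We factor the shifted Gaussian density as
\begin{equation*}
\phi(z-\nu_t) = \phi(z)\, e^{\nu_t z - \nu_t^2/2}.
\end{equation*}
This identity follows from expanding $-(z-\nu_t)^2/2 = -z^2/2 + \nu_t z - \nu_t^2/2$. It gives $I(\nu_t) = e^{-\nu_t^2/2}\int_{-\infty}^{\infty} e^{\nu_t z}\phi(z)\Phi^{K-1}(z)\,\d z$. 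Next, we expand $e^{\nu_t z} = \sum_{n\ge 0} \nu_t^n z^n/n!$, which converges pointwise for every $z$. This produces the integrand $\sum_n f_n(z)$ with $f_n(z) = \frac{\nu_t^n z^n}{n!}\phi(z)\Phi^{K-1}(z)$. Once we swap sum and integral, each term integrates to $\frac{\nu_t^n}{n!}M_n$, and substituting back into \Eqn{eqn:coefficient_relation} yields the stated formula.

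The main step is justifying the swap, and we do it with \prop{prop:dominated-convergence}. For every partial sum,
\begin{equation*}
\Bigl|\sum_{n=0}^k f_n(z)\Bigr| \le \sum_{n=0}^\infty \frac{|\nu_t|^n |z|^n}{n!}\phi(z)\Phi^{K-1}(z) \le e^{|\nu_t||z|}\phi(z) =: g(z).
\end{equation*}
The second inequality uses $0\le \Phi^{K-1}\le 1$. The dominating function $g$ is integrable because
\begin{equation*}
\int e^{|\nu_t||z|}\phi(z)\,\d z \le \int \bigl(e^{\nu_t z}+e^{-\nu_t z}\bigr)\phi(z)\,\d z = 2e^{\nu_t^2/2} < \infty,
\end{equation*}
which is twice the Gaussian moment generating function. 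This also shows that each $M_n$ is finite, since $|M_n| \le \int |z|^n\phi(z)\,\d z<\infty$.

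We need $\tilde\alpha_t<1$ so that $\nu_t$ is finite. The case $\tilde\alpha_t = 1$ corresponds to $t=0$ and can be treated as a limit or excluded. The rest is bookkeeping: pull out the constant $e^{-\nu_t^2/2}$, insert the series, and keep the $\frac{K}{K-1}[\,\cdot - \frac1K]$ wrapper unchanged. We do not expect any obstacle beyond exhibiting the dominating function above.
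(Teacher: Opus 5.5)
Your proposal is correct and follows the same route as the paper: factor $\phi(z-\nu_t)=\phi(z)e^{\nu_t z-\nu_t^2/2}$, expand $e^{\nu_t z}$ in its Taylor series, and interchange sum and integral via \prop{prop:dominated-convergence}. The only difference is that you exhibit the dominating function $e^{|\nu_t||z|}\phi(z)$ and check its integrability. The paper merely asserts that the corollary applies because the series is the Taylor series of the exponential, so your version fills in that justification.
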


\begin{proof}
    Recall that the standard Gaussian PDF is given by
    \begin{equation}
        \phi(x) = \frac{1}{\sqrt{2\pi}} e^{-x^2/2}.
    \end{equation}
    For notational convenience, let $\nu_t = \frac{\tilde{\alpha}_t}{\sqrt{1 - \tilde{\alpha}_t^2}}$. We can rewrite $\phi(x-\nu_t)$ in terms of $\phi(x)$:
    \begin{equation}
        \phi(x - \nu_t) = \frac{1}{\sqrt{2\pi}} e^{-(x-\nu_t)^2/2} = \frac{1}{\sqrt{2\pi}} e^{-(x^2 - 2\nu_t x + \nu_t^2)/2} = \phi(x)e^{\nu_t x} e^{ - \nu_t^2/2}.
    \end{equation}
    Using the definition of the infinite series of $e^x$, we can expand $e^{\nu_t x}$:
    \begin{equation}
        \phi(x - \nu_t) = \phi(x) e^{-\nu_t^2/2} \sum_{n=0}^\infty \frac{\nu_t^n x^n}{n!}.
    \end{equation}
    Substituting this into our original integral:
    \begin{equation}
        \begin{aligned}
            \int_{-\infty}^{\infty} \phi\left(z - \nu_t\right)\Phi^{K-1}(z)dz & = \int_{-\infty}^{\infty} \phi(z)e^{-\nu_t^2/2} \sum_{n=0}^\infty \frac{\nu_t^n z^n}{n!} \Phi^{K-1}(z)dz \\
        \end{aligned}
    \end{equation}
    Since \prop{prop:dominated-convergence} is satisfied, as the sum is the Taylor series of the exponential function, we can exchange the order of integration and summation. This leads to our final result:
    \begin{equation}
        \begin{aligned}
            \int_{-\infty}^{\infty} \phi\left(z - \nu_t\right)\Phi^{K-1}(z)dz & = e^{-\nu_t^2/2} \sum_{n=0}^\infty \frac{\nu_t^n}{n!} \int_{-\infty}^{\infty} z^n \phi(z) \Phi^{K-1}(z)dz \\
            & = e^{-\nu_t^2/2} \sum_{n=0}^\infty \frac{\nu_t^n}{n!} M_n.
        \end{aligned}
    \end{equation}
\end{proof}

\paragraph{Advantages} At this point, one might ask what is gained by expressing $\TransOp$ as a series expansion. There are two key advantages. First, since $\TransOp$ is intractable, \citet{sahoo2024simpleeffectivemaskeddiffusion} resort to precomputing 100k evaluations, which can take up to two hours with the GPT-2 tokenizer. Second, they approximate the time derivative using finite differences. Crucially, observe that $M_n$ and $I_n$ in \prop{prop:series-diffusion-transformation-operator} and \ref{prop:series-derivative-diffusion-transformation-operator} are the only intractable components of the series expansion, and they are independent of the input $\tilde\alpha_t$. We find that the terms of the series decay to zero after roughly 150 terms (with slower decay as $t \to 1$). Thus, instead of pre-computing 100k evaluations of $\TransOp$, it suffices to cache $M_n$ and $I_n$ for $n < 150$. In practice, this takes only a few seconds and can be performed at the start of training.

\begin{proposition}[Time-Derivative of the Diffusion Transformation Operator]\label{prop:series-derivative-diffusion-transformation-operator}
    The time-derivative of the diffusion transformation operator $\TransOp$ can be expressed as:
    \begin{equation}
        \begin{aligned}
        \frac{d}{dt} \TransOp(\tilde{\alpha}_t) & = \frac{K \cdot e^{- \nu_t^2 / 2}}{K - 1} \frac{\tilde \alpha_t'}{(1 - \tilde \alpha_t^2)^{3/2}} \sum_{n=0}^\infty \frac{\nu_t^n}{n!} \left[ I_n - \nu_t M_n \right]
        \end{aligned}
    \end{equation}
    where $\nu_t$ and $M_n$ are defined as in \prop{prop:series-diffusion-transformation-operator}. Finally, $I_n = \int_{-\infty}^{\infty} z^{n+1}\phi(z)\Phi^{K-1}(z)dz$, and $\tilde \alpha_t'$ denotes the time-derivative of the Gaussian noise schedule $\tilde \alpha_t$.
\end{proposition}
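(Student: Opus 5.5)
We differentiate the integral representation of $\TransOp$ from \Eqn{eqn:coefficient_relation} directly, and only then expand in a series exactly as in \prop{prop:series-diffusion-transformation-operator}. Write $\TransOp(\tilde\alpha_t) = \frac{K}{K-1}\big[G(\nu_t) - \frac1K\big]$ with $G(\nu) = \int_{-\infty}^{\infty}\phi(z-\nu)\Phi^{K-1}(z)\,\d z$ and $\nu_t = \tilde\alpha_t/\sqrt{1-\tilde\alpha_t^2}$. By the chain rule,
\begin{equation*}
\frac{d}{dt}\TransOp(\tilde\alpha_t) = \frac{K}{K-1}\, G'(\nu_t)\,\frac{d\nu_t}{dt}.
\end{equation*}
A direct computation gives $\frac{d\nu_t}{d\tilde\alpha_t} = (1-\tilde\alpha_t^2)^{-1/2} + \tilde\alpha_t^2(1-\tilde\alpha_t^2)^{-3/2} = (1-\tilde\alpha_t^2)^{-3/2}$. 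Hence $\frac{d\nu_t}{dt} = \tilde\alpha_t'(1-\tilde\alpha_t^2)^{-3/2}$, which is the prefactor in the claim.

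Next we compute $G'(\nu)$ by differentiating under the integral using \prop{cor:swap-integral-diff}. Since $\partial_\nu \phi(z-\nu) = (z-\nu)\phi(z-\nu)$, we get $G'(\nu) = \int (z-\nu)\phi(z-\nu)\Phi^{K-1}(z)\,\d z$. For the domination hypothesis, restrict to $|\nu-\nu_0|\le\delta_0$ and bound $|(z-\nu)\phi(z-\nu)|\Phi^{K-1}(z) \le (|z|+|\nu_0|+\delta_0)\sup_{|\nu-\nu_0|\le\delta_0}\phi(z-\nu)$. This bound is a Gaussian-type tail times a linear factor, so it is integrable.

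Now substitute the identity already used in the proof of \prop{prop:series-diffusion-transformation-operator}, namely $\phi(z-\nu)=\phi(z)e^{-\nu^2/2}\sum_n \nu^n z^n/n!$. This gives
\begin{equation*}
G'(\nu) = e^{-\nu^2/2}\int (z-\nu)\phi(z)\Phi^{K-1}(z)\sum_{n=0}^\infty \frac{\nu^n z^n}{n!}\,\d z.
\end{equation*}
Exchanging sum and integral via \prop{prop:dominated-convergence} turns the $z\cdot z^n$ term into $I_n$ and the $\nu z^n$ term into $\nu M_n$. This yields $G'(\nu)=e^{-\nu^2/2}\sum_n \frac{\nu^n}{n!}[I_n-\nu M_n]$, and multiplying by the prefactors gives the statement.

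The only genuinely delicate step is justifying the sum/integral swap. The hypothesis of \prop{prop:dominated-convergence} requires bounding the partial sums uniformly in the truncation level, and these partial sums are not sign-definite because $z^n$ alternates sign for $z<0$. We would handle this by bounding the absolute partial sums: $\big|\sum_{n\le N}\nu^n z^n/n!\big|\le e^{|\nu||z|}$, so the $N$-th partial integrand is dominated by $g(z)=(|z|+|\nu|)\phi(z)e^{|\nu||z|}$. This $g$ is integrable, since $\phi(z)e^{|\nu||z|}$ is bounded by a shifted Gaussian times a constant and $\Phi^{K-1}\le1$.

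The same bound also shows that each $M_n$ and $I_n$ is finite, so the series is well-defined.
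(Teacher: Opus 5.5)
Your proposal is correct and follows essentially the same route as the paper: differentiate under the integral sign via the dominated-convergence corollary, rewrite $\phi(z-\nu)=\phi(z)e^{\nu z-\nu^2/2}$, expand the exponential, and exchange sum and integral to obtain $I_n-\nu M_n$. Your version is tighter on two points. Your dominating functions do not depend on the parameter, whereas the paper's $g(z,t)=C|z-\nu_t|\phi(z-\nu_t)$ does. You also make the chain-rule factor $d\nu_t/dt=\tilde\alpha_t'(1-\tilde\alpha_t^2)^{-3/2}$ explicit, whereas the paper's final display only computes the derivative with respect to $\nu_t$.
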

\begin{proof}
    We want to compute
    \begin{equation}
    \begin{aligned}
        \frac{d}{d\nu_t} \TransOp(\tilde{\alpha}_t) = \frac{K}{K - 1} \frac{d}{d\nu_t} \int \phi(z - \nu_t) \Phi^{K-1}(z) dz.
    \end{aligned}
    \end{equation}
    To justify passing the derivative under the integral, we verify the conditions of \prop{cor:swap-integral-diff}. Define
    \begin{equation}
        f(z, t) = \phi\left(z - \frac{\tilde{\alpha}_t}{\sqrt{1-\tilde{\alpha}_t^2}}\right)\Phi^{K-1}(z) = \phi\left(z - \nu_t \right)\Phi^{K-1}(z),
    \end{equation}
    which has time derivative
    \begin{equation}
        \frac{\partial f(z, t)}{\partial t} = \frac{(z - \nu_t) \phi(z - \nu_t)}{(1 - \tilde{\alpha}_t^2)^{3/2}} \Phi^{K-1}(z).
    \end{equation}
    We need to find a suitable dominating function $g$. Let $1 > \delta_0 > 0$ and choose $t_0 = \frac{1 - \delta_0}{2}$. When $|t - t_0| \leq \delta_0$, we have $t \in [t_0 - \delta_0, t_0 + \delta_0]$. Since $t_0 - \delta_0 < t_0 < 1$ and $t_0 + \delta_0 = \frac{1 - \delta_0}{2} + \delta_0 < 1$, we are guaranteed that $t < 1$. This ensures that $\nu_t$ is finite.
    Because $\tilde{\alpha}_{t}\in[0,1)$ when $t < 1$, there exists a constant $C$, such that
    \begin{equation}
        C := \max_{|t-t_{0}|\le \delta_{0}}\frac{1}{(1-\tilde{\alpha}_{t}^{2})^{3/2}}<\infty.
    \end{equation}
    For $z\in\mathbb R$ and $|t-t_{0}|\le\delta_{0}$, we can bound the absolute value of the time derivative of $f$ as follows:
    \begin{align*}
        \left|\frac{\partial f(z,t)}{\partial t}\right|
            &=\frac{|z-\nu_t|}{(1-\tilde{\alpha}_{t}^{2})^{3/2}}\,\phi(z-\nu_t)\,\Phi^{K-1}(z) \\
            &\leq C|z - \nu_t| \phi(z - \nu_t) = g(z, t).
    \end{align*}
    Finally, for all $t \in [0, 1)$:
    \begin{equation}
    \begin{aligned}
        \int_{- \infty}^{\infty} g(z, t) dz & = C \int_{- \infty}^{\infty} |z - \nu_t| \phi(z - \nu_t) dz = C \int_{- \infty}^{\infty} |z| \phi(z) dz \\
        & = C \int_{- \infty}^{\infty}|z| \phi(z) dz = 2 C \int_0^{\infty} z \phi(z) dz \\
        & = 2 C \int_0^{\infty} z \cdot \frac{1}{\sqrt{2\pi}} e^{-z^2/2} dz \\
        & = \frac{2C}{\sqrt{2\pi}} \int_0^{\infty} z e^{-z^2/2} dz \\
        & = \frac{2C}{\sqrt{2\pi}} \cdot 1 = C \sqrt{\frac{2}{\pi}} < \infty,
    \end{aligned}
    \end{equation}
    where we used the substitution $u = z^2/2$ in the integral $\int_0^{\infty} z e^{-z^2/2} dz$ to obtain $\int_0^{\infty} e^{-u} du = 1$. The conditions of \prop{cor:swap-integral-diff} are satisfied, so we can pass the derivative under the integral.

    Applying the derivative under the integral sign and using the identity $\phi(z - \nu_t) = \phi(z) e^{\nu_t z} e^{-\nu_t^2/2}$, we have:
    \begin{equation}
    \begin{aligned}
        \frac{d}{d\nu_t} \phi(z - \nu_t) &= \phi(z) \frac{d}{d\nu_t}[e^{\nu_t z - \nu_t^2/2}] \\
        &= \phi(z) e^{\nu_t z - \nu_t^2/2} (z - \nu_t) \\
        &= (z - \nu_t)\phi(z - \nu_t)
    \end{aligned}
    \end{equation}
    Therefore:
    \begin{equation}
    \begin{aligned}
        \frac{d}{d\nu_t} \TransOp(\tilde{\alpha}_t) = \frac{K}{K - 1} \int_{-\infty}^\infty (z - \nu_t) \phi(z - \nu_t) \Phi^{K-1}(z) dz
    \end{aligned}
    \end{equation}
    Now using the Taylor series of $\phi(z - \nu_t)$, found earlier, and inverting the sum and integral as before, we find
    \begin{equation}
    \begin{aligned}
        \frac{d}{d\nu_t} \TransOp(\tilde{\alpha}_t) & = \frac{K}{K - 1} \int_{-\infty}^\infty (z - \nu_t) \phi(z) e^{\nu_t z} e^{- \nu_t^2 / 2} \Phi^{K-1}(z) dz \\
        & = \frac{K \cdot e^{- \nu_t^2 / 2}}{K - 1} \sum_{n=0}^\infty \frac{\nu_t^n}{n!} \left[ \int_{- \infty}^\infty z^{n+1} \phi(z) \Phi^{K-1}(z) dz - \nu_t \int_{- \infty}^\infty z^n \phi(z) \Phi^{K-1}(z) dz \right] \\
        & = \frac{K \cdot e^{- \nu_t^2 / 2}}{K - 1} \sum_{n=0}^\infty \frac{\nu_t^n}{n!} \left[ I_n - \nu_t M_n \right].
    \end{aligned}
    \end{equation}
    where $I_n = \int_{-\infty}^\infty z^{n+1} \phi(z) \Phi^{K-1}(z) dz$ and $M_n = \int_{-\infty}^\infty z^n \phi(z) \Phi^{K-1}(z) dz$.
\end{proof}
\subsubsection{Polynomial Approximation of \texorpdfstring{$\TransOp$}{T}}
\label{sec:polynomial-approximation-of-diffusion-operator-explanation}
\begin{figure}[t]
    \centering
    \begin{subfigure}{0.32\linewidth}
        \centering
        \includegraphics[width=\linewidth]{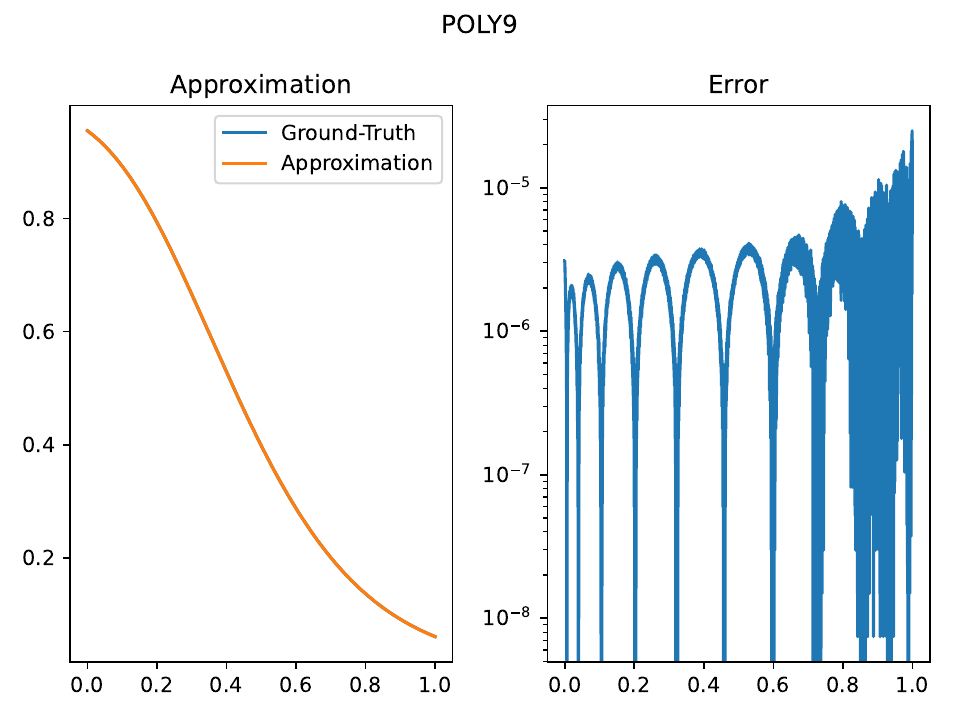}
    \end{subfigure}
    \hfill
    \begin{subfigure}{0.32\linewidth}
        \centering
        \includegraphics[width=\linewidth]{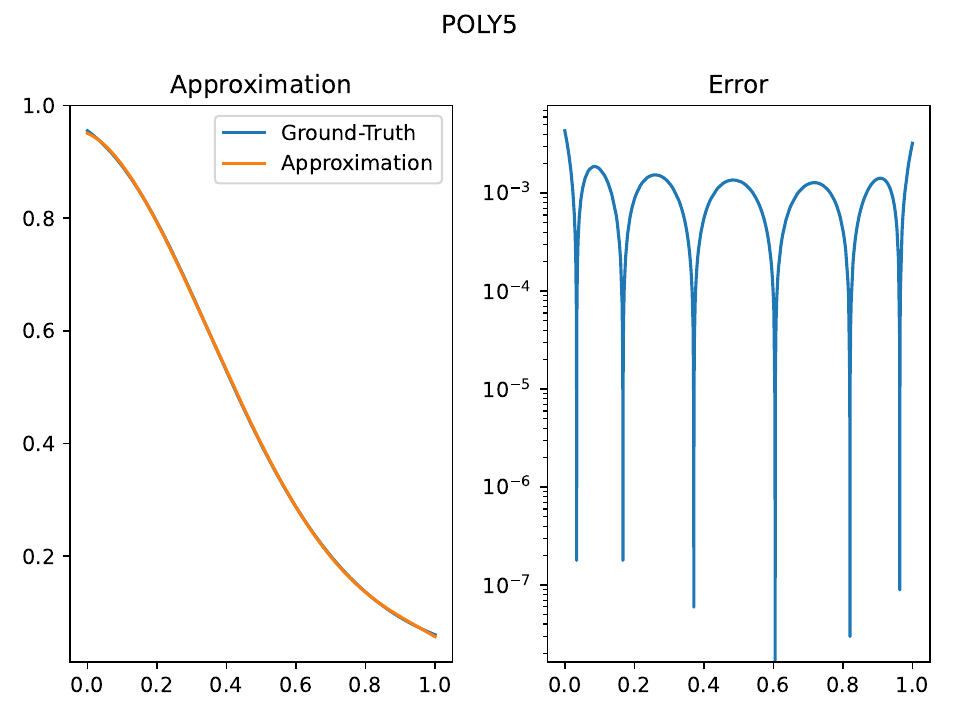}
    \end{subfigure}
    \hfill
    \begin{subfigure}{0.32\linewidth}
        \centering
        \includegraphics[width=\linewidth]{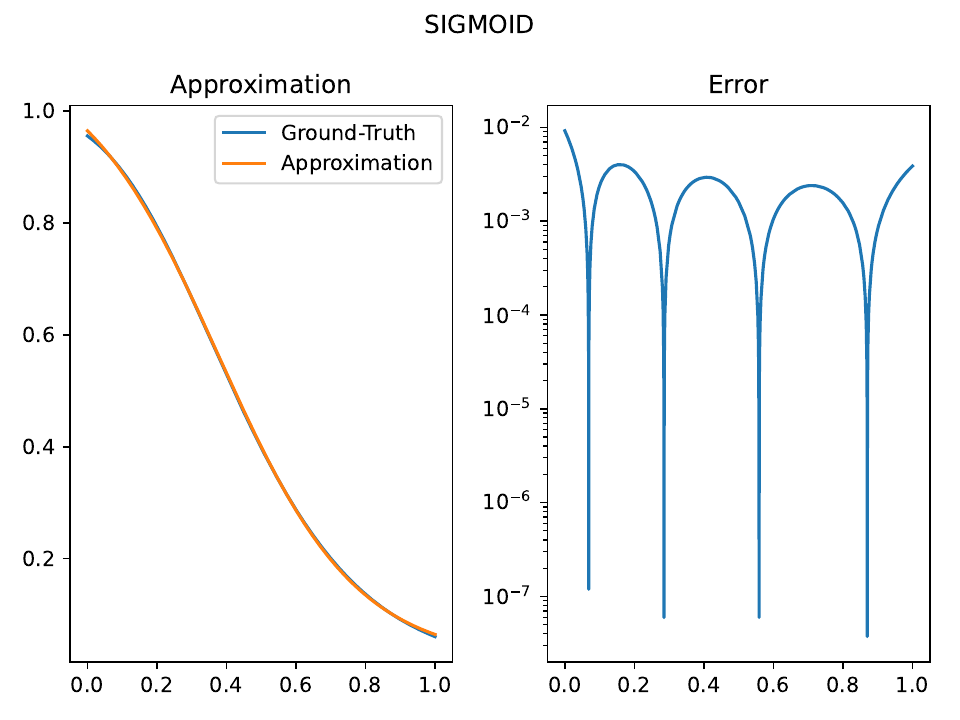}
    \end{subfigure}
    \caption{Polynomial approximation and approximation error, compared to the series approximation, truncated at 150 terms. The degree-$9$ polynomial (left) achieves orders of magnitude lower error than the degree-$5$ polynomial (center) and sigmoid (right) approximations.}
    \label{fig:poly-approx-figures}
\end{figure}
Because the Diffusion Transformation Operator $\TransOp$ has a sigmoid-like shape, we approximate it with S-shaped functions that require only a handful of coefficients. This allows us to store fewer parameters during training, instead of the 100k values required by the original curriculum or the 300 coefficients from the series approximation. Concretely, we test several functional forms with fewer than 10 parameters and fit them using non-linear least squares, via \texttt{scipy.optimize.curve\_fit}. 

As shown in \fig{fig:poly-approx-figures}, approximations tend to be less accurate at the boundaries, when $t \approx 0$ or $t \approx 1$. We find that the degree-9 polynomial works better than a sigmoid function of the form $a \sigma(bt + c) + d$, especially at the boundaries.

\subsection{Implementation of the Fast Curriculum}
\label{sec:how-to-implement-the-curriculum}
In \sec{sec:scalable-curriculum}, we described our efficient curriculum. Here we provide the pseudocode (\algo{alg:scalable-curriculum-weights-computation}) and elaborate on the three main implementation challenges:
\begin{itemize}
    \item First, we need to sample the $k$ largest zero-mean Gaussian random variables out of $K$, to emulate the Gaussian Diffusion over the one-hot data samples $\x$ (\sec{suppl:gen-top-k-gaussians-out-of-K}).
    \item Secondly, we must estimate the normalization constant of the softmax, without actually sampling the $K$ random variables (\sec{sec:estimation-normalization-constant-of-softmax}).
    \item Third, we require an efficient method to sample $k$ distinct integers from $K$ without replacement (\sec{sec:sampling-integers-without-repetition-and-shuffling}).
\end{itemize}
\algo{alg:scalable-curriculum-weights-computation} shows the pseudocode of the complete  algorithm.
\begin{algorithm}[t]
    \caption{Scalable Top-$k$ Curriculum Weights (\sec{sec:scalable-curriculum}).\\
    Recall that \mbox{$\wtl = \atg \xl + \stg \beps$, $\beps \sim \N(\mathbf{0}, \mathbf{I}_K)$}, where $\xl$ is the one-hot vector representation of category $o$. Hence, $(\wtl)_o \sim \N(\atg, \stgsq)$ and the remaining $K{-}1$ entries are i.i.d.\ $\N(0, \stgsq)$. The goal is to approximate $\softmax(\wtl / \tau)$ using only the $k$ largest entries ($k \ll K$).}
    \begin{algorithmic}
    \STATE \textbf{Input:} Clean token index $o \in [K]$, vocabulary size $K$, top-$k$ count $k$, temperature $\tau$, Gaussian schedule $(\atg, \stg)$
    \STATE \textbf{Output:} Approximate softmax weights $\bm{\lambda} \in [0, 1]^k$ and corresponding token indices $\mathcal{I} \in [K]^k$
    \STATE \hrulefill
    \STATE \Comment{Step 1: Sample the $k$ largest values of $\wtl$ without materializing all $K$ entries}
    \STATE $\mathcal{K}_1 \geq \cdots \geq \mathcal{K}_k \gets$ top-$k$ order statistics of $(K{-}1)$ i.i.d.\ $\N(0, \stgsq)$ draws \Comment{\algo{alg:reverse-sampling-order-stats}}
    \STATE $\tilde{w} \sim \N(\atg, \stgsq)$ \Comment{Clean-token value: sample of $(\wtl)_o$}
    \STATE \Comment{Step 2: Build the top-$k$ set and approximate the softmax normalizer $\tilde{Z}$}
    \IF{$\tilde{w} > \mathcal{K}_k$}
        \STATE \Comment{\colorbox{case2}{Case 2}: clean token $o$ belongs to the top-$k$}
        \STATE $\mu \gets \mathbb{E}[\exp(X/\tau) \mid X < \mathcal{K}_{k-1}]$, \; $X \sim \N(0, \stgsq)$ \Comment{Mean contribution unsimulated entry (\supp{sec:estimation-normalization-constant-of-softmax})}
        \STATE $r \gets \big|\{j \in [k] : \mathcal{K}_j > \tilde{w}\}\big|$ \Comment{Rank of $\tilde{w}$ in $\mathcal{K}$}
        \STATE $\mathcal{K} \gets (\mathcal{K}_{1:r},\; \tilde{w},\; \mathcal{K}_{r+1:k-1})$ \Comment{Insert $\tilde{w}$, drop the smallest $\mathcal{K}_k$}
        \STATE Sample $k{-}1$ indices $\mathcal{J}$ uniformly w/o replacement from $[K]\setminus\{o\}$ \Comment{\algo{alg:floyd-sampling}}
        \STATE $\mathcal{I} \gets (\mathcal{J}_{1:r},\; o,\; \mathcal{J}_{r+1:k-1})$
        \STATE $\tilde{Z} \gets \sum_{i=1}^{k} \exp(\mathcal{K}_i / \tau) + (K - k)\,\mu$ \Comment{$K{-}k$ unsimulated entries, each contributing $\mu$}
    \ELSE
        \STATE \Comment{\colorbox{case1}{Case 1}: clean token $o$ is \emph{not} in the top-$k$}
        \STATE $\mu \gets \mathbb{E}[\exp(X/\tau) \mid X < \mathcal{K}_k]$, \; $X \sim \N(0, \stgsq)$ \Comment{Mean contribution unsimulated entry (\supp{sec:estimation-normalization-constant-of-softmax})}
        \STATE Sample $k$ indices $\mathcal{I}$ uniformly w/o replacement from $[K]\setminus\{o\}$ \Comment{\algo{alg:floyd-sampling}}
        \STATE $\tilde{Z} \gets \sum_{i=1}^{k} \exp(\mathcal{K}_i / \tau) + (K{-}k{-}1)\,\mu + \exp(\tilde{w}/\tau)$ \Comment{$\tilde{w}$ counted exactly, $K{-}k{-}1$ via $\mu$}
    \ENDIF
    \STATE \Comment{Step 3: Normalized softmax weights over the $k$ selected entries}
    \STATE $\bm{\lambda}_i \gets \exp(\mathcal{K}_i / \tau) \;/\; \tilde{Z}$, \quad $i = 1, \ldots, k$
    \STATE \textbf{return} $\bm{\lambda},\; \mathcal{I}$
    \end{algorithmic}
    \label{alg:scalable-curriculum-weights-computation}
\end{algorithm}
\section{Experimental Details}
\subsection{\texorpdfstring{\psisamplers{}}{Psi-Samplers}}
\label{supp:details-psi-sampler}

\subsubsection{OpenWebText}
To evaluate the samplers, we use the pre-trained MDLM \citep{sahoo2024simpleeffectivemaskeddiffusion} and Duo \citep{sahoo2025the} checkpoints, as well as their distilled variants (using SDTT \citep{deschenaux2025autoregressionfastllmsselfdistillation} and discrete consistency distillation, respectively, after 5 rounds of 10k steps).  
We re-state the training hyperparameters of both models in \supp{supp:improved-curriculum-settings-details}.  
For ReMDM, we use both the official implementation of \citet{wang2025remaskingdiscretediffusionmodels} and our re-implementation, which matches the original results while supporting additional sampling schedules beyond the log-linear one. See \supp{supp:tuning-kappa-t-psi-samplers} for details on selecting $\kappa_t$.
\newpage
\subsubsection{CIFAR10 (D3PM-like architecture)}
\begin{wraptable}{r}{0.5\textwidth}
    \vspace{-15pt}
    \caption{Model architecture on CIFAR10}
    \label{tab:arch-cifar10}
    \centering
    \begin{tabular}{lc} 
        \toprule
        Component & Value \\
        \midrule
        Vocab size & 256 \\
        Number of ResNet blocks per scale & 2 \\
        Base channels & 128 \\
        Channel multiplier per scale & (1,2,2,2) \\
        Attention resolutions & 16 \\
        Conditional embedding dimension & 128 \\
        Number of parameters & 35.8M \\
        \bottomrule
    \end{tabular}
\end{wraptable}

We train a U-Net backbone \citep{ronneberger2015unetconvolutionalnetworksbiomedical} for 1.5M steps with a batch size of 128, using class conditioning with a class-dropout rate of 0.1 (as in \citet{schiff2025simpleguidancemechanismsdiscrete}), and the default hyperparameters of \citet{austin2023structureddenoisingdiffusionmodels} (\tab{tab:arch-cifar10}). For both MDLM and Duo, we experiment with time-conditional and unconditional variants, and train models using either cosine or log-linear noise schedules. See \tab{tab:cifar10-d3pm-ancestral-fid} for the ancestral-sampling evaluation of all variants after pre-training. See \supp{supp:tuning-kappa-t-psi-samplers} for details on selecting $\kappa_t$.

\subsection{Improved Curriculum}
\label{supp:details-improved-curriculum}
\subsubsection{Language modeling}
\label{supp:improved-curriculum-settings-details}
We adopt the same setup as prior work on discrete diffusion \citep{lou2024discretediffusionmodelingestimating, sahoo2024simpleeffectivemaskeddiffusion, sahoo2025the}, and restate it for completeness.
\paragraph{LM1B}
We detokenize the One Billion Words \citep{chelba2014billionwordbenchmarkmeasuring} as in \citet{lou2024discretediffusionmodelingestimating, sahoo2024simpleeffectivemaskeddiffusion}\footnote{\url{https://github.com/louaaron/Score-Entropy-Discrete-Diffusion/blob/main/data.py}}, and tokenize it using the \texttt{bert-base-uncased} tokenizer \citep{devlin2019bertpretrainingdeepbidirectional}, as \citet{he2022diffusionbertimprovinggenerativemasked}. We use a context length of 128 and pad shorter documents.
\paragraph{OpenWebText}
We tokenize OpenWebText \citep{Gokaslan2019OpenWeb} with the \texttt{GPT-2} tokenizer, concatenate sequences to a length of 1024, and insert an \texttt{eos} token between documents. Since the dataset lacks an official validation split, we reserve the last 100k documents for validation.
\paragraph{Backbone}
We parameterize all models using the modified diffusion transformer architecture of \citet{peebles2023scalablediffusionmodelstransformers}, following \citet{lou2024discretediffusionmodelingestimating, sahoo2024simpleeffectivemaskeddiffusion}. Our models use 12 layers, a hidden dimension of 768, 12 attention heads, and a timestep embedding of size 128 for the uniform-state diffusion variants. Word embeddings are not tied between input and output.
\paragraph{Curriculum Lookup}
For the Duo baseline, we train models using the original code. To implement the efficient curriculum, we replace the full linear combination of embeddings by a sparse lookup, implemented using \texttt{torch.nn.functional.embedding\_bag} to avoid materializing intermediate tensors. The curriculum phase lasts for the first 500k steps, after which we perform regular embedding table lookups, just like \citet{sahoo2025the}.
\paragraph{Optimization}
We train all models with the AdamW optimizer \citep{loshchilov2019decoupledweightdecayregularization} using a batch size of 512. The learning rate is linearly warmed up from 0 to $3\times 10^{-4}$ over 2,500 steps, then kept constant for the remainder of training. We apply a dropout rate of 0.1 throughout.
\subsection{Downstream Evaluation Protocol}
\label{appendix:downstream-eval-explained}
We evaluate downstream performance using the \texttt{lm-eval-harness} library \citep{eval-harness}, following the protocol of \citet{deschenaux2025partitiongenerativemodelingmasked}. We focus on multiple choice tasks, where the log-likelihood of each candidate answer, given a prompt, is computed and the answer with the highest score is selected. For diffusion language models, which optimize a variational bound on the log-likelihood of the full sequence, we adapt the evaluation by using Bayes' rule:
\begin{equation}
    \log p(\y_i|\x) = \log p(\x, \y_i) - \log p(\x) \propto \log p(\x, \y_i),
\end{equation}
Since $\log p(\x)$ does not depend on the candidate $\y_i$, we simply select the answer that maximizes $\log p(\x, \y_i)$. In practice, we use the log-likelihood ELBO \Eqn{eqn:loss-usdm-singleline}, estimated via Monte Carlo with 1024 samples, and choose the continuation $\y_i$ with the highest estimated likelihood.
\subsection{Zero-Shot Likelihood}
Our setting is the same as used by \citet{sahoo2025the}. Specifically, we measure the likelihood of the models trained on OpenWebText using the validation splits of seven diverse datasets: Penn Tree Bank (PTB; \citet{ptb}), Wikitext \citep{merity2016pointersentinelmixturemodels}, One Billion Words (LM1B; \citet{chelba2014billionwordbenchmarkmeasuring}), Lambada \citep{paperno2016lambadadatasetwordprediction}, AG News \citep{zhang2016characterlevelconvolutionalnetworkstext}, and Scientific Papers (Pubmed and Arxiv subsets; \citet{cohan-etal-2018-discourse}). The datasets are detokenized following the protocol of \citet{lou2024discretediffusionmodelingestimating, sahoo2025the}. We wrap all sequences to a maximum length of 1024 tokens and do not insert \texttt{eos} tokens between them. \tab{tab:zero-shot-results} shows that we reach similar performance as Duo.
\section{Additional Experimental results}
In \supp{supp:tuning-kappa-t-psi-samplers}, we elaborate on the impact of $\kappa_t$ on the performance of the \psisamplers{}. In \supp{sec:distribution-top-k-softmax}, we show that our efficient curriculum produces weights with the same marginal distributions as \citet{sahoo2025the}.
\subsection{Tuning \texorpdfstring{$\kappa_t$}{kappa} for the \texorpdfstring{\psisamplers{}}{psi-samplers}}
\label{supp:tuning-kappa-t-psi-samplers}
\begin{wrapfigure}{r}{0.4\textwidth}
    \vspace{-20pt}
    \includegraphics[width=\linewidth]{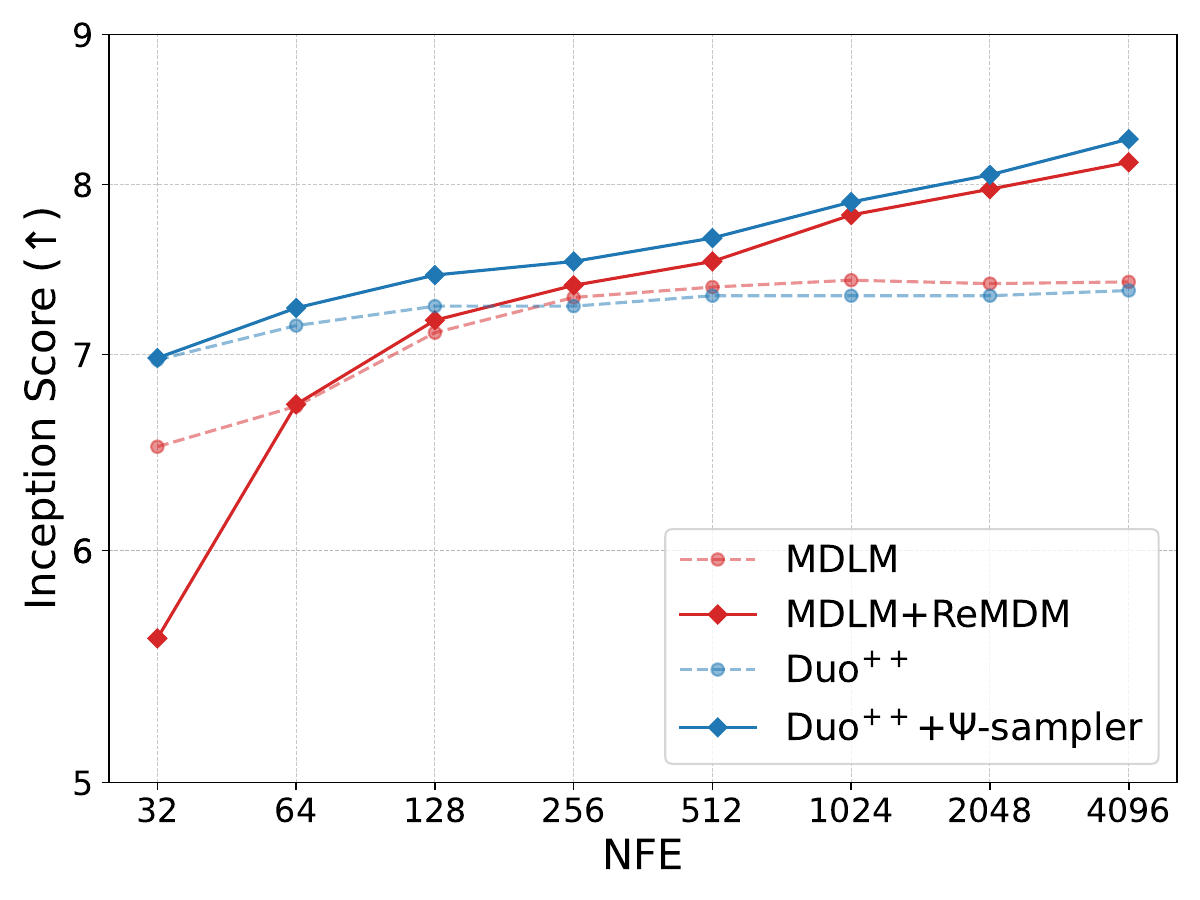}
    \caption{\psisamplers{}, which generalize ReMDM, significantly improve the Inception Score on CIFAR-10, compared to ancestral sampling.}
    \label{fig:is-cifar10}
    \vspace{-30pt}
\end{wrapfigure}
As discussed in \sec{sec:experiments-psi-samplers}, the choice of $\kappa_t$ is critical for strong performance. With a poor choice of $\kappa_t$, \psisamplers{} can underperform ancestral sampling. Below, we report all of our hyperparameter sweeps across datasets.

\begin{itemize}
    \item We perform image modeling on CIFAR-10 using the U-Net architecture of \citet{austin2023structureddenoisingdiffusionmodels, schiff2025simpleguidancemechanismsdiscrete}, and use horizontal flipping as the sole data augmentation.
    \item We evaluate \psisamplers{} on OpenWebText \citep{Gokaslan2019OpenWeb} using the original checkpoint of MDLM \citep{sahoo2024simpleeffectivemaskeddiffusion} and Duo \citep{sahoo2025the}.
\end{itemize}

\subsubsection{CIFAR-10}
We report FID \citep{heusel2018ganstrainedtimescaleupdate}, computed between 50k generated samples and the training set. Before evaluating \psisamplers{}, we ablate on the training hyperparameters. Specifically, we train models with cosine and log-linear noise schedule, optionally with time-conditioning. We sample with both cosine and log-linear schedules. Finally, we check whether nucleus sampling \citep{holtzman2020curiouscaseneuraltext} and greedy decoding on the final step can help, compared to vanilly ancestral sampling. Since nucleus sampling helps Duo but not MDLM, we compare the two models without nucleus sampling. \tab{tab:cifar10-d3pm-ancestral-fid} shows the validation perplexity and FID for a few number of sampling steps. \tab{tab:cifar10-d3pm-ancestral-fid-all} reports FID for ancestral sampling using step counts that are powers of two, from 32 up to 4096. \tab{tab:cifar10-d3pm-remdm-fid} shows the results with ReMDM. \tab{tab:cifar10-d3pm-psi-constant-kappa-fid} reports FID scores for \psisamplers{} using a stepwise-constant $\kappa$ schedule. \tab{tab:cifar10-d3pm-psi-remdm-equivalent} shows the performance of \psisamplers{} using the $\kappa$ schedule equivalent to ReMDM. We obtain similar results, which supports our theoretical claims.

\begin{itemize}
    \item \textbf{MDLM (Ancestral).}
    Training with cosine noise schedule and time conditioning yields the best validation perplexity and FID.

    \item \textbf{MDLM (ReMDM).} We find that ReMDM improves the best FID over ancestral sampling, from 24.73 to 23.71 using 4096 sampling steps. Nucleus sampling can help at very low step counts, but the best  performance is obtained with ancestral sampling. As the number of steps increases, nucleus sampling \emph{worsens} the FID.
    
    \item \textbf{Duo (Ancestral).}
    Cosine training without time conditioning yields the lowest perplexity, while log-linear training without time conditioning gives the best FID.  
    We use the latter in downstream experiments.  
    Nucleus sampling improves FID, and greedy decoding slightly worsens it.

    \item \textbf{Duo (\psisamplers{}).} \psisamplers{} further improve performance beyond ReMDM. With the log-linear sampling schedule (as used by ReMDM), \psisamplers{} reduce the FID from 23.71 to 20.71. Using a cosine sampling schedule further improves the FID. Overall, Duo improves \emph{from an FID of 25.63 (ancestral) to 15.05} with \psisamplers{}, and MDLM improves from \emph{24.73 (ancestral) to 17.86} with \psisamplers{}.
\end{itemize}

\subsubsection{OpenWebText}
We report the generative perplexity using GPT-2 Large, following standard practice \citep{sahoo2024simpleeffectivemaskeddiffusion, sahoo2025the}. Because language models can artificially lower the generative perplexity by producing repetitive text, we also report unigram entropy \citep{dieleman2022continuousdiffusioncategoricaldata}, as a proxy.

Some \psisamplers{} schedules reduce the unigram entropy more than others. Therefore, for figures, we select the $\kappa$ schedule whose unigram entropy matches (or is closest to) the entropy of samples generated with ancestral sampling. If multiple schedules achieve the same entropy, we choose the one with the lowest generative perplexity.  
We indicate which schedule is used for plots by highlighting the corresponding row in blue in the tables. Overall, the \psisamplers{} can reduce the Gen. PPL of \emph{all} models while retaining the unigram entropy. Best results are achieved using the rescale schedule with $\eta \in \{0.01, 0.02\}$, for both MDLM and Duo.

\tab{tab:openwebtext-ancestral-sampling} shows the generative perplexity of MDLM and Duo after pre-training and after distillation with SDTT \citep{deschenaux2025autoregressionfastllmsselfdistillation} or DCD \citep{sahoo2025the} respectively, with and without nucleus sampling, using ancestral sampling. \tab{tab:openwebtext-remdm-non-distilled} shows the results when sampling with \psisamplers{} that are equivalent to ReMDM \citep{wang2025remaskingdiscretediffusionmodels}, with the \emph{non-distilled} models, while \tab{tab:openwebtext-remdm-distilled} shows the result for the distilled models. %\tab{tab:openwebtext-psi-non-distilled} shows the results for the \psisamplers{} with piecewise constant $\kappa$ scheduler.

\subsection{Distribution of the top \texorpdfstring{$k$}{k} entries of the softmax}
\label{sec:distribution-top-k-softmax}
To verify that our sparse implementation accurately approximates the curriculum weights of \citet{sahoo2025the}, we compare the empirical distributions of the top-$k$ largest entries between the original and our efficient implementation. While matching marginal distributions does not guarantee matching joint distributions, matching marginals are necessary for matching joints, and are easier to visualize. Recall that experimentally, our efficient implementation is sufficient to achieve strong performance (\sec{sec:experiments-fast-curriculum}).
Specifically, we show histograms using a tokenizer with $100k$ tokens in Figures \ref{fig:marginal-distributions-100k-t100-gamma2}, \ref{fig:marginal-distributions-100k-t1000-gamma1}, \ref{fig:marginal-distributions-100k-t1000-gamma2}, \ref{fig:marginal-distributions-100k-t1000-gamma4}, and with the \texttt{GPT-2} tokenizer in Figures \ref{fig:marginal-distributions-gpt2-t100-gamma2}, \ref{fig:marginal-distributions-gpt2-t1000-gamma1}, \ref{fig:marginal-distributions-gpt2-t1000-gamma2}, \ref{fig:marginal-distributions-gpt2-t1000-gamma4}, with varying temperature and log signal-to-noise ratios. In all cases, the top $k$ variables have matching distributions.

\subsection{Training Efficiency of Our Fast Curriculum}
\label{sec:training-efficiency-fast-curriculum}

\begin{table}[t]
    \centering
    \caption{Training efficiency comparison between Duo and \method{} on 138M parameter models. All measurements are conducted on a training job on 8 NVIDIA GH200-120GB GPU with batch size 32. We report the average throughput in sequence per second. The row ``Duo (after CL)'' denotes the resources consumption of Duo after the Curriculum phase. The impact of $k$ is minimal when $k \in \{2, 3, 5\}$, and \method{} uses similar resources.}
    \vspace{0.5em}
    \begin{tabular}{@{}lcc@{}}
        \toprule
        \textbf{Method} & \textbf{Throughput} & \textbf{Peak Memory} \\
        & \textbf{(samples/s)} $\uparrow$ & \textbf{(GiB)} $\downarrow$ \\
        \midrule
        Duo & 81.8 & 94.3 \\
        Duo (after the CL) &  122.4 & 63.3 \\
        \rowcolor{ourslightgray}
        \method{} ($k \in \{2, 3, 5\} $) & 121.9 & 63.4 \\
        \bottomrule
    \end{tabular}
    \label{tab:training-efficiency}
\end{table}

As shown in \tab{tab:training-efficiency}, our sparse curriculum achieves a 33\% reduction in peak memory usage and reaches an average throughput 25\% higher than Duo, at a context length of 1024.
\clearpage
\begin{table}[H]
    \centering
    \caption{Zero-shot perplexity (PPL) on seven datasets. Lower is better. $^\dagger$Results taken from \citet{sahoo2025the}. \method{} ($k=2$) achieves a slightly lower zero-shot perplexity than Duo on 6 of 7 datasets.}
    \begin{tabular}{lccccccc}
        \toprule
        & PTB & Wiki & LM1B & LBD & AG News & PubMed & ArXiv \\
        \midrule
        \textit{Autoregressive} \\
        \quad Transformer$^\dagger$ & 82.05 & 25.75 & 51.25 & 51.28 & 52.09 & 49.01 & 41.73 \\
        \midrule
        \textit{Diffusion (138M)} \\
        \quad SEDD Uniform$^\dagger$ & 105.51 & 41.10 & 82.62 & 57.29 & 82.64 & 55.89 & 50.86 \\
        \quad UDLM$^\dagger$ & 112.82 & 39.42 & 77.59 & 53.57 & 80.96 & 50.98 & 44.08 \\
        \quad Duo$^\dagger$ & \textbf{89.35} & \textbf{33.57} & \underline{73.86} & \underline{49.78} & 67.81 & \underline{44.48} & \underline{40.39} \\
        % \quad Duo$^\mathsection$ (retrained) & \underline{91.54} & 34.79 & 75.33 & 49.95 & 68.43 & 46.75 & 41.45 \\
        \rowcolor{ourslightgray}
        \quad \method{} ($k=2$) & 94.96 & {34.05} & \textbf{73.80} & \textbf{48.67} & \underline{67.14} & \textbf{43.98} & \textbf{38.93} \\
        \rowcolor{ourslightgray}
        \quad \method{} ($k=3$) & 91.94 & 34.65 & 74.16 & 49.89 & \textbf{66.89} & 44.87 & 40.42 \\
        \rowcolor{ourslightgray}
        \quad \method{} ($k=5$) & 94.46 & 34.52 & 74.91 & 50.93 & 68.72 & 46.79 & 41.04 \\
        \bottomrule
    \end{tabular}
    \label{tab:zero-shot-results}
\end{table}
\clearpage
%
% 100k tokenizer, sorted by temperature (T), then gamma (signal-to-noise ratio)
\begin{figure}[H]
    \centering
    \includegraphics[width=0.85\linewidth]{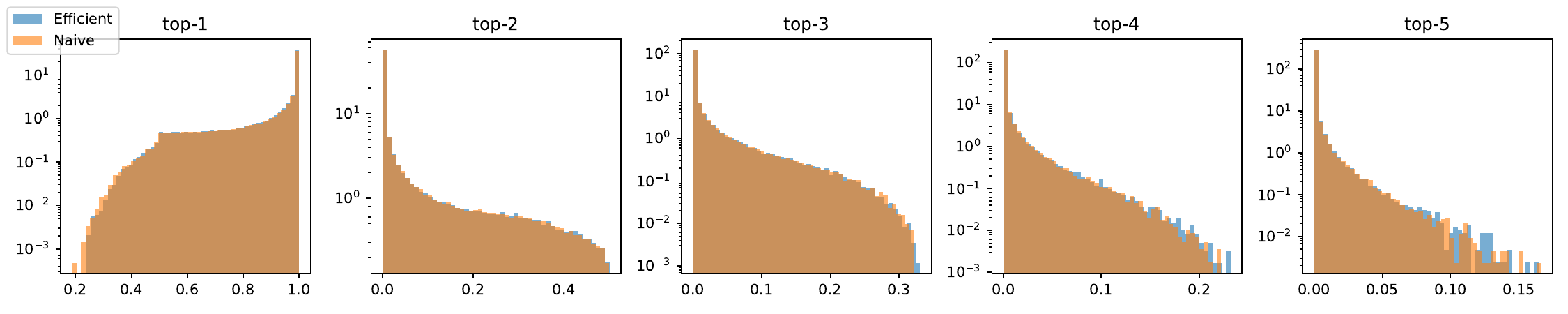}
    \caption{Marginal distributions of the top-5 entries using a tokenizer with 100k tokens, inverse temperature 100, and log signal-to-noise ratio $-2$. The histograms of the efficient and naive implementation match closely.}
    \label{fig:marginal-distributions-100k-t100-gamma2}
\end{figure}
\begin{figure}[H]
    \centering
    \includegraphics[width=0.85\linewidth]{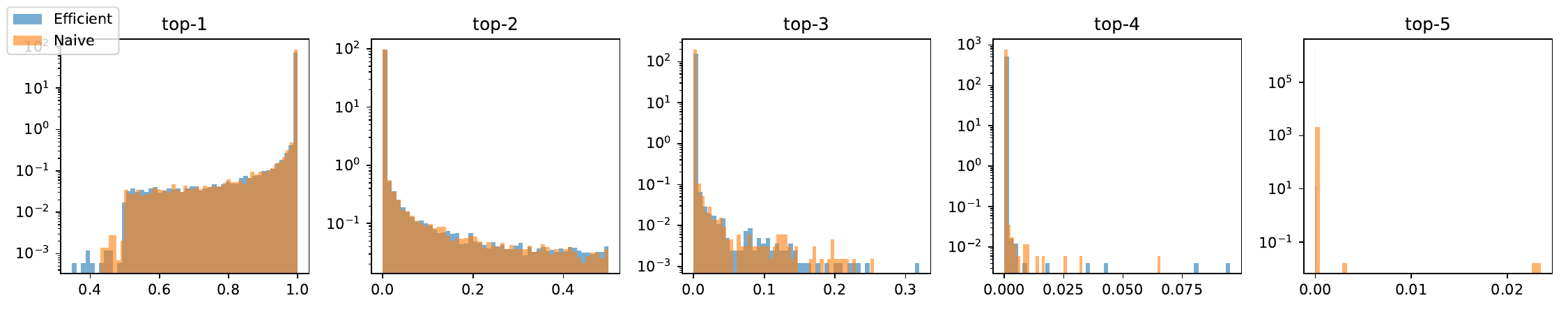}
    \caption{Marginal distributions of the top-5 entries using a tokenizer with 100k tokens, inverse temperature 1000, and log signal-to-noise ratio $-1$. The histograms of the efficient and naive implementation match closely.}
    \label{fig:marginal-distributions-100k-t1000-gamma1}
\end{figure}
\begin{figure}[H]
    \centering
    \includegraphics[width=0.85\linewidth]{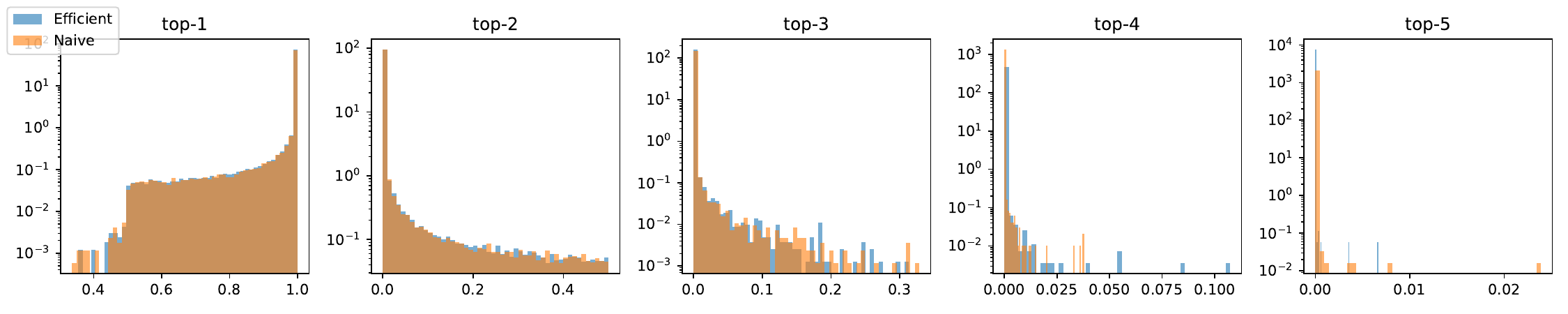}
    \caption{Marginal distributions of the top-5 entries using a tokenizer with 100k tokens, inverse temperature 1000, and log signal-to-noise ratio $-2$. The histograms of the efficient and naive implementation match closely.}
    \label{fig:marginal-distributions-100k-t1000-gamma2}
\end{figure}
\begin{figure}[H]
    \centering
    \includegraphics[width=0.85\linewidth]{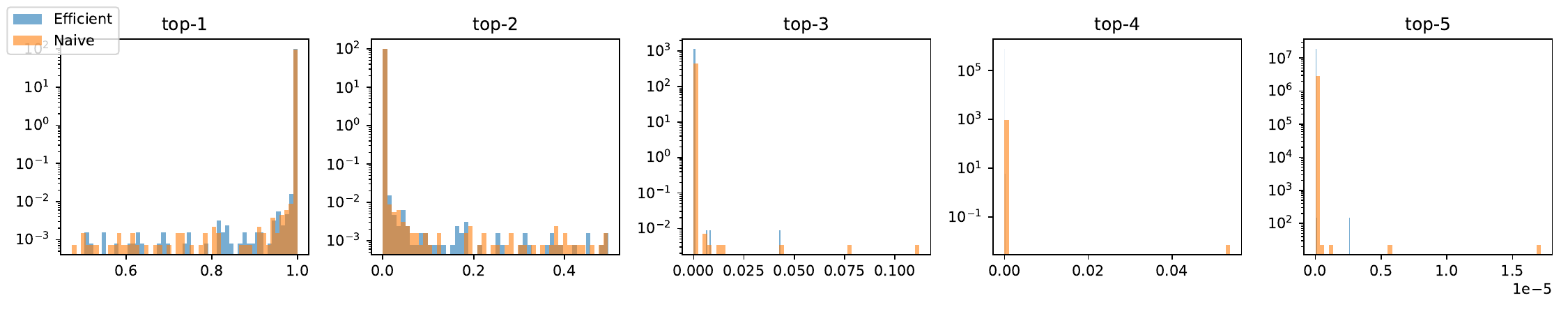}
    \caption{Marginal distributions of the top-5 entries using a tokenizer with 100k tokens, inverse temperature 1000, and log signal-to-noise ratio $-4$. The histograms of the efficient and naive implementation match closely.}
    \label{fig:marginal-distributions-100k-t1000-gamma4}
\end{figure}
%
%
% GPT-2 tokenizer, sorted by temperature (T), then gamma (signal-to-noise ratio)
\begin{figure}[H]
    \centering
    \includegraphics[width=0.85\linewidth]{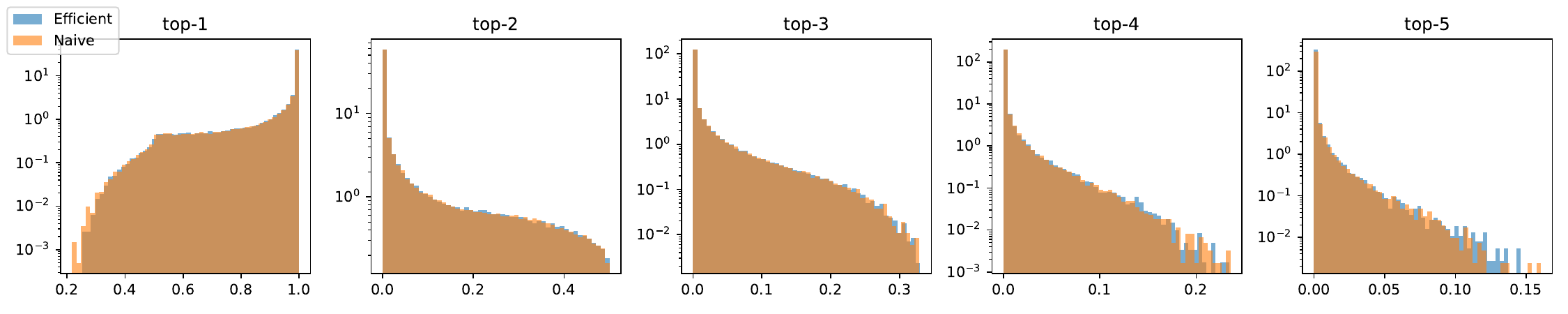}
    \caption{Marginal distributions of the top-5 entries using the GPT-2 tokenizer, inverse temperature 100, and log signal-to-noise ratio $-2$. The histograms of the efficient and naive implementation match closely.}
    \label{fig:marginal-distributions-gpt2-t100-gamma2}
\end{figure}
\begin{figure}[H]
    \centering
    \includegraphics[width=0.85\linewidth]{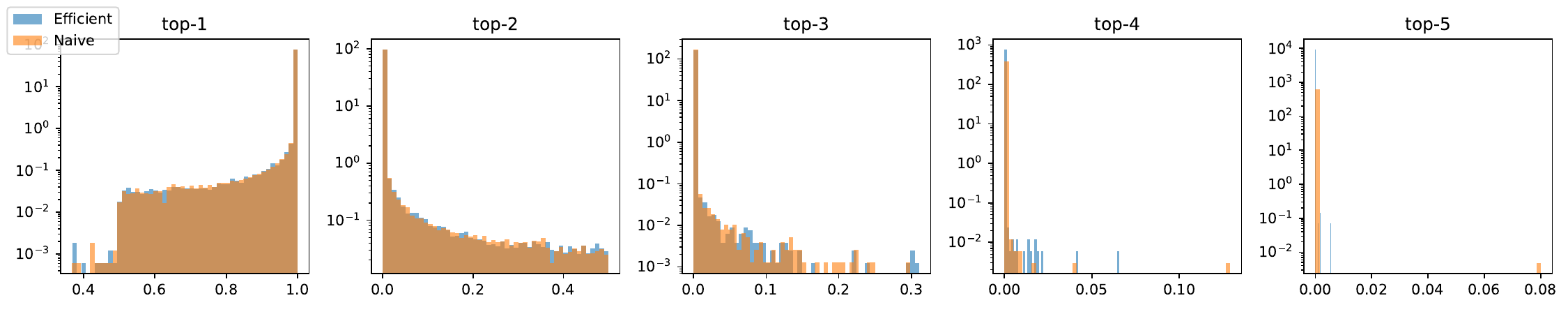}
    \caption{Marginal distributions of the top-5 entries using the GPT-2 tokenizer, inverse temperature 1000, and log signal-to-noise ratio $-1$. The histograms of the efficient and naive implementation match closely.}
    \label{fig:marginal-distributions-gpt2-t1000-gamma1}
\end{figure}
\begin{figure}[H]
    \centering
    \includegraphics[width=0.85\linewidth]{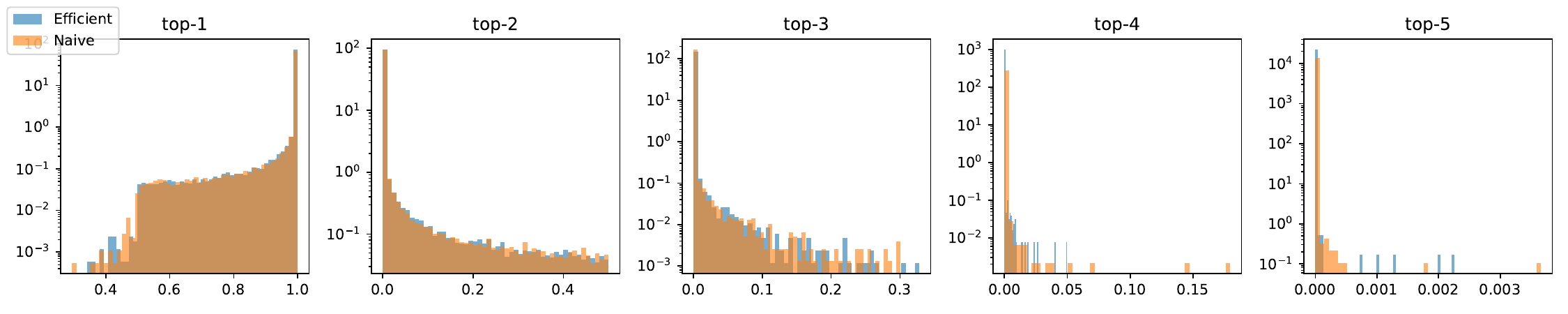}
    \caption{Marginal distributions of the top-5 entries using the GPT-2 tokenizer, inverse temperature 1000, and log signal-to-noise ratio $-2$. The histograms of the efficient and naive implementation match closely.}
    \label{fig:marginal-distributions-gpt2-t1000-gamma2}
\end{figure}
\begin{figure}[H]
    \centering
    \includegraphics[width=0.85\linewidth]{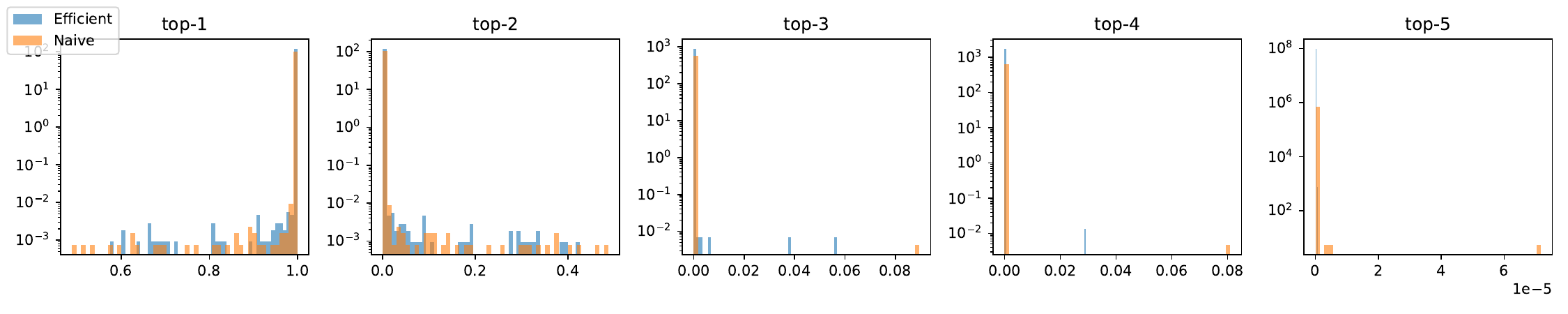}
    \caption{Marginal distributions of the top-5 entries using the GPT-2 tokenizer, inverse temperature 1000, and log signal-to-noise ratio $-4$. The histograms of the efficient and naive implementation match closely.}
    \label{fig:marginal-distributions-gpt2-t1000-gamma4}
\end{figure}
%
%
% ----------------------------------------------------- CIFAR-10 (D3PM Unet) -----------------------------------------------------

\begin{table}[H]
    \centering
    \caption{FID on CIFAR-10 with ancestral sampling. We train and sample with the log-linear and cosine scheduler. MDLM performs best with time-conditioning while Duo does not. We sample with discrete classifier-free guidance \citep{schiff2025simpleguidancemechanismsdiscrete} with strength $1$, and greedy predictions on the last step.}
    \small
    {\rowcolors{4}{gray!6}{white}
    \begin{tabular}{lcc|rrrr|rrrr}
        \toprule
        Scheduler & Time & PPL $\downarrow$ & \multicolumn{4}{c|}{FID $\downarrow$ (Cosine)} & \multicolumn{4}{c}{FID $\downarrow$ (Log-linear)} \\
        \cmidrule(lr){4-7} \cmidrule(lr){8-11}
        & & & 64 & 256 & 1024 & 2048 & 64 & 256 & 1024 & 2048 \\
        \midrule
        \multicolumn{11}{l}{\textit{MDLM}} \\
        \quad Cosine & \centering \xmark & 8.86 & \underline{42.60} & \underline{27.71} & \underline{24.90} & \underline{24.56} & \textbf{107.62} & \underline{40.81} & \underline{27.65} & \underline{25.73} \\
        \quad Cosine & \centering \cmark & \textbf{8.72} & \textbf{41.89} & \textbf{27.03} & \textbf{24.67} & \textbf{24.24} & 114.56 & \textbf{40.60} & \textbf{27.08} & \textbf{25.50} \\
        \quad Log-linear & \centering \xmark & 8.76 & 43.95 & 29.01 & 26.11 & 25.67 & \underline{111.77} & 42.15 & 28.85 & 26.89 \\
        \quad Log-linear & \centering \cmark & 8.75 & 49.36 & 32.10 & 28.76 & 28.21 & 122.70 & 41.79 & 27.89 & 26.02 \\
        \midrule
        \multicolumn{11}{l}{\textit{MDLM (nucleus p=0.9)}} \\
        \quad Cosine & \centering \cmark & \textbf{8.72} & 34.81 & 44.04 & 47.84 & 48.37 & 41.73 & 33.33 & 43.12 & 45.98 \\
        \midrule
        \multicolumn{11}{l}{\textit{MDLM (no greedy)}} \\
        \quad Cosine & \centering \cmark & \textbf{8.72} & 42.14 & 27.19 & 24.47 & 24.46 & 114.55 & 40.92 & 27.13 & 25.60 \\
        \midrule
        \midrule
        \multicolumn{11}{l}{\textit{Duo}} \\
        \quad Cosine & \centering \xmark & \textbf{10.27} & \underline{32.37} & \underline{27.28} & \underline{26.38} & \underline{26.02} & \underline{33.93} & \underline{27.93} & \underline{26.51} & \underline{26.03} \\
        \quad Cosine & \centering \cmark & 10.32 & 33.74 & 27.98 & 26.81 & 26.96 & 36.23 & 28.77 & 27.08 & 26.79 \\
        \quad Log-linear & \centering \xmark & 10.49 & \textbf{31.78} & \textbf{27.03} & \textbf{26.00} & \textbf{25.75} & \textbf{33.44} & \textbf{27.46} & \textbf{26.08} & \textbf{25.87} \\
        \quad Log-linear & \centering \cmark & 10.45 & 34.05 & 27.74 & 26.58 & 26.37 & 36.46 & 28.49 & 26.60 & 26.22 \\
        \midrule
        \multicolumn{11}{l}{\textit{Duo (nucleus p=0.9)}} \\
        \quad Log-linear & \centering \xmark & 10.49 & 23.13 & 22.21 & 22.58 & 22.49 & 24.24 & 22.41 & 22.35 & 22.54 \\
        \midrule
        \multicolumn{11}{l}{\textit{Duo (no greedy)}} \\
        \quad Log-linear & \centering \xmark & 10.49 & 33.03 & 27.43 & 26.16 & 25.96 & 34.81 & 27.76 & 26.30 & 26.06 \\
        \bottomrule
    \end{tabular}}
    \label{tab:cifar10-d3pm-ancestral-fid}
\end{table}
\begin{table}[H]
    \centering
    \caption{FID on CIFAR-10 with ancestral sampling and a finer grid. We pick the variant with the best FID from \tab{tab:cifar10-d3pm-ancestral-fid}.}
    \scriptsize
    {\rowcolors{4}{gray!6}{white}
    \begin{tabular}{l l l l * {8}{S[table-format=3.2]}}
        \toprule
        \multirow{2}{*}{Algo} &
        \multirow{2}{*}{Train} &
        \multirow{2}{*}{Sample} &
        \multirow{2}{*}{p} &
        \multicolumn{8}{c}{FID $\downarrow$} \\
        \cmidrule(l){5-12}
        & & & & \multicolumn{1}{c}{32} & \multicolumn{1}{c}{64} & \multicolumn{1}{c}{128} & \multicolumn{1}{c}{256} & \multicolumn{1}{c}{512} & \multicolumn{1}{c}{1024} & \multicolumn{1}{c}{2048} & \multicolumn{1}{c}{4096} \\
        \midrule
        Duo & log-lin & log-lin & 1.0 & 42.71 & 33.44 & 29.18 & 27.46 & 26.62 & 26.08 & 25.87 & 25.79 \\
        Duo & log-lin & log-lin & 0.9 & 28.53 & 24.24 & 22.89 & 22.41 & 22.56 & 22.35 & 22.54 & 22.41 \\
        \lightmidrule
        Duo & log-lin & cos & 1.0 & 39.65 & 31.78 & 28.55 & 27.03 & 26.03 & 25.89 & 25.75 & 25.63 \\
        Duo & log-lin & cos & 0.9 & 25.96 & 23.13 & 22.68 & 22.21 & 22.26 & 22.58 & 22.49 & 22.49 \\
        \midrule
        MDLM & cos & log-lin & 1.0 & 212.95 & 114.56 & 62.86 & 40.60 & 31.05 & 27.08 & 25.50 & 24.73 \\
        MDLM & cos & log-lin & 0.9 & 84.85 & 41.73 & 31.28 & 33.33 & 38.49 & 43.12 & 45.98 & 55.37 \\
        \lightmidrule
        MDLM & cos & cos & 1.0 & 73.82  & 41.89 & 36.21 & 27.03 & 25.63 & 24.67 & 24.24 & 23.93 \\
        MDLM & cos & cos & 0.9 & 58.31 & 34.81 & 37.91 & 44.04 & 45.32 & 47.84 & 48.37 & 49.23 \\
        \bottomrule
    \end{tabular}}
    \label{tab:cifar10-d3pm-ancestral-fid-all}
\end{table}
\begin{table}[H]
    \centering
    \caption{FID on CIFAR-10 with ReMDM (best checkpoints, as shown in \tab{tab:cifar10-d3pm-ancestral-fid-all}). We sample with/without nucleus sampling, and with the 3 schedules of \citet{wang2025remaskingdiscretediffusionmodels} (cap, loop, rescale). For the loop schedule, we use $t_\text{on}=0.55$, $t_\text{off}=0.05$, $\alpha_\text{on}=0.9$, following ReMDM. Sampling experiments are executed in the original codebase of \citet{wang2025remaskingdiscretediffusionmodels}.}
    \small
    {\rowcolors{4}{gray!6}{white}
    \begin{tabular}{l|rrrrrrrr}
        \toprule
        & \multicolumn{8}{c}{Number of steps} \\
        \cmidrule{2-9}
        & 32 & 64 & 128 & 256 & 512 & 1024 & 2048 & 4096 \\
        \midrule
        \textit{ReMDM cap (p=1.0)} \\
        \quad $\eta=0.005$ & \textbf{215.67} & \textbf{116.24} & \textbf{63.37} & \textbf{40.82} & \textbf{31.40} & \textbf{27.28} & \textbf{24.97} & \textbf{24.78}  \\
        \quad $\eta=0.010$ & 218.41 & 118.25 & 64.50 & 41.77 & 32.40 & 28.68 & 27.91 & 33.68  \\
        \quad $\eta=0.020$ & 224.20 & 122.61 & 66.95 & 44.54 & 36.26 & 35.39 & 46.01 & 92.48  \\
        \quad $\eta=0.050$ & 242.25 & 143.21 & 84.41 & 64.10 & 73.89 & 132.13 & 210.60 & 203.14  \\
        \midrule
        \textit{ReMDM loop (p=1.0)} \\
        \quad $\eta=0.01$ & \textbf{307.56} & \textbf{234.55} & \textbf{138.56} & \textbf{80.50} & \textbf{55.86} & \textbf{47.05} & \textbf{45.44} & \textbf{50.44}  \\
        \quad $\eta=0.02$ & 307.81 & 237.28 & 142.21 & 83.68 & 59.96 & 53.88 & 60.50 & 87.54  \\
        \quad $\eta=0.04$ & 308.24 & 242.70 & 152.28 & 94.63 & 76.93 & 88.53 & 135.05 & 196.58  \\
        \quad $\eta=0.06$ & 308.88 & 248.76 & 165.79 & 114.92 & 113.26 & 157.92 & 223.70 & 237.16  \\
        \midrule
        \textit{ReMDM rescale (p=1.0)} \\
        \quad $\eta=0.01$ & \textbf{216.92} & \textbf{116.73} & \textbf{63.56} & \textbf{40.65} & \textbf{30.86} & \textbf{26.03} & \textbf{23.77} & \textbf{23.71}  \\
        \quad $\eta=0.02$ & 221.21 & 119.79 & 65.08 & 42.02 & 32.29 & 28.11 & 28.66 & 39.39  \\
        \quad $\eta=0.04$ & 229.72 & 127.94 & 70.89 & 46.98 & 38.74 & 41.23 & 67.44 & 130.05  \\
        \quad $\eta=0.05$ & 234.35 & 133.08 & 75.02 & 50.92 & 45.01 & 57.03 & 107.13 & 164.44  \\
        \midrule
        \textit{ReMDM cap (p=0.9)} \\
        \quad $\eta=0.005$ & 88.08 & 40.02 & \textbf{27.31} & \textbf{29.43} & \textbf{36.50} & \textbf{45.10} & \textbf{ 57.08} & \textbf{73.40}  \\
        \quad $\eta=0.010$ & 87.68 & 39.55 & 27.35 & 31.24 & 41.22 & 54.55 & 71.65 & 93.06  \\
        \quad $\eta=0.020$ & 85.95 & 38.46 & 27.80 & 35.01 & 50.50 & 69.60 & 91.49 & 118.87  \\
        \quad $\eta=0.050$ & \textbf{81.91} & \textbf{35.56} & 29.39 & 46.90 & 70.24 & 95.24 & 125.60 & 163.32  \\
        \midrule
        \textit{ReMDM loop (p=0.9)} \\
        \quad $\eta=0.01$ & 209.24 & 100.01 & 47.27 & 29.44 & \textbf{27.55} & \textbf{30.50} & \textbf{34.21} & \textbf{37.56}  \\
        \quad $\eta=0.02$ & 208.36 & 99.29 & 47.12 & 29.38 & 27.74 & 31.17 & 35.42 & 39.52  \\
        \quad $\eta=0.04$ & 206.51 & 98.18 & 46.87 & 29.28 & 28.09 & 32.12 & 37.19 & 42.45  \\
        \quad $\eta=0.06$ & \textbf{204.83} & \textbf{97.24} & \textbf{46.72} & \textbf{29.19} & 28.30 & 32.77 & 38.47 & 44.64  \\
        \midrule
        \textit{ReMDM rescale (p=0.9)} \\
        \quad $\eta=0.01$ & 87.31 & 39.51 & \textbf{27.25} & \textbf{30.74} & \textbf{40.22} & \textbf{53.30} & \textbf{70.24} & \textbf{91.79}  \\
        \quad $\eta=0.02$ & 85.94 & 38.45 & 27.45 & 34.13 & 49.00 & 67.89 & 90.61 & 118.10  \\
        \quad $\eta=0.04$ & 83.47 & 36.44 & 28.29 & 41.76 & 63.40 & 87.03 & 115.60 & 153.03  \\
        \quad $\eta=0.05$ & \textbf{82.26} & \textbf{35.69} & 28.99 & 44.69 & 68.80 & 94.07 & 125.42 & 165.62  \\
        \midrule
        \midrule
        \textit{ReMDM} \\
        \quad {Best} ($p=1.0$) & 215.67 & 116.24 & 63.37 & 40.65 & 30.86 & \textbf{26.03} & \textbf{23.77} & \textbf{23.71}  \\
        \quad {Best} ($p=0.9$) & \textbf{81.91} & \textbf{81.91} & \textbf{27.25} & \textbf{29.19} & \textbf{27.55} & 30.50 & 34.21 & 37.56  \\
        \midrule
        \textit{MDLM} \\
        \quad Ancestral (p=1.0) & 212.95 & 114.56 & 62.86 & 40.60 & \textbf{31.05} & \textbf{27.08} & \textbf{25.50} & \textbf{24.73} \\
        \quad Ancestral (p=0.9) & \textbf{84.85} & \textbf{41.73} & \textbf{31.28} & \textbf{33.33} & 38.49 & 43.12 & 45.98 & 55.37 \\
        \bottomrule
    \end{tabular}}
    \label{tab:cifar10-d3pm-remdm-fid}
\end{table}
\begin{table}[H]
    \centering
    \caption{
    FID on CIFAR-10 with \psisamplers{}, where \psisamplers{} are activated for steps with $t \in [t_\text{off}, t_\text{on}]$, when $\kappa_t$ is kept constant (according to the $\kappa$ column, $1$ otherwise). We use the same checkpoints as in \tab{tab:cifar10-d3pm-ancestral-fid-all}. Using a cosine sampling schedule and light noise injection ($\kappa$ close to 1) generally perform best. The CIFAR-10 curves in \fig{fig:fig1-abstract} show the best FID per number of steps.}
    \scriptsize
    {\rowcolors{4}{gray!6}{white}
    \begin{tabular}{l c l l c r *{8}{S[table-format=3.2]}}
        \toprule
        \multirow{2}{*}{Algo} &
        \multirow{2}{*}{$\kappa$} &
        \multirow{2}{*}{Train} &
        \multirow{2}{*}{Sample} &
        \multirow{2}{*}{$t_\text{on}$} &
        \multirow{2}{*}{$t_\text{off}$} &
        \multicolumn{8}{c}{FID $\downarrow$} \\
        \cmidrule(l){7-14}
        & & & & & & \multicolumn{1}{c}{32} & \multicolumn{1}{c}{64} & \multicolumn{1}{c}{128} & \multicolumn{1}{c}{256} & \multicolumn{1}{c}{512} & \multicolumn{1}{c}{1024} & \multicolumn{1}{c}{2048} & \multicolumn{1}{c}{4096} \\
        \midrule
        Duo & 0.02 & log-lin & cos & 0.2 & 0.15 & 40.64 & 33.06 & 30.36 & 29.85 & 31.31 & 34.36 & 39.06 & 38.38 \\
        Duo & 0.02 & log-lin & cos & 0.5 & 0.45 & 41.81 & 33.67 & 29.50 & 26.55 & 24.83 & 25.12 & 31.63 & 51.83 \\
        Duo & 0.02 & log-lin & cos & 0.8 & 0.7 & 43.99 & 37.41 & 35.68 & 38.88 & 46.76 & 59.68 & 75.46 & 91.73 \\
        \lightmidrule
        Duo & 0.5 & log-lin & cos & 0.2 & 0.1 & 39.95 & 32.14 & 28.86 & 27.18 & 26.57 & 26.46 & 27.29 & 28.35 \\
        Duo & 0.5 & log-lin & cos & 0.6 & 0.4 & 39.54 & \textbf{29.40} & \textbf{23.46} & 20.77 & 23.72 & 38.42 & 72.97 & 105.75 \\
        Duo & 0.5 & log-lin & cos & 0.9 & 0.65 & 43.00 & 34.68 & 31.85 & 34.73 & 45.68 & 64.97 & 88.07 & 107.36 \\
        \lightmidrule
        Duo & 0.95 & log-lin & cos & 0.5 & 0.1 & 39.30 & 30.58 & 26.15 & 23.46 & 20.93 & 18.48 & 16.38 & \textbf{15.05} \\
        Duo & 0.95 & log-lin & cos & 0.6 & 0.1 & 39.19 & 30.15 & 25.14 & 21.54 & 18.64 & \textbf{16.70} & \textbf{16.30} & 18.99 \\
        Duo & 0.95 & log-lin & cos & 0.9 & 0.3 & \textbf{39.04} & 29.88 & 24.72 & 20.90 & 19.20 & 21.09 & 30.00 & 51.43 \\
        Duo & 0.95 & log-lin & cos & 0.9 & 0.4 & 39.21 & 30.29 & 25.26 & 21.57 & 19.92 & 21.50 & 30.03 & 50.88 \\
        \lightmidrule
        Duo & 0.98 & log-lin & cos & 1.0 & 0.05 & 39.31 & 30.97 & 26.39 & 23.13 & 20.56 & 18.80 & 19.46 & 25.83 \\
        Duo & 0.98 & log-lin & cos & 1.0 & 0.1 & 39.31 & 30.99 & 26.40 & 23.14 & 20.58 & 18.83 & 19.48 & 25.82 \\
        \lightmidrule
        Duo & 0.99 & log-lin & cos & 1.0 & 0.05 & 39.34 & 31.56 & 27.46 & 24.73 & 22.35 & 20.07 & 18.50 & 19.39 \\
        Duo & 0.99 & log-lin & cos & 1.0 & 0.1 & 39.35 & 31.57 & 27.46 & 24.73 & 22.37 & 20.09 & 18.51 & 19.41 \\
        \lightmidrule
        Duo & 0.02 & log-lin & log-lin & 0.2 & 0.15 & 42.25 & 33.71 & 29.84 & 27.95 & 27.64 & 27.56 & 29.35 & 31.02 \\
        Duo & 0.02 & log-lin & log-lin & 0.5 & 0.45 & 43.86 & 36.29 & 33.35 & 33.24 & 34.74 & 36.97 & 36.77 & 37.30 \\
        Duo & 0.02 & log-lin & log-lin & 0.8 & 0.7 & 43.95 & 33.75 & 28.32 & 27.78 & 37.12 & 69.66 & 113.05 & 132.86 \\
        \lightmidrule
        Duo & 0.5 & log-lin & log-lin & 0.2 & 0.1 & 42.10 & 33.40 & 29.19 & 27.14 & 26.22 & 25.52 & 25.10 & 24.71 \\
        Duo & 0.5 & log-lin & log-lin & 0.6 & 0.4 & 42.44 & 33.68 & 29.15 & 25.93 & 24.16 & 22.44 & 21.00 & 27.97 \\
        Duo & 0.5 & log-lin & log-lin & 0.9 & 0.65 & 42.87 & 31.04 & 26.37 & 31.86 & 61.36 & 121.64 & 155.77 & 151.48 \\
        \lightmidrule
        Duo & 0.95 & log-lin & log-lin & 0.5 & 0.1 & 41.74 & 32.97 & 28.57 & 26.05 & 24.62 & 23.13 & 21.81 & 20.16 \\
        Duo & 0.95 & log-lin & log-lin & 0.6 & 0.1 & 41.46 & 32.47 & 27.74 & 24.97 & 22.94 & 20.83 & 18.87 & 16.82 \\
        Duo & 0.95 & log-lin & log-lin & 0.9 & 0.3 & 41.10 & 30.55 & 24.54 & \textbf{20.50} & \textbf{17.97} & 18.04 & 22.14 & 35.43 \\
        Duo & 0.95 & log-lin & log-lin & 0.9 & 0.4 & 41.18 & 30.58 & 24.71 & 20.59 & 18.08 & 18.02 & 22.07 & 35.44 \\
        \lightmidrule
        Duo & 0.98 & log-lin & log-lin & 1.0 & 0.05 & 41.80 & 31.96 & 26.83 & 23.17 & 20.10 & 18.12 & 18.38 & 22.89 \\
        Duo & 0.98 & log-lin & log-lin & 1.0 & 0.1 & 41.81 & 31.98 & 26.85 & 23.17 & 20.12 & 18.15 & 18.40 & 22.94 \\
        \lightmidrule
        Duo & 0.99 & log-lin & log-lin & 1.0 & 0.05 & 41.99 & 32.63 & 27.74 & 24.67 & 22.13 & 19.72 & 17.93 & 18.25 \\
        Duo & 0.99 & log-lin & log-lin & 1.0 & 0.1 & 41.99 & 32.63 & 27.75 & 24.67 & 22.13 & 19.75 & 17.95 & 18.28 \\
        \midrule
        MDLM & 0.02 & cos & cos & 0.2 & 0.15 & 75.63 & 49.18 & 45.02 & 54.67 & 83.47 & 181.18 & 280.42 & 297.52 \\
        MDLM & 0.02 & cos & cos & 0.5 & 0.45 & 117.57 & 89.53 & 111.75 & 200.49 & 283.55 & 310.51 & 314.98 & 313.93 \\
        MDLM & 0.02 & cos & cos & 0.8 & 0.7 & 172.24 & 197.61 & 232.36 & 262.87 & 269.22 & 267.86 & 264.57 & 259.88 \\
        \lightmidrule
        MDLM & 0.5 & cos & cos & 0.2 & 0.1 & 73.13 & 46.10 & 38.47 & 39.71 & 48.49 & 75.27 & 173.09 & 266.36 \\
        MDLM & 0.5 & cos & cos & 0.6 & 0.4 & 134.11 & 114.88 & 144.25 & 217.74 & 268.03 & 274.83 & 270.53 & 256.03 \\
        MDLM & 0.5 & cos & cos & 0.9 & 0.65 & 151.90 & 131.04 & 147.67 & 177.75 & 198.33 & 201.97 & 193.77 & 184.76 \\
        \lightmidrule
        MDLM & 0.95 & cos & cos & 0.5 & 0.1 & 73.03 & 44.15 & 33.68 & 30.50 & 29.93 & 31.50 & 35.72 & 51.53 \\
        MDLM & 0.95 & cos & cos & 0.6 & 0.1 & 74.57 & 45.00 & 34.07 & 30.32 & 29.16 & 31.03 & 37.46 & 64.74 \\
        MDLM & 0.95 & cos & cos & 0.9 & 0.3 & 79.25 & 47.02 & 33.97 & 27.84 & 24.24 & 23.43 & 26.96 & 42.58 \\
        MDLM & 0.95 & cos & cos & 0.9 & 0.4 & 78.18 & 46.36 & 33.06 & 26.69 & \textbf{22.67} & 20.91 & 21.90 & 28.82 \\
        \lightmidrule
        MDLM & 0.98 & cos & cos & 1.0 & 0.05 & 74.05 & 43.85 & 32.32 & 26.69 & 23.22 & 20.81 & 19.41 & 20.20 \\
        MDLM & 0.98 & cos & cos & 1.0 & 0.1 & 74.05 & 43.85 & 32.31 & 26.65 & 23.17 & \textbf{20.76} & 19.26 & 19.98 \\
        \lightmidrule
        MDLM & 0.99 & cos & cos & 1.0 & 0.05 & 72.39 & \textbf{42.87} & 31.79 & 26.65 & 23.72 & 21.07 & 19.24 & 17.94 \\
        MDLM & 0.99 & cos & cos & 1.0 & 0.1 & \textbf{72.38} & \textbf{42.87} & \textbf{31.78} & \textbf{26.64} & 23.69 & 21.04 & \textbf{19.19} & \textbf{17.86} \\
        \lightmidrule
        MDLM & 0.02 & cos & log-lin & 0.2 & 0.15 & 217.56 & 118.08 & 68.02 & 51.76 & 55.02 & 78.21 & 171.72 & 275.25 \\
        MDLM & 0.02 & cos & log-lin & 0.5 & 0.45 & 247.31 & 157.61 & 124.97 & 162.92 & 256.01 & 298.74 & 305.05 & 310.28 \\
        MDLM & 0.02 & cos & log-lin & 0.8 & 0.7 & 298.96 & 294.71 & 298.95 & 312.49 & 317.03 & 312.60 & 308.42 & 302.37 \\
        \lightmidrule
        MDLM & 0.5 & cos & log-lin & 0.2 & 0.1 & 216.08 & 116.99 & 65.73 & 45.72 & 41.32 & 45.95 & 68.60 & 152.77 \\
        MDLM & 0.5 & cos & log-lin & 0.6 & 0.4 & 266.16 & 195.76 & 171.73 & 212.68 & 273.48 & 281.96 & 272.45 & 260.26 \\
        MDLM & 0.5 & cos & log-lin & 0.9 & 0.65 & 296.08 & 268.98 & 265.73 & 278.38 & 281.68 & 275.20 & 265.49 & 247.21 \\
        \lightmidrule
        MDLM & 0.95 & cos & log-lin & 0.5 & 0.1 & 216.90 & 117.05 & 64.76 & 43.50 & 36.06 & 34.84 & 37.06 & 44.92 \\
        MDLM & 0.95 & cos & log-lin & 0.6 & 0.1 & 218.58 & 118.21 & 65.33 & 44.32 & 37.14 & 36.09 & 39.42 & 55.34 \\
        MDLM & 0.95 & cos & log-lin & 0.9 & 0.3 & 225.19 & 124.03 & 67.82 & 44.06 & 35.20 & 33.97 & 42.48 & 80.34 \\
        MDLM & 0.95 & cos & log-lin & 0.9 & 0.4 & 223.84 & 123.04 & 67.19 & 43.29 & 33.85 & 32.00 & 37.23 & 63.89 \\
        \lightmidrule
        MDLM & 0.98 & cos & log-lin & 1.0 & 0.05 & 218.15 & 118.08 & 63.97 & 40.97 & 30.67 & 25.69 & 23.64 & 25.40 \\
        MDLM & 0.98 & cos & log-lin & 1.0 & 0.1 & 218.14 & 118.09 & 63.96 & 40.96 & 30.65 & 25.64 & 23.57 & 25.29 \\
        \lightmidrule
        MDLM & 0.99 & cos & log-lin & 1.0 & 0.05 & 215.41 & 116.02 & 63.30 & 40.42 & 30.43 & 25.37 & 22.45 & 20.77 \\
        MDLM & 0.99 & cos & log-lin & 1.0 & 0.1 & 215.40 & 116.03 & 63.27 & 40.41 & 30.43 & 25.35 & 22.42 & 20.71 \\
        \bottomrule
    \end{tabular}}
    \label{tab:cifar10-d3pm-psi-constant-kappa-fid}
\end{table}
\begin{table}[H]
    \centering
    \caption{Inception Score on CIFAR-10 with \psisamplers{}, where \psisamplers{} are activated for steps with $t \in [t_\text{off}, t_\text{on}]$, when $\kappa_t$ is kept constant (according to the $\kappa$ column, $1$ otherwise). We use the same checkpoints as in \tab{tab:cifar10-d3pm-ancestral-fid-all}. The CIFAR-10 curves in \fig{fig:is-cifar10} show the best Inception Score per number of steps.}
    \scriptsize
    {\rowcolors{4}{gray!6}{white}
    \begin{tabular}{l c l l c r *{8}{S[table-format=3.2]}}
        \toprule
        \multirow{2}{*}{Algo} &
        \multirow{2}{*}{$\kappa$} &
        \multirow{2}{*}{Train} &
        \multirow{2}{*}{Sample} &
        \multirow{2}{*}{$t_\text{on}$} &
        \multirow{2}{*}{$t_\text{off}$} &
        \multicolumn{8}{c}{Inception Score $\uparrow$} \\
        \cmidrule(l){7-14}
        & & & & & & \multicolumn{1}{c}{32} & \multicolumn{1}{c}{64} & \multicolumn{1}{c}{128} & \multicolumn{1}{c}{256} & \multicolumn{1}{c}{512} & \multicolumn{1}{c}{1024} & \multicolumn{1}{c}{2048} & \multicolumn{1}{c}{4096} \\
        \midrule
        Duo & 0.02 & log-lin & cos & 0.2 & 0.15 & 7.02 & 7.25 & 7.35 & 7.48 & 7.52 & 7.47 & 7.38 & 7.63 \\
        Duo & 0.02 & log-lin & cos & 0.5 & 0.45 & 7.09 & 7.44 & 7.64 & 8.04 & 8.32 & \textbf{8.59} & 8.57 & 7.94 \\
        Duo & 0.02 & log-lin & cos & 0.8 & 0.7 & 6.84 & 6.99 & 7.00 & 6.91 & 6.64 & 6.16 & 5.67 & 5.19 \\
        \lightmidrule
        Duo & 0.5 & log-lin & cos & 0.2 & 0.1 & 6.96 & 7.21 & 7.28 & 7.39 & 7.45 & 7.48 & 7.56 & 7.73 \\
        Duo & 0.5 & log-lin & cos & 0.6 & 0.4 & \textbf{7.31} & \textbf{7.73} & \textbf{8.14} & \textbf{8.51} & \textbf{8.46} & 7.91 & 6.40 & 5.39 \\
        Duo & 0.5 & log-lin & cos & 0.9 & 0.65 & 6.87 & 7.10 & 7.22 & 7.11 & 6.72 & 5.97 & 5.23 & 4.67 \\
        \lightmidrule
        Duo & 0.95 & log-lin & cos & 0.5 & 0.1 & 6.98 & 7.26 & 7.45 & 7.53 & 7.67 & 7.89 & 8.06 & 8.29 \\
        Duo & 0.95 & log-lin & cos & 0.6 & 0.1 & 7.00 & 7.31 & 7.45 & 7.70 & 7.91 & 8.17 & 8.34 & 8.46 \\
        Duo & 0.95 & log-lin & cos & 0.9 & 0.3 & 7.08 & 7.37 & 7.54 & 7.84 & 8.01 & 8.07 & 7.72 & 6.84 \\
        Duo & 0.95 & log-lin & cos & 0.9 & 0.4 & 7.04 & 7.31 & 7.50 & 7.78 & 7.92 & 8.08 & 7.78 & 6.89 \\
        \lightmidrule
        Duo & 0.98 & log-lin & cos & 1.0 & 0.05 & 7.00 & 7.25 & 7.40 & 7.55 & 7.73 & 7.97 & 8.10 & 7.91 \\
        Duo & 0.98 & log-lin & cos & 1.0 & 0.1 & 6.99 & 7.25 & 7.40 & 7.55 & 7.74 & 7.97 & 8.09 & 7.91 \\
        \lightmidrule
        Duo & 0.99 & log-lin & cos & 1.0 & 0.05 & 6.98 & 7.22 & 7.37 & 7.45 & 7.58 & 7.77 & 7.96 & 8.08 \\
        Duo & 0.99 & log-lin & cos & 1.0 & 0.1 & 6.98 & 7.22 & 7.37 & 7.46 & 7.58 & 7.77 & 7.96 & 8.10 \\
        \lightmidrule
        Duo & 0.02 & log-lin & log-lin & 0.2 & 0.15 & 6.82 & 7.09 & 7.22 & 7.30 & 7.36 & 7.44 & 7.46 & 7.43 \\
        Duo & 0.02 & log-lin & log-lin & 0.5 & 0.45 & 6.95 & 7.28 & 7.45 & 7.64 & 7.67 & 7.70 & 8.06 & 8.68 \\
        Duo & 0.02 & log-lin & log-lin & 0.8 & 0.7 & 7.00 & 7.54 & 8.02 & 8.18 & 7.89 & 6.46 & 5.03 & 4.55 \\
        \lightmidrule
        Duo & 0.5 & log-lin & log-lin & 0.2 & 0.1 & 6.81 & 7.04 & 7.20 & 7.26 & 7.29 & 7.36 & 7.47 & 7.50 \\
        Duo & 0.5 & log-lin & log-lin & 0.6 & 0.4 & 7.04 & 7.45 & 7.73 & 7.93 & 8.20 & 8.51 & \textbf{9.00} & \textbf{9.50} \\
        Duo & 0.5 & log-lin & log-lin & 0.9 & 0.65 & 7.05 & 7.61 & 7.97 & 7.74 & 6.45 & 4.46 & 3.77 & 4.07 \\
        \lightmidrule
        Duo & 0.95 & log-lin & log-lin & 0.5 & 0.1 & 6.80 & 7.10 & 7.25 & 7.31 & 7.35 & 7.43 & 7.55 & 7.63 \\
        Duo & 0.95 & log-lin & log-lin & 0.6 & 0.1 & 6.85 & 7.12 & 7.28 & 7.40 & 7.46 & 7.66 & 7.81 & 7.97 \\
        Duo & 0.95 & log-lin & log-lin & 0.9 & 0.3 & 6.89 & 7.27 & 7.58 & 7.78 & 8.10 & 8.22 & 8.20 & 7.67 \\
        Duo & 0.95 & log-lin & log-lin & 0.9 & 0.4 & 6.89 & 7.25 & 7.58 & 7.80 & 8.05 & 8.25 & 8.26 & 7.69 \\
        \lightmidrule
        Duo & 0.98 & log-lin & log-lin & 1.0 & 0.05 & 6.85 & 7.19 & 7.36 & 7.49 & 7.72 & 7.96 & 8.05 & 8.03 \\
        Duo & 0.98 & log-lin & log-lin & 1.0 & 0.1 & 6.85 & 7.20 & 7.38 & 7.49 & 7.72 & 7.96 & 8.04 & 8.02 \\
        \lightmidrule
        Duo & 0.99 & log-lin & log-lin & 1.0 & 0.05 & 6.81 & 7.13 & 7.32 & 7.45 & 7.61 & 7.71 & 7.98 & 8.12 \\
        Duo & 0.99 & log-lin & log-lin & 1.0 & 0.1 & 6.81 & 7.14 & 7.32 & 7.45 & 7.62 & 7.70 & 7.99 & 8.13 \\
        \midrule
        MDLM & 0.02 & cos & cos & 0.2 & 0.15 & 5.56 & 6.61 & 6.90 & 6.75 & 5.52 & 2.68 & 1.57 & 1.56 \\
        MDLM & 0.02 & cos & cos & 0.5 & 0.45 & 4.22 & 5.11 & 4.36 & 2.44 & 1.61 & 1.41 & 1.45 & 1.56 \\
        MDLM & 0.02 & cos & cos & 0.8 & 0.7 & 3.12 & 2.82 & 2.41 & 2.03 & 1.96 & 1.97 & 2.02 & 2.09 \\
        \lightmidrule
        MDLM & 0.5 & cos & cos & 0.2 & 0.1 & 5.63 & 6.63 & 7.00 & 7.09 & 6.90 & 5.68 & 2.78 & 1.73 \\
        MDLM & 0.5 & cos & cos & 0.6 & 0.4 & 3.83 & 4.32 & 3.55 & 2.35 & 1.85 & 2.14 & 2.51 & 2.99 \\
        MDLM & 0.5 & cos & cos & 0.9 & 0.65 & 3.62 & 4.18 & 3.95 & 3.47 & 3.18 & 3.15 & 3.37 & 3.75 \\
        \lightmidrule
        MDLM & 0.95 & cos & cos & 0.5 & 0.1 & \textbf{5.66} & 6.68 & 7.13 & 7.29 & 7.44 & 7.41 & 7.44 & 6.77 \\
        MDLM & 0.95 & cos & cos & 0.6 & 0.1 & 5.59 & 6.70 & 7.21 & 7.41 & 7.52 & 7.57 & 7.45 & 6.33 \\
        MDLM & 0.95 & cos & cos & 0.9 & 0.3 & 5.43 & 6.68 & \textbf{7.25} & 7.63 & 7.90 & \textbf{8.15} & 8.18 & 7.58 \\
        MDLM & 0.95 & cos & cos & 0.9 & 0.4 & 5.45 & 6.66 & \textbf{7.25} & \textbf{7.64} & \textbf{7.93} & 8.14 & \textbf{8.30} & 8.18 \\
        \lightmidrule
        MDLM & 0.98 & cos & cos & 1.0 & 0.05 & 5.57 & 6.71 & 7.22 & 7.45 & 7.71 & 7.93 & 8.14 & 8.30 \\
        MDLM & 0.98 & cos & cos & 1.0 & 0.1 & 5.57 & 6.72 & 7.22 & 7.46 & 7.72 & 7.93 & 8.15 & \textbf{8.31} \\
        \lightmidrule
        MDLM & 0.99 & cos & cos & 1.0 & 0.05 & 5.60 & \textbf{6.73} & 7.18 & 7.39 & 7.53 & 7.81 & 7.97 & 8.12 \\
        MDLM & 0.99 & cos & cos & 1.0 & 0.1 & 5.60 & \textbf{6.73} & 7.19 & 7.39 & 7.53 & 7.81 & 7.97 & 8.14 \\
        \lightmidrule
        MDLM & 0.02 & cos & log-lin & 0.2 & 0.15 & 2.63 & 4.59 & 5.86 & 6.46 & 6.45 & 5.59 & 2.78 & 1.67 \\
        MDLM & 0.02 & cos & log-lin & 0.5 & 0.45 & 2.21 & 3.56 & 4.08 & 3.06 & 1.78 & 1.43 & 1.38 & 1.36 \\
        MDLM & 0.02 & cos & log-lin & 0.8 & 0.7 & 1.65 & 1.63 & 1.55 & 1.43 & 1.42 & 1.60 & 1.80 & 1.96 \\
        \lightmidrule
        MDLM & 0.5 & cos & log-lin & 0.2 & 0.1 & 2.66 & 4.60 & 5.91 & 6.58 & 6.81 & 6.76 & 5.77 & 3.15 \\
        MDLM & 0.5 & cos & log-lin & 0.6 & 0.4 & 1.97 & 2.78 & 2.99 & 2.27 & 1.62 & 1.58 & 1.92 & 2.33 \\
        MDLM & 0.5 & cos & log-lin & 0.9 & 0.65 & 1.69 & 1.91 & 1.90 & 1.79 & 1.91 & 2.35 & 2.87 & 3.29 \\
        \lightmidrule
        MDLM & 0.95 & cos & log-lin & 0.5 & 0.1 & 2.65 & 4.60 & 5.95 & 6.64 & 6.93 & 7.03 & 7.07 & 6.78 \\
        MDLM & 0.95 & cos & log-lin & 0.6 & 0.1 & 2.62 & 4.55 & 5.94 & 6.66 & 6.98 & 7.16 & 7.10 & 6.47 \\
        MDLM & 0.95 & cos & log-lin & 0.9 & 0.3 & 2.51 & 4.45 & 5.93 & 6.84 & 7.35 & 7.61 & 7.31 & 5.83 \\
        MDLM & 0.95 & cos & log-lin & 0.9 & 0.4 & 2.54 & 4.46 & 5.94 & 6.85 & 7.40 & 7.69 & 7.59 & 6.60 \\
        \lightmidrule
        MDLM & 0.98 & cos & log-lin & 1.0 & 0.05 & 2.62 & 4.56 & 6.04 & 6.85 & 7.31 & 7.66 & 7.79 & 8.01 \\
        MDLM & 0.98 & cos & log-lin & 1.0 & 0.1 & 2.62 & 4.56 & 6.03 & 6.85 & 7.32 & 7.67 & 7.81 & 8.03 \\
        \lightmidrule
        MDLM & 0.99 & cos & log-lin & 1.0 & 0.05 & 2.68 & 4.64 & 6.02 & 6.84 & 7.21 & 7.47 & 7.70 & 7.93 \\
        MDLM & 0.99 & cos & log-lin & 1.0 & 0.1 & 2.68 & 4.64 & 6.02 & 6.84 & 7.21 & 7.47 & 7.70 & 7.93 \\
        \bottomrule
    \end{tabular}}
    \label{tab:cifar10-d3pm-psi-constant-kappa-inception-score}
\end{table}
\begin{table}[H]
    \centering
    \caption{FID on CIFAR-10 using \psisamplers{} whose $\kappa_t$ schedulers are equivalent to ReMDM. We use no nucleus sampling, no temperature scaling, and $\mathrm{cfg}=1$. As expected, with the log-linear scheduler, we reach a similar FID as when using the ReMDM codebase (\tab{tab:cifar10-d3pm-remdm-fid}). However, note that by using a log-linear scheduler, using a constant $\kappa_t = 0.99$, we reach a better FID than with the original ReMDM scheduler.}
    \label{tab:cifar10-d3pm-psi-remdm-equivalent}
    \scriptsize
    {\rowcolors{4}{gray!6}{white}
    \begin{tabular}{l l r * {8}{S[table-format=3.2]}}
        \toprule
        \multirow{2}{*}{Algo} &
        \multirow{2}{*}{Train} &
        \multirow{2}{*}{Sample} &
        \multicolumn{8}{c}{FID $\downarrow$} \\
        \cmidrule(l){4-11}
        & & & \multicolumn{1}{c}{32} & \multicolumn{1}{c}{64} & \multicolumn{1}{c}{128} & \multicolumn{1}{c}{256} & \multicolumn{1}{c}{512} & \multicolumn{1}{c}{1024} & \multicolumn{1}{c}{2048} & \multicolumn{1}{c}{4096} \\
        \midrule
        \multicolumn{9}{l}{\textit{Duo with the ReMDM rescale schedule}} \\
        Duo & log-lin & cos & \textbf{39.64} & \textbf{32.03} & \textbf{28.49} & \textbf{26.95} & \textbf{26.16} & \textbf{25.71} & 25.25 & \textbf{25.02} \\
        Duo & log-lin & log-lin & 42.27 & 33.58 & 29.49 & 27.36 & 26.33 & 25.86 & \textbf{25.07} & 25.21 \\
        \midrule
        \multicolumn{9}{l}{\textit{ReMDM Rescale ($\eta=0.01$)}} \\
        MDLM & cos & cos & \textbf{70.64} & \textbf{41.94} & \textbf{31.60} & \textbf{27.31} & \textbf{25.27} & \textbf{24.61} & \textbf{23.41} & \textbf{23.25} \\
        MDLM & cos & log-lin & 213.22 & 114.24 & 62.51 & 40.51 & 30.28 & 26.21 & 23.61 & 23.40 \\
        \midrule
        \multicolumn{9}{l}{\textit{ReMDM Cap ($\eta=0.005$)}} \\
        MDLM & cos & log-lin & 215.75 & 115.77 & 63.20 & 41.25 & 31.60 & 27.30 & 25.16 & 24.79 \\
        \midrule
        \multicolumn{9}{l}{\textit{ReMDM Loop ($t_\text{on} = 0.55, t_\text{off} = 0.05, \alpha_\text{on} = 0.9, \eta=0.01$)}} \\
        MDLM & cos & log-lin & 305.30 & 224.84 & 120.58 & 66.39 & 45.70 & 39.06 & 41.44 & 52.71 \\
        \bottomrule
    \end{tabular}}
\end{table}
%
%
% ----------------------------------------------------- OPENWEBTEXT -----------------------------------------------------

\begin{table}[H]
    \centering
    \caption{Generative Perplexity (Gen. PPL) and Unigram Entropy on OpenWebText \citep{Gokaslan2019OpenWeb} with ancestral sampling (no nucleus, no temperature scaling). We train using the log-linear noise scheduler, and sampling with the cosine scheduler is slightly better. We stick to the log-linear schedule for sampling in further experiments, to follow prior work, and since the cosine schedule only marginally reduces the Gen. PPL.}
    \tiny
    {\rowcolors{2}{white}{gray!6}
    \begin{tabular}{lccc|rrrrrrrr}
        \toprule
        \multirow{2}{*}{Algo} & 
        \multirow{2}{*}{Dist.} & 
        \multirow{2}{*}{p} & 
        \multirow{2}{*}{Sched.} &
        \multicolumn{8}{c}{Gen. PPL ($\downarrow$)} \\
        \cmidrule(l){5-12}
        & & & & 32 & 64 & 128 & 256 & 512 & 1024 & 2048 & 4096 \\
        \midrule
        Duo & \xmark & 1.0 & cos & 87.23 (5.54) & 79.94 (5.55) & 75.87 (5.53) & 73.95 (5.54) & 72.13 (5.54) & 71.41 (5.53) & 72.29 (5.53) & 70.77 (5.52) \\
        Duo & \xmark & 1.0 & log-lin & 96.76 (5.57) & 86.01 (5.56) & 79.97 (5.55) & 78.46 (5.53) & 76.93 (5.54) & 75.02 (5.53) & 75.65 (5.52) & 75.39 (5.52) \\
        \lightmidrule
        Duo & \xmark & 0.9 & cos & 42.42 (5.36) & 39.26 (5.37) & 37.62 (5.35) & 36.52 (5.35) & 35.21 (5.34) & 35.37 (5.34) & 35.39 (5.34) & 34.91 (5.33) \\
        Duo & \xmark & 0.9 & log-lin & 44.24 (5.40) & 40.08 (5.40) & 37.93 (5.39) & 36.66 (5.37) & 35.77 (5.37) & 34.79 (5.35) & 34.93 (5.35) & 34.75 (5.35) \\
        \lightmidrule
        Duo & \cmark & 1.0 & cos & 67.04 (5.47) & 61.09 (5.45) & 59.65 (5.42) & 57.76 (5.42) & 57.90 (5.42) & 56.81 (5.43) & 56.39 (5.41) & 57.32 (5.42) \\
        Duo & \cmark & 1.0 & log-lin & 68.35 (5.54) & 62.92 (5.54) & 59.82 (5.50) & 58.77 (5.46) & 58.32 (5.46) & 57.82 (5.45) & 55.39 (5.43) & 55.89 (5.42) \\
        \lightmidrule
        Duo & \cmark & 0.9 & cos & 34.20 (5.31) & 31.79 (5.29) & 31.09 (5.25) & 30.05 (5.25) & 29.82 (5.26) & 29.68 (5.27) & 29.52 (5.24) & 29.73 (5.23) \\
        Duo & \cmark & 0.9 & log-lin & 35.92 (5.41) & 32.98 (5.40) & 31.49 (5.36) & 30.32 (5.31) & 30.06 (5.29) & 30.00 (5.28) & 28.90 (5.25) & 29.19 (5.25) \\
        \midrule
        MDLM & \xmark & 1.0 & cos & 168.66 (5.68) & 131.55 (5.66) & 115.74 (5.64) & 111.72 (5.63) & 106.63 (5.63) & 104.56 (5.62) & 103.12 (5.62) & 104.73 (5.62) \\
        MDLM & \xmark & 1.0 & log-lin & 194.09 (5.74) & 141.67 (5.69) & 120.95 (5.67) & 111.85 (5.65) & 107.89 (5.64) & 105.64 (5.64) & 105.40 (5.63) & 105.03 (5.62) \\
        \lightmidrule
        MDLM & \xmark & 0.9 & cos & 58.33 (5.39) & 46.71 (5.36) & 40.66 (5.32) & 39.43 (5.33) & 37.64 (5.32) & 37.39 (5.33) & 36.98 (5.31) & 36.87 (5.31) \\
        MDLM & \xmark & 0.9 & log-lin & 70.34 (5.49) & 51.14 (5.43) & 43.60 (5.39) & 40.01 (5.37) & 39.02 (5.35) & 37.91 (5.34) & 37.59 (5.32) & 36.76 (5.31) \\
        \lightmidrule
        MDLM & \cmark & 1.0 & cos & 63.04 (5.45) & 52.72 (5.43) & 47.83 (5.41) & 45.94 (5.42) & 44.67 (5.41) & 44.60 (5.41) & 44.50 (5.41) & 44.42 (5.41) \\
        MDLM & \cmark & 1.0 & log-lin & 68.61 (5.48) & 55.26 (5.45) & 49.51 (5.44) & 46.13 (5.42) & 45.61 (5.42) & 44.87 (5.42) & 44.53 (5.41) & 44.38 (5.42) \\
        \lightmidrule
        MDLM & \cmark & 0.9 & cos & 31.47 (5.21) & 26.52 (5.19) & 24.14 (5.18) & 23.49 (5.17) & 22.93 (5.17) & 22.64 (5.17) & 22.38 (5.16) & 22.49 (5.17) \\
        MDLM & \cmark & 0.9 & log-lin & 34.85 (5.26) & 28.21 (5.23) & 25.27 (5.21) & 24.01 (5.19) & 23.25 (5.18) & 22.75 (5.17) & 22.73 (5.17) & 22.46 (5.16) \\
        \bottomrule
    \end{tabular}}
    \label{tab:openwebtext-ancestral-sampling}
\end{table}

\begin{table}[H]
    \centering
    \caption{Generative Perplexity (Gen. PPL) and Unigram Entropy on OpenWebText \citep{Gokaslan2019OpenWeb} with \psisamplers{} using $\kappa_t$ schedules matching ReMDM (log-linear step size) and \textbf{non-distilled models} (as in \tab{tab:openwebtext-ancestral-sampling}). We experiment with nucleus sampling, following \citet{wang2025remaskingdiscretediffusionmodels}. The rescale schedule is most effective to improve the Gen. PPL while retaining the unigram entropy. The lightblue rows are the ones plotted in \fig{fig:fig1-abstract} (left).}
    \tiny
    {\rowcolors{2}{white}{gray!6}
    \begin{tabular}{lcc|rrrrrrrr}
        \toprule
        \multirow{2}{*}{Algo} & 
        \multirow{2}{*}{Eta} & 
        \multirow{2}{*}{Nucleus P} &
        \multicolumn{8}{c}{Gen. PPL ($\downarrow$)} \\
        \cmidrule(l){4-11}
        & & & 32 & 64 & 128 & 256 & 512 & 1024 & 2048 & 4096 \\
        \midrule
        \multicolumn{11}{l}{\textit{Ancestral Sampling}} \\
        Duo & N.A & 1.0 & 96.76 (5.57) & 86.01 (5.56) & 79.97 (5.55) & 78.46 (5.53) & 76.93 (5.54) & 75.02 (5.53) & 75.65 (5.52) & 75.39 (5.52) \\
        Duo & N.A & 0.95 & 56.65 (5.49) & 50.78 (5.48) & 48.68 (5.48) & 47.26 (5.46) & 45.42 (5.45) & 45.11 (5.44) & 45.12 (5.44) & 44.84 (5.44) \\
        Duo & N.A & 0.9 & 44.24 (5.40) & 40.08 (5.40) & 37.93 (5.39) & 36.66 (5.37) & 35.77 (5.37) & 34.79 (5.35) & 34.93 (5.35) & 34.75 (5.35) \\
        \lightmidrule
        MDLM & N.A & 1.0 & 194.09 (5.74) & 141.67 (5.69) & 120.95 (5.67) & 111.85 (5.65) & 107.89 (5.64) & 105.64 (5.64) & 105.40 (5.63) & 105.03 (5.62) \\
        MDLM & N.A & 0.95 & 106.28 (5.61) & 77.06 (5.55) & 68.34 (5.53) & 63.19 (5.51) & 58.80 (5.49) & 56.94 (5.48) & 57.54 (5.47) & 56.44 (5.46) \\
        MDLM & N.A & 0.9 & 70.34 (5.49) & 51.14 (5.43) & 43.60 (5.39) & 40.01 (5.37) & 39.02 (5.35) & 37.91 (5.34) & 37.59 (5.32) & 36.76 (5.31) \\
        \midrule
        \multicolumn{11}{l}{\textit{Cap Schedule}} \\
        Duo & 0.005 & 1.0 & 88.78 (5.58) & 77.12 (5.57) & 72.05 (5.56) & 66.44 (5.54) & 61.63 (5.53) & 57.14 (5.51) & 52.49 (5.51) & 45.64 (5.45) \\
        Duo & 0.01 & 1.0 & 86.89 (5.58) & 75.23 (5.56) & 68.98 (5.55) & 63.66 (5.54) & 57.34 (5.52) & 52.06 (5.50) & 46.04 (5.46) & 39.48 (5.39) \\
        Duo & 0.005 & 0.95 & 55.56 (5.49) & 48.74 (5.47) & 44.93 (5.46) & 40.53 (5.43) & 36.26 (5.41) & 30.85 (5.37) & 25.66 (5.32) & 20.22 (5.22) \\
        Duo & 0.01 & 0.95 & 54.07 (5.48) & 46.27 (5.46) & 41.93 (5.45) & 36.60 (5.41) & 30.98 (5.37) & 25.53 (5.31) & 20.10 (5.23) & 15.19 (5.07) \\
        Duo & 0.005 & 0.9 & 44.06 (5.41) & 38.38 (5.39) & 34.84 (5.37) & 30.95 (5.33) & 27.37 (5.30) & 22.78 (5.24) & 18.66 (5.16) & 14.33 (5.03) \\
        Duo & 0.01 & 0.9 & 43.05 (5.40) & 36.75 (5.38) & 32.27 (5.35) & 27.83 (5.30) & 23.38 (5.26) & 18.74 (5.17) & 14.40 (5.06) & 10.88 (4.87) \\
        \lightmidrule
        MDLM & 0.005 & 1.0 & 195.83 (5.74) & 142.25 (5.70) & 121.99 (5.68) & 113.94 (5.67) & 110.75 (5.66) & 112.78 (5.67) & 119.61 (5.69) & 131.85 (5.71) \\
        MDLM & 0.01 & 1.0 & 198.02 (5.75) & 144.89 (5.70) & 125.25 (5.68) & 117.84 (5.68) & 116.62 (5.68) & 126.32 (5.71) & 143.96 (5.73) & 186.72 (5.76) \\
        MDLM & 0.005 & 0.95 & 106.40 (5.61) & 74.97 (5.54) & 63.15 (5.52) & 55.82 (5.49) & 50.31 (5.47) & 43.78 (5.44) & 37.04 (5.40) & 30.46 (5.34) \\
        MDLM & 0.01 & 0.95 & 105.45 (5.61) & 73.92 (5.54) & 61.41 (5.51) & 52.81 (5.48) & 46.03 (5.45) & 38.85 (5.42) & 31.30 (5.34) & 24.31 (5.23) \\
        MDLM & 0.005 & 0.9 & 69.20 (5.49) & 49.59 (5.42) & 41.08 (5.38) & 35.19 (5.34) & 31.49 (5.31) & 26.33 (5.26) & 21.16 (5.18) & 15.87 (5.04) \\
        MDLM & 0.01 & 0.9 & 68.57 (5.48) & 48.30 (5.42) & 38.80 (5.37) & 32.38 (5.32) & 27.66 (5.28) & 21.57 (5.18) & 16.26 (5.05) & 11.67 (4.79) \\
        \midrule
        \multicolumn{11}{l}{\textit{Rescale Schedule}} \\
        Duo & 0.01 & 1.0 & 89.63 (5.58) & 79.80 (5.57) & 76.11 (5.56) & 73.43 (5.55) & 70.66 (5.54) & 70.46 (5.53) & 69.20 (5.54) & 68.25 (5.53) \\
        Duo & 0.02 & 1.0 & 89.55 (5.58) & 79.44 (5.57) & 75.98 (5.56) & 72.99 (5.54) & 69.85 (5.54) & 68.39 (5.53) & 66.60 (5.53) & 63.70 (5.52) \\
        Duo & 0.01 & 0.95 & 56.68 (5.49) & 50.80 (5.48) & 48.38 (5.47) & 46.91 (5.46) & 45.24 (5.45) & 44.64 (5.44) & 44.11 (5.44) & 43.49 (5.43) \\
        Duo & 0.02 & 0.95 & 56.68 (5.49) & 50.66 (5.48) & 48.09 (5.47) & 46.19 (5.46) & 44.17 (5.44) & 42.71 (5.43) & 41.47 (5.43) & 38.06 (5.40) \\
        Duo & 0.01 & 0.9 & 45.03 (5.41) & 40.02 (5.40) & 38.17 (5.39) & 36.60 (5.36) & 35.25 (5.35) & 34.35 (5.34) & 34.27 (5.35) & 33.07 (5.33) \\
        Duo & 0.02 & 0.9 & 45.04 (5.41) & 40.00 (5.40) & 38.05 (5.39) & 36.15 (5.36) & 34.74 (5.35) & 33.13 (5.33) & 31.79 (5.32) & 29.08 (5.30) \\
        Duo & 0.03 & 0.9 & 44.87 (5.41) & 40.05 (5.40) & 37.61 (5.39) & 35.26 (5.36) & 33.35 (5.34) & 31.17 (5.32) & 28.90 (5.31) & 24.93 (5.26) \\
        Duo & 0.04 & 0.9 & 44.43 (5.41) & 39.67 (5.39) & 37.21 (5.38) & 34.75 (5.35) & 32.47 (5.34) & 29.30 (5.31) & 26.15 (5.28) & 22.05 (5.22) \\
        \rowcolor{blue!15}
        Duo & 0.05 & 0.9 & 44.52 (5.41) & 39.49 (5.40) & 36.41 (5.38) & 33.68 (5.35) & 31.06 (5.34) & 26.94 (5.28) & 23.61 (5.25) & 19.21 (5.17) \\
        \lightmidrule
        MDLM & 0.01 & 1.0 & 194.29 (5.74) & 141.40 (5.69) & 121.04 (5.67) & 112.95 (5.65) & 107.80 (5.64) & 105.58 (5.64) & 105.69 (5.63) & 105.64 (5.63) \\
        MDLM & 0.02 & 1.0 & 194.54 (5.74) & 140.81 (5.69) & 120.86 (5.67) & 112.64 (5.65) & 108.26 (5.64) & 105.65 (5.64) & 104.47 (5.63) & 105.61 (5.64) \\
        MDLM & 0.01 & 0.95 & 106.43 (5.61) & 76.89 (5.55) & 65.42 (5.52) & 61.07 (5.50) & 58.77 (5.49) & 56.34 (5.47) & 56.29 (5.47) & 54.42 (5.45) \\
        MDLM & 0.02 & 0.95 & 105.92 (5.60) & 76.23 (5.55) & 65.43 (5.52) & 60.80 (5.50) & 57.32 (5.49) & 54.94 (5.47) & 53.92 (5.46) & 50.57 (5.45) \\
        MDLM & 0.01 & 0.9 & 70.45 (5.49) & 51.33 (5.43) & 43.59 (5.39) & 40.14 (5.36) & 38.68 (5.35) & 37.64 (5.34) & 36.48 (5.32) & 35.10 (5.31) \\
        MDLM & 0.02 & 0.9 & 70.31 (5.49) & 51.06 (5.43) & 43.51 (5.39) & 39.61 (5.36) & 37.88 (5.35) & 36.28 (5.33) & 34.53 (5.31) & 31.62 (5.29) \\
        MDLM & 0.03 & 0.9 & 69.89 (5.49) & 50.76 (5.42) & 43.23 (5.39) & 38.86 (5.36) & 36.77 (5.34) & 34.62 (5.32) & 31.44 (5.29) & 27.19 (5.25) \\
        MDLM & 0.04 & 0.9 & 69.54 (5.49) & 50.30 (5.42) & 42.84 (5.39) & 38.02 (5.35) & 35.73 (5.33) & 32.44 (5.31) & 28.55 (5.27) & 23.72 (5.21) \\
        \rowcolor{blue!15}
        MDLM & 0.05 & 0.9 & 69.44 (5.48) & 50.15 (5.42) & 42.39 (5.38) & 37.27 (5.35) & 34.10 (5.33) & 30.29 (5.30) & 26.03 (5.25) & 20.85 (5.16) \\
        \midrule
        \multicolumn{11}{l}{\textit{Loop Schedule}} \\
        Duo & 0.01 & 1.0 & 108.15 (5.58) & 83.10 (5.58) & 71.16 (5.56) & 66.15 (5.55) & 60.49 (5.55) & 56.35 (5.53) & 53.06 (5.51) & 48.93 (5.48) \\
        Duo & 0.02 & 1.0 & 103.48 (5.58) & 79.75 (5.58) & 67.99 (5.56) & 63.05 (5.55) & 56.92 (5.54) & 52.69 (5.51) & 48.63 (5.47) & 43.28 (5.37) \\
        Duo & 0.01 & 0.95 & 65.29 (5.49) & 51.36 (5.48) & 43.27 (5.46) & 37.64 (5.43) & 32.04 (5.40) & 26.97 (5.35) & 22.94 (5.30) & 18.40 (5.20) \\
        Duo & 0.02 & 0.95 & 61.61 (5.48) & 47.46 (5.47) & 38.78 (5.44) & 32.69 (5.40) & 27.26 (5.36) & 22.35 (5.29) & 18.43 (5.22) & 14.31 (5.06) \\
        Duo & 0.01 & 0.9 & 52.12 (5.40) & 40.27 (5.39) & 33.71 (5.37) & 28.73 (5.33) & 24.47 (5.29) & 20.32 (5.23) & 17.01 (5.16) & 13.61 (5.05) \\
        Duo & 0.02 & 0.9 & 49.08 (5.40) & 37.00 (5.38) & 30.08 (5.34) & 24.88 (5.29) & 20.59 (5.24) & 16.69 (5.16) & 13.61 (5.06) & 10.77 (4.92) \\
        \lightmidrule
        MDLM & 0.01 & 1.0 & 340.32 (5.81) & 192.48 (5.74) & 140.70 (5.70) & 127.32 (5.70) & 119.34 (5.69) & 127.63 (5.70) & 149.13 (5.73) & 198.48 (5.77) \\
        MDLM & 0.02 & 1.0 & 338.82 (5.82) & 193.71 (5.75) & 144.92 (5.72) & 140.73 (5.72) & 136.30 (5.71) & 162.47 (5.75) & 246.89 (5.81) & 354.65 (5.78) \\
        MDLM & 0.01 & 0.95 & 182.65 (5.67) & 101.56 (5.61) & 71.76 (5.56) & 58.43 (5.52) & 51.33 (5.50) & 45.27 (5.47) & 39.08 (5.43) & 33.48 (5.38) \\
        MDLM & 0.02 & 0.95 & 177.31 (5.67) & 97.61 (5.61) & 68.49 (5.55) & 55.21 (5.51) & 47.71 (5.49) & 41.64 (5.45) & 34.91 (5.40) & 29.63 (5.33) \\
        MDLM & 0.01 & 0.9 & 117.28 (5.55) & 65.24 (5.48) & 46.91 (5.43) & 37.62 (5.38) & 31.93 (5.34) & 27.80 (5.31) & 23.38 (5.25) & 19.78 (5.20) \\
        MDLM & 0.02 & 0.9 & 112.21 (5.55) & 61.93 (5.48) & 43.89 (5.42) & 34.69 (5.37) & 28.99 (5.33) & 24.58 (5.29) & 20.09 (5.20) & 16.68 (5.13) \\
        \bottomrule
    \end{tabular}}
    \label{tab:openwebtext-remdm-non-distilled}
\end{table}

\begin{table}[H]
    \centering
    \caption{
        Generative Perplexity (Gen. PPL) and Unigram Entropy on OpenWebText \citep{Gokaslan2019OpenWeb} with \psisamplers{} using $\kappa_t$ schedules matching ReMDM (log-linear step size) and \textbf{distilled models} (as in \tab{tab:openwebtext-ancestral-sampling}). We experiment with nucleus sampling, following \citet{wang2025remaskingdiscretediffusionmodels}.}
    \tiny
    {\rowcolors{2}{white}{gray!6}
    \begin{tabular}{lcc|rrrrrrrr}
        \toprule
        \multirow{2}{*}{Algo} & 
        \multirow{2}{*}{Eta} & 
        \multirow{2}{*}{Nucleus P} &
        \multicolumn{8}{c}{Gen. PPLL ($\downarrow$)} \\
        \cmidrule(l){4-11}
        & & & 32 & 64 & 128 & 256 & 512 & 1024 & 2048 & 4096 \\
        \midrule
        \multicolumn{11}{l}{\textit{Ancestral Sampling}} \\
        Duo & N.A & 1.0 & 68.35 (5.54) & 62.92 (5.54) & 59.82 (5.50) & 58.77 (5.46) & 58.32 (5.46) & 57.82 (5.45) & 55.39 (5.43) & 55.89 (5.42) \\
        Duo & N.A & 0.95 & 44.94 (5.47) & 41.78 (5.46) & 40.32 (5.43) & 38.93 (5.39) & 38.69 (5.37) & 38.45 (5.36) & 36.92 (5.33) & 37.26 (5.33) \\
        Duo & N.A & 0.9 & 35.92 (5.41) & 32.98 (5.40) & 31.49 (5.36) & 30.32 (5.31) & 30.06 (5.29) & 30.00 (5.28) & 28.90 (5.25) & 29.19 (5.25)\\
        \lightmidrule
        MDLM & N.A & 1.0 & 68.61 (5.48) & 55.26 (5.45) & 49.51 (5.44) & 46.13 (5.42) & 45.61 (5.42) & 44.87 (5.42) & 44.53 (5.41) & 44.38 (5.42) \\
        MDLM & N.A & 0.95 & 46.07 (5.37) & 36.55 (5.33) & 32.91 (5.31) & 30.96 (5.30) & 30.26 (5.29) & 29.73 (5.29) & 29.54 (5.28) & 29.53 (5.28) \\
        MDLM & N.A & 0.9 & 34.85 (5.26) & 28.21 (5.23) & 25.27 (5.21) & 24.31 (5.19) & 23.25 (5.18) & 22.75 (5.17) & 22.73 (5.17) & 22.46 (5.16) \\
        \midrule
        \multicolumn{11}{l}{\textit{Cap Schedule}} \\
        Duo & 0.005 & 1.0 & 66.13 (5.54) & 58.49 (5.52) & 53.61 (5.48) & 47.85 (5.42) & 41.59 (5.39) & 34.05 (5.34) & 25.67 (5.22) & 19.25 (5.11) \\
        Duo & 0.01 & 1.0 & 64.22 (5.53) & 55.84 (5.51) & 49.90 (5.48) & 40.95 (5.39) & 33.90 (5.34) & 26.29 (5.24) & 19.34 (5.11) & 14.31 (4.96) \\
        Duo & 0.005 & 0.95 & 43.68 (5.47) & 38.77 (5.45) & 35.55 (5.40) & 31.36 (5.33) & 26.74 (5.28) & 21.84 (5.22) & 16.22 (5.08) & 12.00 (4.94) \\
        Duo & 0.01 & 0.95 & 42.34 (5.46) & 37.14 (5.44) & 32.39 (5.38) & 27.25 (5.30) & 21.84 (5.22) & 16.74 (5.10) & 11.70 (4.92) & 8.68 (4.72) \\
        Duo & 0.005 & 0.9 & 34.80 (5.40) & 30.95 (5.38) & 28.15 (5.34) & 24.47 (5.26) & 21.25 (5.18) & 17.02 (5.12) & 12.86 (4.99) & 9.48 (4.81) \\
        Duo & 0.01 & 0.9 & 33.91 (5.40) & 29.27 (5.37) & 25.28 (5.31) & 21.40 (5.21) & 17.36 (5.13) & 13.22 (5.00) & 9.55 (4.82) & 6.92 (4.56) \\
        \lightmidrule
        MDLM & 0.005 & 1.0 & 67.27 (5.48) & 52.34 (5.45) & 44.38 (5.42) & 38.14 (5.40) & 32.35 (5.37) & 26.37 (5.34) & 20.64 (5.27) & 15.80 (5.19) \\
        MDLM & 0.01 & 1.0 & 65.29 (5.47) & 49.78 (5.44) & 41.29 (5.40) & 33.39 (5.38) & 27.16 (5.34) & 21.04 (5.28) & 16.13 (5.19) & 12.16 (5.08) \\
        MDLM & 0.005 & 0.95 & 44.71 (5.36) & 34.56 (5.32) & 29.42 (5.30) & 25.28 (5.27) & 21.55 (5.23) & 17.39 (5.18) & 13.63 (5.09) & 10.47 (4.98) \\
        MDLM & 0.01 & 0.95 & 43.20 (5.36) & 32.84 (5.32) & 26.90 (5.29) & 22.19 (5.24) & 17.80 (5.19) & 13.93 (5.11) & 10.61 (4.98) & 7.68 (4.76) \\
        MDLM & 0.005 & 0.9 & 33.81 (5.26) & 26.71 (5.22) & 22.81 (5.19) & 19.65 (5.16) & 16.67 (5.11) & 13.79 (5.06) & 10.74 (4.94) & 8.10 (4.78) \\
        MDLM & 0.01 & 0.9 & 32.94 (5.25) & 25.51 (5.22) & 20.89 (5.18) & 17.19 (5.13) & 13.91 (5.05) & 10.91 (4.95) & 8.15 (4.78) & 5.93 (4.54) \\
        \midrule
        \multicolumn{11}{l}{\textit{Rescale Schedule}} \\
        Duo & 0.01 & 1.0 & 68.33 (5.54) & 62.77 (5.53) & 59.65 (5.50) & 57.89 (5.46) & 57.43 (5.45) & 56.18 (5.44) & 53.13 (5.42) & 51.93 (5.41) \\
        Duo & 0.02 & 1.0 & 68.18 (5.54) & 62.24 (5.53) & 59.07 (5.50) & 56.96 (5.46) & 55.73 (5.44) & 53.31 (5.43) & 48.20 (5.40) & 44.51 (5.38) \\
        Duo & 0.01 & 0.95 & 45.04 (5.47) & 41.74 (5.46) & 39.99 (5.43) & 38.80 (5.38) & 38.10 (5.37) & 37.51 (5.36) & 35.43 (5.33) & 34.71 (5.32) \\
        Duo & 0.02 & 0.95 & 44.89 (5.47) & 41.33 (5.46) & 39.81 (5.43) & 38.09 (5.38) & 36.79 (5.36) & 35.47 (5.35) & 31.97 (5.31) & 29.25 (5.28) \\
        Duo & 0.01 & 0.9 & 35.91 (5.41) & 33.05 (5.40) & 31.55 (5.36) & 30.39 (5.31) & 29.94 (5.29) & 29.70 (5.28) & 27.73 (5.25) & 27.43 (5.24) \\
        Duo & 0.02 & 0.9 & 35.81 (5.41) & 32.77 (5.40) & 31.17 (5.36) & 29.70 (5.30) & 28.70 (5.28) & 27.70 (5.26) & 25.31 (5.22) & 22.83 (5.19) \\
        \lightmidrule
        MDLM & 0.01 & 1.0 & 68.66 (5.48) & 55.16 (5.45) & 49.71 (5.43) & 45.88 (5.42) & 45.11 (5.42) & 43.79 (5.41) & 42.55 (5.40) & 40.90 (5.40) \\
        MDLM & 0.02 & 1.0 & 68.73 (5.48) & 54.85 (5.45) & 48.12 (5.43) & 45.35 (5.42) & 44.10 (5.42) & 41.48 (5.41) & 38.76 (5.39) & 34.66 (5.38) \\
        MDLM & 0.01 & 0.95 & 46.01 (5.37) & 36.58 (5.33) & 32.80 (5.31) & 30.65 (5.30) & 29.92 (5.29) & 29.18 (5.28) & 28.34 (5.28) & 27.38 (5.27) \\
        MDLM & 0.02 & 0.95 & 45.92 (5.37) & 36.45 (5.33) & 32.49 (5.31) & 30.25 (5.29) & 29.01 (5.28) & 27.68 (5.28) & 25.75 (5.26) & 22.95 (5.24) \\
        MDLM & 0.01 & 0.9 & 34.83 (5.26) & 28.15 (5.23) & 25.24 (5.21) & 23.73 (5.19) & 23.03 (5.18) & 22.36 (5.17) & 21.75 (5.17) & 20.93 (5.15) \\
        MDLM & 0.02 & 0.9 & 34.83 (5.26) & 28.17 (5.23) & 24.97 (5.21) & 23.34 (5.19) & 22.34 (5.17) & 21.33 (5.17) & 19.88 (5.15) & 17.75 (5.12) \\
        \midrule
        \multicolumn{11}{l}{\textit{Loop Schedule}} \\
        Duo & 0.01 & 1.0 & 80.39 (5.55) & 61.64 (5.54) & 52.51 (5.52) & 47.30 (5.48) & 40.27 (5.44) & 34.27 (5.40) & 27.28 (5.32) & 21.97 (5.26) \\
        Duo & 0.02 & 1.0 & 75.97 (5.55) & 57.36 (5.53) & 47.47 (5.52) & 41.33 (5.47) & 34.18 (5.41) & 28.72 (5.36) & 22.16 (5.26) & 17.67 (5.18) \\
        Duo & 0.01 & 0.95 & 51.76 (5.48) & 40.91 (5.47) & 34.68 (5.44) & 30.83 (5.39) & 25.86 (5.34) & 21.31 (5.27) & 17.15 (5.18) & 13.69 (5.10) \\
        Duo & 0.02 & 0.95 & 48.78 (5.48) & 37.61 (5.46) & 30.95 (5.43) & 26.55 (5.36) & 21.60 (5.30) & 17.64 (5.22) & 13.84 (5.11) & 11.15 (5.02) \\
        Duo & 0.01 & 0.9 & 41.15 (5.42) & 32.51 (5.40) & 27.96 (5.38) & 24.49 (5.32) & 20.52 (5.25) & 17.17 (5.19) & 13.90 (5.10) & 11.44 (5.02) \\
        Duo & 0.02 & 0.9 & 38.73 (5.42) & 30.04 (5.40) & 24.99 (5.37) & 21.24 (5.29) & 17.40 (5.21) & 14.25 (5.13) & 11.51 (5.02) & 9.51 (4.94) \\
        \lightmidrule
        MDLM & 0.01 & 1.0 & 99.76 (5.51) & 62.76 (5.48) & 47.50 (5.45) & 39.07 (5.43) & 32.85 (5.41) & 28.01 (5.38) & 23.18 (5.34) & 19.32 (5.29) \\
        MDLM & 0.02 & 1.0 & 93.99 (5.51) & 58.00 (5.48) & 43.00 (5.45) & 33.84 (5.42) & 28.60 (5.39) & 24.13 (5.36) & 19.81 (5.30) & 16.32 (5.24) \\
        MDLM & 0.01 & 0.95 & 65.09 (5.40) & 41.85 (5.37) & 31.76 (5.33) & 26.11 (5.30) & 22.21 (5.28) & 19.19 (5.24) & 16.12 (5.20) & 13.59 (5.15) \\
        MDLM & 0.02 & 0.95 & 61.24 (5.40) & 38.68 (5.36) & 28.92 (5.33) & 23.21 (5.29) & 19.45 (5.26) & 16.60 (5.21) & 13.84 (5.16) & 11.73 (5.09) \\
        MDLM & 0.01 & 0.9 & 48.86 (5.29) & 32.03 (5.26) & 24.51 (5.23) & 20.56 (5.20) & 17.79 (5.18) & 15.42 (5.14) & 13.19 (5.09) & 11.29 (5.04) \\
        MDLM & 0.02 & 0.9 & 46.12 (5.29) & 29.77 (5.27) & 22.52 (5.22) & 18.46 (5.19) & 15.86 (5.16) & 13.57 (5.11) & 11.54 (5.05) & 9.85 (4.98) \\
        \bottomrule
    \end{tabular}}
    \label{tab:openwebtext-remdm-distilled}
\end{table}

\end{document}